\theoremstyle{definition}
\newtheorem{theorem}{Theorem}[section]
\newtheorem{proposition}[theorem]{Proposition}
\newtheorem{lemma}[theorem]{Lemma}
\newtheorem{corollary}[theorem]{Corollary}
\newtheorem{definition}[theorem]{Definition}
\newtheorem{example}[theorem]{Example} 
\theoremstyle{remark}
\long\def\comment#1{} 
\newcommand{\diag}{\operatorname{diag}}
\newcommand{\xmath}[1] {\ensuremath{#1}\xspace}
\newcommand{\blmath}[1] {\xmath{\bm{#1}}}
\newcommand{\W}{\blmath{W}}
\newcommand{\x}{\blmath{x}}
\newcommand{\y}{\blmath{y}}
\newcommand{\norm}[1] {\xmath{\left\| #1 \right\|}}
\newcommand{\Ab}{{\blmath A}}
\newcommand{\Hb}{{\blmath H}}
\newcommand{\Wb}{{\blmath W}}
\newcommand{\Xb}{{\blmath X}}
\newcommand{\eb}{{\blmath e}}
\newcommand{\jb}{{\blmath j}}
\newcommand{\ob}{{\blmath o}}
\newcommand{\qb}{{\blmath q}}
\newcommand{\ub}{{\blmath u}}
\newcommand{\vb}{{\blmath v}}
\newcommand{\wb}{{\blmath w}}
\newcommand{\xb}{{\blmath x}}
\newcommand{\yb}{{\blmath y}}
\newcommand{\Wc}{\mathcal{W}}
\newcommand{\Rd}{{\mathbb R}}
\newcommand{\deltab}{{\boldsymbol{\delta}}}
\newcommand{\zerob}{\mathbf{0}}
\newcommand{\beq}{\begin{equation}}
\newcommand{\eeq}{\end{equation}}
\newcommand{\beqa}{\begin{eqnarray}}
\newcommand{\eeqa}{\end{eqnarray}}
\newcommand{\indicator}[1]{\mathbbm{1}_{\{#1\}}}
\renewcommand{\x}[1]{\xb^{(#1)}}
\renewcommand{\y}[1]{\yb^{(#1)}}
\renewcommand{\W}[1]{\Wb^{(#1)}} 
\newcommand{\argmin}{\mathop{\mathrm{argmin}}\limits}
\newcommand{\bfx}{{\xb}}
\newcommand{\p}{\partial}
\newcommand{\mm}{\mathbbm{1}}
\renewcommand{\W}[1]{\Wb^{(#1)}}
\newcommand{\WT}[1]{\Wb^{(#1)T}}
\renewcommand{\o}[1]{\ob^{(#1)}}
\renewcommand{\d}[1]{\deltab^{(#1)}}
\renewcommand{\x}[1]{\xb^{(#1)}}
\newcommand{\xT}[1]{\xb^{(#1)\,T}}
\renewcommand{\y}[1]{\yb^{(#1)}}
\newcommand{\f}[1]{f^{(#1)}}
\renewcommand{\cite}{\citep} 
\title{Support Vectors and Gradient Dynamics of Single-Neuron ReLU Networks}
\author{%
    Sangmin Lee\thanks{Equal contribution} \\
    Department of Mathematical Sciences \\
    KAIST\\
    \texttt{leeleesang@kaist.ac.kr}
    \And
    Byeongsu Sim$^*$ \\
    Department of Mathematical Sciences \\
    KAIST \\
    \texttt{byeongsu.s@kaist.ac.kr} \\
    \AND
    Jong Chul Ye \\
    Kim Jaechul Graduate School of AI \\
    KAIST \\
    \texttt{jong.ye@kaist.ac.kr} \\
}
\begin{document}

\maketitle

\begin{abstract}
    Understanding implicit bias of gradient descent for generalization capability of ReLU networks has been an important research topic in machine learning research. 
	Unfortunately, even for a {single ReLU neuron} trained with the square loss, it was recently shown impossible to characterize the implicit regularization in terms of a norm of model parameters \cite{vardi2021implicit}.
    In order to close the  gap toward understanding intriguing generalization behavior of ReLU networks, here we examine the gradient flow dynamics in the parameter space when training single-neuron ReLU networks. 
    Specifically, we discover an implicit bias in terms of support vectors, which plays a key role in why and how ReLU networks generalize well.
    Moreover, we analyze gradient flows with respect to the magnitude of the norm of initialization, and show that the norm of the learned weight strictly increases through the gradient flow.
    Lastly, we prove the global convergence of single ReLU neuron for $d=2$ case.
\end{abstract}

\section{Introduction}
Recently, many researchers  have investigated the intriguing generalization capability of ReLU networks even without explicit regularization \cite{goodfellow2016deep, allen2018learning, alom2019state, lee2019wide, calin2020deep}. In particular, the number of trainable parameters in deep neural networks is often greater than the training data set, this situation being notorious for overfitting from the point of view of classical statistical learning theory. However, empirical results have shown that a deep neural network generalizes well in the test phase, resulting in high performance for the unseen data \cite{jiang2019fantastic}.
 
This apparent contradiction has raised questions about the mathematical
foundations of machine learning and their relevance to practitioners.
 A number of theoretical papers have been published to understand the generalization capability of deep learning
models \cite{neyshabur2015norm, bartlett2017spectrally, nagarajan2018deterministic, arora2018stronger, golowich2018size, neyshabur2018pac, wei2019data}. 
%
In particular, the recent discovery of ``double descent'' \cite{belkin2019reconciling, belkin2020two} extends the classical U-shaped bias-variance trade-off curve  by showing that increasing the model capacity beyond the interpolating regime leads to improved performance in the test phase.
It was further suggested that the implicit bias by optimization algorithms may lead to simpler solutions that improve generalization in the over-parameterized regime \cite{gunasekar2018characterizing}.

Accordingly, many machine learning researchers have studied implicit bias and gradient flow.
For linear networks, \citet{gunasekar2017implicit} showed that gradient flows with infinitesimally small norm converge to the minimum nuclear norm.
 \citet{azulay2021implicit} studied the initialization scale of gradient flows and obtained  closed-form implicit regularizers for several types of networks. \citet{cornacchia2021regularization} 
 showed that noise labels guide the network to a sparse solution and reduce test error. 
 The focus of these works is to find a regularization function of the model parameters, so that if we apply gradient descent on the average loss, then it converges in some
 sense to a global optimum that minimizes the regularization term.
	Unfortunately, when we consider problems beyond simple linear classification and regression, the situation gets more complicated.
 For example, \citet{vardi2021implicit} showed that even for a simple single-neuron ReLU network,
 the implicit regularization cannot be expressed by any explicit function of the norm of model parameters.

In order to address the discrepancy between the theory and the empirical generalization power of ReLU networks, 
 we are interested in investigating the gradient flow dynamics when training single-neuron ReLU networks. 
{While} most of the theoretical analysis of ReLU networks {focus} on the input space partition \cite{hanin2019surprisingly, hanin2019complexity, park2021unsupervised}, here we are particularly interested in the analysis in the {\em parameter space} since it has provided {additional insight}.
Specifically, \citet{xu2021traversing} studied the partitioned parameter space by ReLU networks in terms of polytopes and suggested a traversing algorithm to visit all polytopes sequentially.
Similarly, \citet{lacotte2020all} consider the partitioned parameter space in two-layer ReLU networks. They provide an exact characterization of the set of all global optima of the non-convex loss landscape, and find explicit paths for non-increasing loss under $L^2$ regularization term.
Unlike the aforementioned works that mostly focus on the expressiveness and optimization landscape in terms of parameter space, 
the main focus of this paper is extending these ideas to understand the implicit bias and the {dynamics} of gradient flows of single-neuron ReLU networks.
As such, our  findings and contributions of this work can be summarized as follows:
\begin{itemize}\setlength{\itemsep}{-1mm}
	\item We discover an implicit bias in terms of {\em support vectors} for single-neuron ReLU networks that 
    play a key role in why and how ReLU networks can generalize well.
    We further prove that the global minimum of single-neuron ReLU networks has smaller losses than linear ones.
	
	\item {Under proper initialization, we showed that} gradient flow avoids bad local minimum. This explains why a ReLU network is trained well, although there are many spurious minima.
	
	\item We provide simple proofs for norm-increasing property  of single-neuron linear networks, and extend it to ReLU networks. More precisely, for a gradient flow of ReLU networks initialized with infinitesimally small norm, under some conditions, the norm of the gradient flow is shown to strictly increase until it converges.
	
	\item
	{
	Finally, we show the global convergence of gradient flow for special case $d=2$. Specifically, for a gradient flow initialized with infinitesimally small norm {and positive gradient}, {we show that} it converges to a global minimum with increasing norm of the weights. 
	}
\end{itemize}
Most of the proofs can be found in Appendix. In addition,
main theoretical findings of linear single-neuron networks are included in Appendix \ref{sec: linear}, which are the basis of the main analysis for ReLU networks.

\section{Preliminaries}
\paragraph{Notation.}
Throughout this article, boldface uppercase letters, boldface lowercase letters and normal lowercase letters denote matrices, vectors and scalars, respectively. $\Ab^T$ and $\vb^T$ denote the transpose of a matrix $\Ab$ and a vector $\vb$. We use $\norm{\cdot}$ to Euclidean norm of a vector. $\lambda_{\max}(\Ab)$ and $\lambda_{\min}^+(\Ab)$ denote the largest and the smallest positive eigenvalues of a matrix $\Ab$, respectively. 
ReLU activation function is denoted by $[x]_+:=\max\{0, x \}$. For two vectors $\vb_1, \vb_2 \in\Rd^d$, inequality $\vb_1 \ge \vb_2$ means $v_{1k} \ge v_{2k}$ for all $k=1,2,...,d$. We denote the indicator function by
$$
\indicator{c} := \begin{cases}
	1, \qquad \text{if }c\text{ is True}, \\ 0, \qquad \text{if }c\text{ is False.}
\end{cases}
$$

\paragraph{Gradient flows of single-neuron ReLU networks.}
The single-neuron ReLU network training {under} the square loss  \cite{vardi2021implicit} is given by
\begin{align}\label{eq: cost ReLU}
	\min_\wb L(\wb), \qquad
	L(\wb) :=
	 \frac12 \sum_{i=1}^n  \left( [\wb^T\xb_i]_+ - y_i \right)^2
\end{align}
where 
$\wb \in \Rd^d$ is the model parameter that represents the neuronal weight.
Here, a network with a bias term $b$  can be reduced to a network without bias  by augmenting one dimension to input $\xb$ with a fixed scalar $1$,
i.e., $ \wb^T\xb + b=\begin{bmatrix} \wb^T & b \end{bmatrix} \begin{bmatrix} \xb \\ 1 \end{bmatrix} =\tilde\wb^T\tilde\xb.$
Thus, we only consider networks without bias. {Although we mainly focus on the single-neuron network,
the result can be directly extended to single-layer multi-neuron case, as described in Appendix \ref{sec: multineuron}.}

{
Since ReLU nonlinearity $\sigma(x)=[x]_+$  is not differentiable at $x=0$, we can obtain
a gradient flow using a subgradient at $x=0$  in the subdifferential $[0,1]$ \cite{vardi2021learning}.  By denoting the gradient of ReLU function by $\indicator{t>0}$ including the subgradient at $t=0$,
}
the gradient of $L$ is then given by
\begin{align*}
	\nabla L(\wb) &= \sum_{i=1}^n  \mm_{\{ \wb^T\bfx_i > 0 \}}(\wb^T\xb_i - y_i)\xb_i \notag \\ 
	&= \Hb(\wb)\wb-\qb(\wb) 
\end{align*}
where {$\Hb(\wb)$ and $\qb(\wb)$ are defined by}
\begin{align} \label{eq: Hq}
	\Hb(\wb) := \sum\limits_{i=1}^n  \mm_{\{ \wb^T\bfx_i > 0 \}}\xb_i\xb_i^T, 
	\qquad
	\qb(\wb) :=\sum\limits_{i=1}^n \mm_{\{ \wb^T\bfx_i > 0 \}}y_i\xb_i.
\end{align}
Then, the goal of this paper is to investigate the implicit bias of the gradient flow given by
\begin{align} \label{eq: ReLU gradient flow}
	\frac{d\wb}{dt} &= -\nabla L(\wb) 
	= -\Hb(\wb)\wb + \qb(\wb), \qquad \wb(0)=\wb_0.
\end{align}
Proposition \ref{prop: well-definedness} shows that for a given initialization point, the gradient flow is well-defined and uniquely determined.

\begin{figure*}[t!]
	\centering
	\begin{subfigure}[b]{0.27\textwidth}
        \centering
        \includegraphics[width=\textwidth]{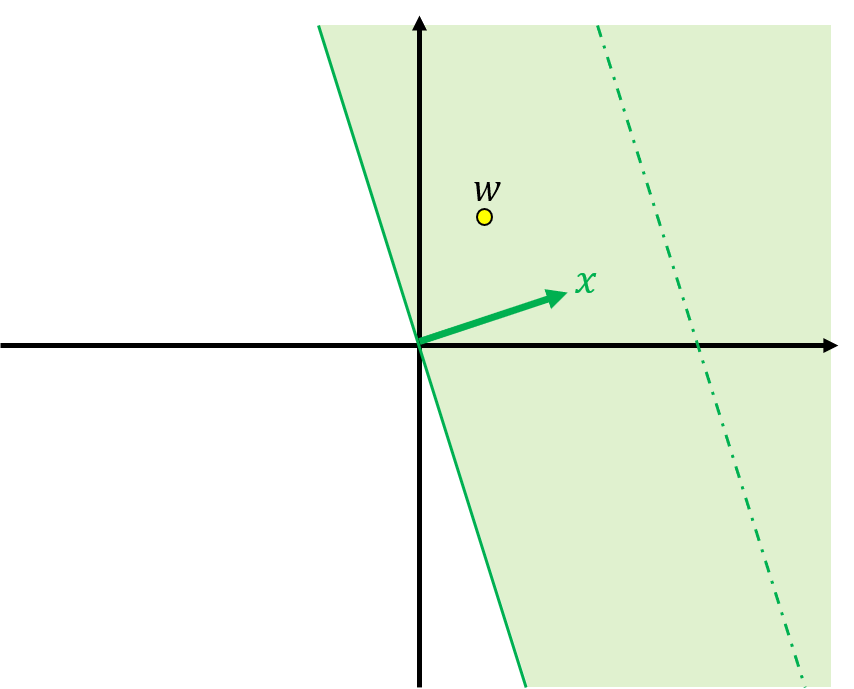}
        \caption{}
    \end{subfigure}
	\hfill
	\begin{subfigure}[b]{0.27\textwidth}
        \centering
        \includegraphics[width=\textwidth]{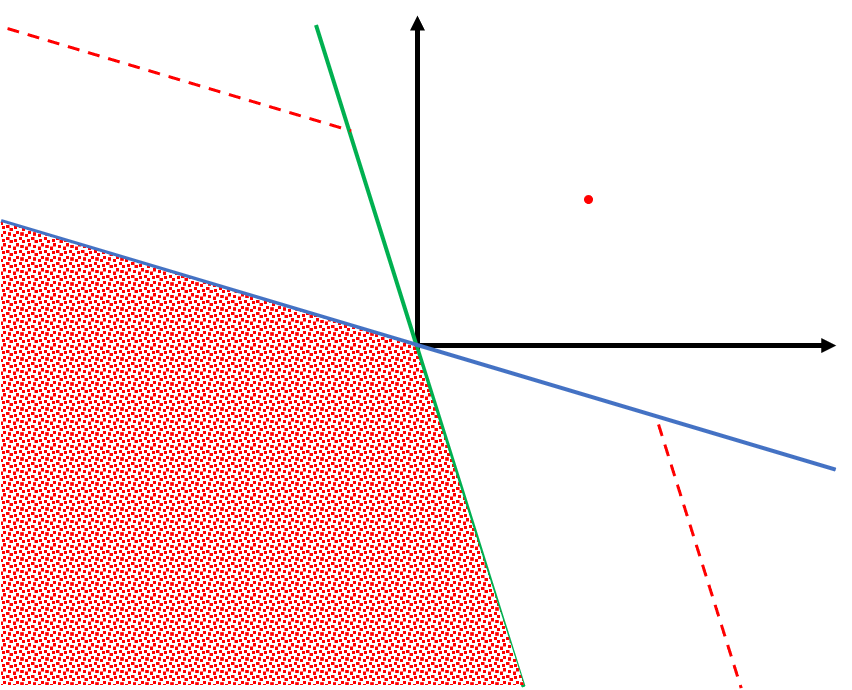}
        \caption{}
    \end{subfigure}
	\hfill
	\begin{subfigure}[b]{0.27\textwidth}
        \centering
        \includegraphics[width=\textwidth]{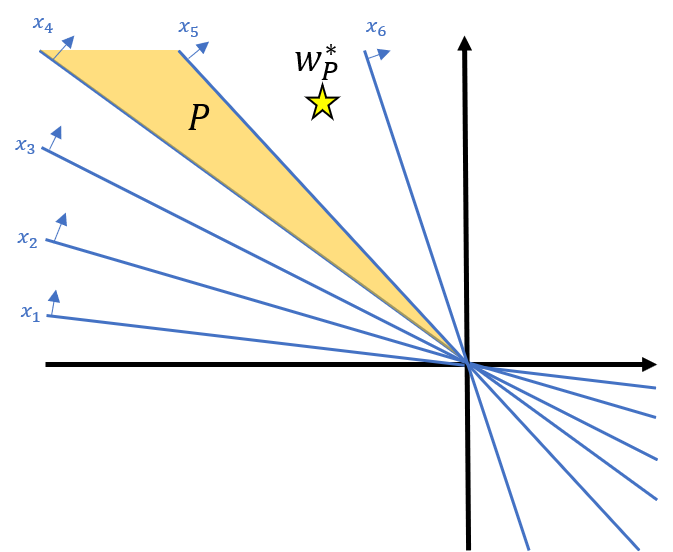}
        \caption{}
    \end{subfigure}
	\vspace*{-0.3cm}

	\caption{Description of terminology in single-neuron ReLU networks. 
	    In (a), a parameter $\wb$, an input data $\xb$, the activation boundary of $\xb$ and the solution hyperplane of $\xb$ in parameter space $\Wc$ are denoted by
	    the yellow point, a thick green vector, and the solid and dashed green lines, respectively. The activated half space is filled by light green color.
		In (b), stationary points of a single-neuron ReLU network with $n=d=2$ are denoted by red dots (a filled trapezoid, two dashed lines, and a point). 
		In (c), six data $\{\xb_i\}_{i=1}^6$ partition $\Wc=\Rd^2$. Note that 
		the virtual minimum of $P$ denoted by yellow star($\wb_P^*$) may not be contained in $P$.
    }
	\label{fig: example of terminology}
\end{figure*}

\paragraph{Training data set.}
For the analysis, we assume that the training data set $\{(\xb_i,y_i)\}_{i=1}^n$ composed of input vectors $\xb_i\in \Rd^d$ and output labels $y_i\in \Rd$ have the following properties:
\vspace{-5pt}
\begin{center}
\begin{minipage}{.4\textwidth}
    \begin{enumerate}[label=\textbf{A\arabic*}]\setlength{\itemsep}{-1mm}
        \item $ \xb_i \ge \zerob, \norm{\xb_i} \neq 0,~\forall i$. \label{asm: input} 
        \item $ y_i >0,~\forall i$. \label{asm: label}
        \item $\mathrm{rank}(\left[\xb_1 \cdots \xb_n\right]) = d$. \label{asm: underparameterization}
    \end{enumerate}
\end{minipage}
\end{center}
These assumptions can be justified as follows.
Since the input of each intermediate layer in deep ReLU networks is the output of the previous ReLU layer, we often use \ref{asm: input}. 
Similarly, Lemma~\ref{prop: basic properties} shows \ref{asm: label} is appropriate. Finally, the reduction principle for ReLU networks (Corollary \ref{cor: ReLU reduction principle}) leads to \ref{asm: underparameterization} without loss of generality.

\section{Partitions and support vectors of single-neuron ReLU networks}

The main goal of this section is to extend the analysis of the loss landscape of single-neuron linear networks in Appendix~\ref{sec: linear}  to 
single-neuron ReLU networks. 
This is thanks to the {\em label-backpropagation} described in Appendix \ref{sec: label backpropagation}, where each intermediate layer of deep ReLU networks can 
be considered as a single-neuron ReLU network.


Due to the existence of ReLU, one of important tools for ReLU network analysis is understanding parameter space partition.
Specifically, let $\Wc \subset \Rd^d$ denote the parameter space, i.e.,
$\wb\in \Wc$ for all network weights $\wb$.
Inspired by \citet{hanin2019complexity, hanin2019deep, lacotte2020all, xu2021traversing}, we refer a {\em partition} $P \subset \Wc$ as a subset of $\Wc$ that has invariant activation pattern.
For the parameter space $\Wc$ and an input data $\xb$, we define the {\em activated half space} with respect to $\xb$ as $\{ \wb\in \Wc~|~ \wb^T\xb > 0 \}$. 
Similarly, the subset $\{ \wb\in\Wc ~|~ \wb^T\bfx < 0 \}$ is called the {\em deactivated half space}, and $\{\wb\in \Wc~|~ \wb^T\bfx =0  \}$ is referred to the {\em activation boundary} with respect to $\xb$.
In addition, for a given data pair $(\bfx,y)$, the {\em solution hyperplane} of $\bfx$ is the hyperplane defined by $\{\wb \in \Wc ~|~\wb^T\bfx=y \}$
(Figure~\ref{fig: example of terminology}(b)).
Finally, for a given partition $P$, we say data $\xb$ is {\em activated} in $P$ and denoted by $\xb \sim P$ if $\wb^T\xb > 0$ for any $\wb \in P$. Similarly, a data $\xb$ is called {\em deactivated} in $P$ and denoted by $\xb \not\sim P$ if $\wb^T\xb \le 0$ for any $\wb \in P$.
%
%
{
The following {proposition} shows the necessary and sufficient conditions of activation on gradient flow.
\begin{proposition} \label{prop: activation}
    Consider a gradient flow $\wb(t)$ defined by \eqref{eq: ReLU gradient flow}. Then, gradient flow $\wb(t)$ deactivates $\xb$ at $t=s$ if and only if 
    \begin{align*}
        \xb^T\wb(s) = 0
        \qquad\text{and}\qquad 
        \nabla L(\wb(s))^T \xb \ge 0.
    \end{align*}
\end{proposition}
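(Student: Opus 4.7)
The plan is to reduce the proposition to a statement about the scalar continuous function $f(t) := \wb(t)^T \xb$. Because $\wb(t)$ is continuous (by the well-posedness of the gradient flow asserted in Proposition \ref{prop: well-definedness}), so is $f$. Wherever $f$ is differentiable, the chain rule together with the flow equation \eqref{eq: ReLU gradient flow} gives
\begin{align*}
    f'(t) \;=\; \wb'(t)^T \xb \;=\; -\nabla L(\wb(t))^T \xb.
\end{align*}
Since the activation pattern can only change at isolated times (and is constant on each open interval between such times), the one-sided derivative $f'_+(s) = -\nabla L(\wb(s))^T \xb$ is well defined, provided one uses the same subgradient convention $\mm_{\{\cdot>0\}}$ that defines $\nabla L$ at the boundary point $\wb(s)$.

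For the forward direction, I would argue as follows. By definition of ``deactivates $\xb$ at $t=s$,'' the sign of $f$ transitions from strictly positive on a left-neighborhood of $s$ to non-positive on a right-neighborhood of $s$. Continuity then forces $f(s) = \wb(s)^T \xb = 0$, which is the first required condition. For the second, because $f(t) \le 0 = f(s)$ for all $t$ slightly greater than $s$, the right difference quotient is non-positive, so $f'_+(s) \le 0$; rearranging gives $\nabla L(\wb(s))^T \xb \ge 0$.

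For the reverse direction, assume $\wb(s)^T \xb = 0$ and $\nabla L(\wb(s))^T \xb \ge 0$. The right-derivative formula above yields $f'_+(s) \le 0$, and together with $f(s)=0$ this implies $f(t) \le 0$ on some interval $[s, s+\epsilon)$. Hence $\xb$ is deactivated along the flow immediately past $s$, which is the required conclusion.

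The main obstacle I anticipate is the rigorous justification of the one-sided derivative identity at the non-smooth instant $s$: the gradient flow's right-hand side is only piecewise continuous in $\wb$, and one must verify that the activation pattern used by $\nabla L$ at $\wb(s)$ (with the indicator set to $0$ on the boundary) is the same pattern that governs $\wb(t)$ on a right-neighborhood of $s$. This should follow from continuity of $\wb$ together with the subgradient convention stated in the preliminaries, but it is the one step that must be handled carefully rather than by a routine computation.
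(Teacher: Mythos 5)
Your proof is correct and follows essentially the same route as the paper: both reduce the claim to the one-sided derivative of $f(t)=\wb(t)^T\xb$ at $t=s$, using the right difference quotient to pass between the sign of $\nabla L(\wb(s))^T\xb$ and the sign of $f$ on a right-neighborhood of $s$. The one loose step you share with the paper is the converse inference from $f'_+(s)\le 0$ and $f(s)=0$ to $f\le 0$ just after $s$ (delicate only in the equality case $f'_+(s)=0$), so no new gap is introduced.
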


From Proposition \ref{prop: activation} with assumption \ref{asm: label}, we reveal one property about local minimum: every local minimum of \eqref{eq: cost ReLU} is strictly contained in some partition.}

\begin{lemma}[Not on boundary lemma]
\label{lem: not on boundary lemma}
	For a single-neuron ReLU network under \ref{asm: label}, {\eqref{eq: cost ReLU} has} no minimizer on any activation boundary.
\end{lemma}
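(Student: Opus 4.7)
The plan is to argue by contradiction, exploiting the asymmetric one-sided behavior of the squared ReLU loss at an activation boundary. Suppose $\wb^*$ is a minimizer of $L$ with $\wb^{*T}\xb_j=0$ for some $j$. Partition the indices into $A=\{i:\wb^{*T}\xb_i>0\}$, $D=\{i:\wb^{*T}\xb_i<0\}$, and $S=\{i:\wb^{*T}\xb_i=0\}$, so $j\in S$. I will compute the one-sided directional derivatives of $L$ at $\wb^*$ in the two opposite probe directions $+\xb_j$ and $-\xb_j$ and show that local minimality forces inconsistent sign conditions on the same scalar.

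For a direction $\vb$, the one-sided derivative splits by class: contributions from $A$ are smooth and sum to $\sum_{i\in A}(\wb^{*T}\xb_i-y_i)\,\vb^T\xb_i$; contributions from $D$ vanish for all sufficiently small $\epsilon>0$; and each $i\in S$ contributes $-y_i\,\vb^T\xb_i$ if $\vb^T\xb_i>0$ and $0$ otherwise, because $\ell_i(\wb^*+\epsilon\vb)=\tfrac12(\epsilon\vb^T\xb_i-y_i)^2$ in the first case and $\tfrac12 y_i^2$ in the second. Assumption \ref{asm: input} is crucial here: since $\xb_i\ge\zerob$ for all $i$, we have $\xb_j^T\xb_i\ge 0$ for every $i\in S$. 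This is exactly what makes the $-\xb_j$ probe preserve deactivation of every member of $S$, while the $+\xb_j$ probe activates $\xb_j$ itself (contributing the strictly negative slope $-y_j\|\xb_j\|^2<0$ by \ref{asm: label}) and possibly some other $i\in S$ with $\xb_j^T\xb_i>0$, each contributing a further non-positive term.

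Letting $\gb:=\sum_{i\in A}(\wb^{*T}\xb_i-y_i)\xb_i$, the local-minimum condition in the $-\xb_j$ direction becomes $D_{-\xb_j}L(\wb^*)=-\gb^T\xb_j\ge 0$, i.e., $\gb^T\xb_j\le 0$. In the $+\xb_j$ direction it becomes
\begin{align*}
\gb^T\xb_j\;\ge\;y_j\|\xb_j\|^2\;+\;\sum_{\substack{i\in S\setminus\{j\}\\ \xb_j^T\xb_i>0}} y_i\,\xb_j^T\xb_i\;>\;0,
\end{align*}
where the strict positivity uses $y_j>0$ from \ref{asm: label} and $\|\xb_j\|\ne 0$ from \ref{asm: input}. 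The two inequalities $\gb^T\xb_j\le 0$ and $\gb^T\xb_j>0$ are incompatible, proving no minimizer lies on any activation boundary.

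I expect no serious obstacle. The computation of each one-sided derivative is a routine Taylor expansion after separating the three index classes; the only care needed is to treat the other boundary indices $i\in S\setminus\{j\}$, whose contributions luckily all line up in the same sign thanks to \ref{asm: input}. Conceptually, the lemma encodes that the squared ReLU loss strictly prefers activating any datum with positive label, so optimizers sit in the relative interior of a partition rather than on its walls.
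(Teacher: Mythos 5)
Your proof is correct. It rests on the same underlying mechanism as the paper's own argument --- probing $\wb^*$ along the two directions $\pm\xb_j$ and extracting a contradiction from the kink of $[\cdot]_+$, whose contribution to the slope on the activating side is $-y_j\norm{\xb_j}^2<0$ by \ref{asm: label} --- but the technical route is genuinely different and in one respect more careful. The paper perturbs to $\wb_\pm=\wb^*\pm\varepsilon\xb_j$, uses convexity of $L$ on each of the two adjoining partitions to obtain $\nabla L(\wb_+)^T\xb_j\ge 0$ and $\nabla L(\wb_-)^T\xb_j\le 0$, and then passes to the limit $\varepsilon\to 0^+$; this tacitly treats $\wb^*$ as lying on exactly one activation boundary, so that $\wb_\pm$ sit in two well-defined adjoined partitions. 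You instead evaluate the two one-sided directional derivatives directly at $\wb^*$ and invoke the elementary necessary condition $D_{\pm\xb_j}L(\wb^*)\ge 0$, which needs neither convexity nor a limiting argument, and you explicitly track every index in $S=\{i:\wb^{*T}\xb_i=0\}$, so the degenerate case where $\wb^*$ sits on several activation boundaries simultaneously is handled. One small remark: \ref{asm: input} is convenient but not essential to your argument --- if some $i\in S$ had $\xb_j^T\xb_i<0$, the $-\xb_j$ probe would merely pick up additional terms $y_i\,\xb_j^T\xb_i<0$ on the left of the first inequality, which only strengthens the conclusion $\gb^T\xb_j\le 0$, so the contradiction with $\gb^T\xb_j\ge y_j\norm{\xb_j}^2>0$ survives without it.
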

Now, consider the associated loss function of a partition $P$ {defined by}
    \begin{align*}
        L_P(\wb) := \frac12\sum\limits_{\xb_i \sim P} (\wb^T\bfx_i - y_i)^2 .
    \end{align*}
{We refer $\wb_P^* := \argmin_{\wb\in \Wc} L_P(\wb) $ as the {\em virtual minimizer} of $P$. Notifying}
$\wb_P^*$ may not be contained in $P$ (see Figure \ref{fig: example of terminology}(c)), the following proposition states the precise condition when $P$ contains its virtual minima.

\begin{proposition}
\label{prop: ReLU solution}
	Suppose $\Hb(\wb)$ in \eqref{eq: Hq} has rank $r_P$ on a partition  $P$.
	If $r_P=d$, i.e. the rank is the same as the training sample dimension,  the unique virtual minimum of $P$ is given by
	\begin{align} \label{eq: virtual}
    	\wb_P^* =\left(\sum_{\xb_i \sim P} \xb_i\xb_i^T \right)^{-1} \sum_{\xb_i \sim P}y_i\xb_i 
	\end{align}
	and it is contained in $P$ if and only if $\wb_P^*$ satisfies
	\begin{equation} \label{eq: partition condition}
    	\begin{aligned}
    		\xb_i^T \wb_P^* > 0 &\qquad \textup{for all }\; \xb_i \sim P, \\
    		\xb_i^T \wb_P^* \le 0 &\qquad \textup{for all }\; \xb_i \not\sim P.
    	\end{aligned}
	\end{equation}
	If $r_P < d$, the virtual minima of $P$ exist in $P$ and they form a $(d-r_P)$-dimensional connected {affine subspace} in $P$.
\end{proposition}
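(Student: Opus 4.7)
My plan is to reduce both claims to the linear system $\Hb_P\wb = \qb_P$, where $\Hb_P := \sum_{\xb_i \sim P}\xb_i\xb_i^T$ and $\qb_P := \sum_{\xb_i \sim P}y_i\xb_i$ are the (constant) values of $\Hb(\wb)$ and $\qb(\wb)$ from \eqref{eq: Hq} on the partition $P$. Since $L_P$ is a convex quadratic with gradient $\Hb_P\wb - \qb_P$, its virtual minimizers are exactly the solutions of this linear system, and the dimension of the solution set is governed by $\mathrm{rank}(\Hb_P) = r_P$.

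When $r_P = d$, $\Hb_P$ is positive definite and hence invertible, so $\wb_P^* = \Hb_P^{-1}\qb_P$ is the unique minimizer, recovering \eqref{eq: virtual}. The partition $P$ is by definition the subset of $\Wc$ with the fixed sign pattern ``$\xb_i^T\wb > 0$ for $\xb_i \sim P$ and $\xb_i^T\wb \le 0$ for $\xb_i \not\sim P$'', so $\wb_P^* \in P$ is tautologically equivalent to \eqref{eq: partition condition}.

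When $r_P < d$, I first establish consistency: $\mathrm{Range}(\Hb_P) = \mathrm{span}\{\xb_i : \xb_i \sim P\}$ since $\Hb_P$ is a sum of outer products along the activated $\xb_i$, and $\qb_P$ is a linear combination of the same vectors, so it lies in this range. Hence the solution set $\Vc^* := \{\wb : \Hb_P\wb = \qb_P\}$ is a non-empty affine subspace of dimension $d - r_P$. Decomposing $\Rd^d = V \oplus V^\perp$ with $V = \mathrm{span}\{\xb_i : \xb_i \sim P\}$ and $V^\perp = \mathrm{null}(\Hb_P)$, I can write $\Vc^* = \wb_V^* + V^\perp$ where $\wb_V^* \in V$ is the unique minimizer of the restriction $L_P|_V$. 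For $\xb_i \sim P$, $\xb_i \in V$ implies $\xb_i^T\wb$ is constant on $\Vc^*$, while for $\xb_i \not\sim P$ with $\xb_i \notin V$, $\xb_i^T\wb$ varies affinely with the $V^\perp$-component. Exhibiting a point of $\Vc^* \cap P$ thus reduces to (i) verifying $\xb_i^T\wb_V^* > 0$ for every $\xb_i \sim P$, and (ii) choosing $\wb_{V^\perp} \in V^\perp$ with $\xb_i^T(\wb_V^* + \wb_{V^\perp}) \le 0$ for every $\xb_i \not\sim P$. Non-emptiness of $P$ supplies a feasible anchor $\tilde\wb_{V^\perp}$ for (ii), and the set of admissible $\wb_{V^\perp}$ is a relatively open polyhedron of full dimension $d - r_P$ in $V^\perp$, giving the claimed $(d-r_P)$-dimensional connected piece.

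The main obstacle is step (i): ruling out that the restricted minimizer $\wb_V^*$ flips the sign of some activated inner product. This is where the ReLU-specific assumptions \ref{asm: input} and \ref{asm: label} should enter. I would attempt a contradiction argument: if $\xb_i^T\wb_V^* \le 0$ while $y_i > 0$, the term $(\xb_i^T\wb_V^* - y_i)^2 \ge y_i^2$ in $L_P$ should be strictly improvable by perturbing $\wb_V^*$ within $V$ in a direction toward the activated half-space (anchored by the reference $\tilde\wb \in P$ with $\xb_i^T\tilde\wb_V > 0$), contradicting optimality of $\wb_V^*$. The component-wise non-negativity $\xb_i \ge \zerob$ is what should make such a perturbation simultaneously admissible across all activated data.
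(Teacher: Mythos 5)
Your treatment of the $r_P=d$ case is correct and is essentially the paper's argument: on $P$ the loss $L_P$ is a convex quadratic with gradient $\Hb_P\wb-\qb_P$, full rank gives the unique minimizer \eqref{eq: virtual}, and membership in $P$ is by definition the sign pattern \eqref{eq: partition condition}. (The paper's own proof of this proposition is a two-line citation of Proposition \ref{prop: stationary point linear}.) Your description of the solution set in the rank-deficient case as the affine subspace $\wb_V^*+V^\perp$ of dimension $d-r_P$, with $\xb_i^T\wb$ constant on it for every activated $\xb_i$, is also correct and more careful than what the paper writes.

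The gap is exactly where you flag it: step (i), the claim that $\xb_i^T\wb_V^*>0$ for every $\xb_i\sim P$, is not proved but only announced as a contradiction argument you ``would attempt.'' This step is the entire content of the assertion that the virtual minima lie in $P$, and no perturbation argument can close it, because the claim is false under \ref{asm: input}--\ref{asm: label} alone. Take $d=3$ with $\xb_1=(1,0,0)^T$, $\xb_2=(0,1,0)^T$, $\xb_3=(1,1,0)^T$, $\xb_4=(0,0,1)^T$ (so \ref{asm: underparameterization} holds) and labels $y_1=y_3=\epsilon$, $y_2=1$, $y_4=1$ with $0<\epsilon<1/3$. The partition $P=\{\wb: w_1>0,\,w_2>0,\,w_3\le 0\}$ activates exactly $\xb_1,\xb_2,\xb_3$, and $\Hb_P$ has rank $r_P=2<3$. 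Minimizing $(w_1-\epsilon)^2+(w_2-1)^2+(w_1+w_2-\epsilon)^2$ gives $\wb_V^*=(\epsilon-\tfrac13,\,\tfrac23,\,0)^T$, so $\xb_1^T\wb_V^*=\epsilon-\tfrac13<0$; since $\xb_1\in V$, every point of $\wb_V^*+V^\perp$ violates $\xb_1^T\wb>0$, and no virtual minimum of $P$ lies in $P$. So the rank-deficient half of the proposition needs an extra hypothesis (e.g.\ the analogue of \eqref{eq: partition condition} for $\wb_V^*=\Hb_P^\dagger\qb_P$, mirroring the full-rank case), and your part (ii) has a related soft spot: a deactivated $\xb_i$ lying in $V$, or one with $\xb_i^T\tilde\wb_{V^\perp}=0$, cannot be handled by rescaling the anchor, and even when the intersection with $P$ is nonempty it is a relatively open polyhedral piece rather than the full affine subspace the statement advertises. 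To your credit, the paper's own proof never addresses containment in $P$ at all, so you have correctly isolated the unproven (and, as stated, unprovable) step rather than papered over it — but as a proof the proposal is incomplete at the decisive point.
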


\begin{figure}[t!]
    \centering
    \begin{subfigure}[b]{0.45\textwidth}
    	\centering
        \includegraphics[width=\textwidth]{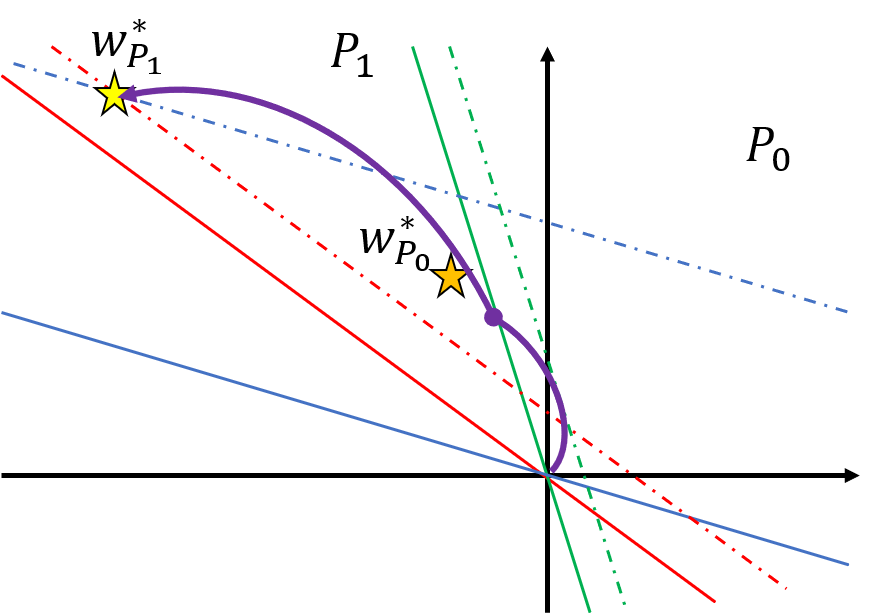}
    	\caption{} 
    	\label{fig: generalization}
    \end{subfigure}
    \hfill
    \begin{subfigure}[b]{0.4\textwidth}
        \centering
        \includegraphics[width=\textwidth]{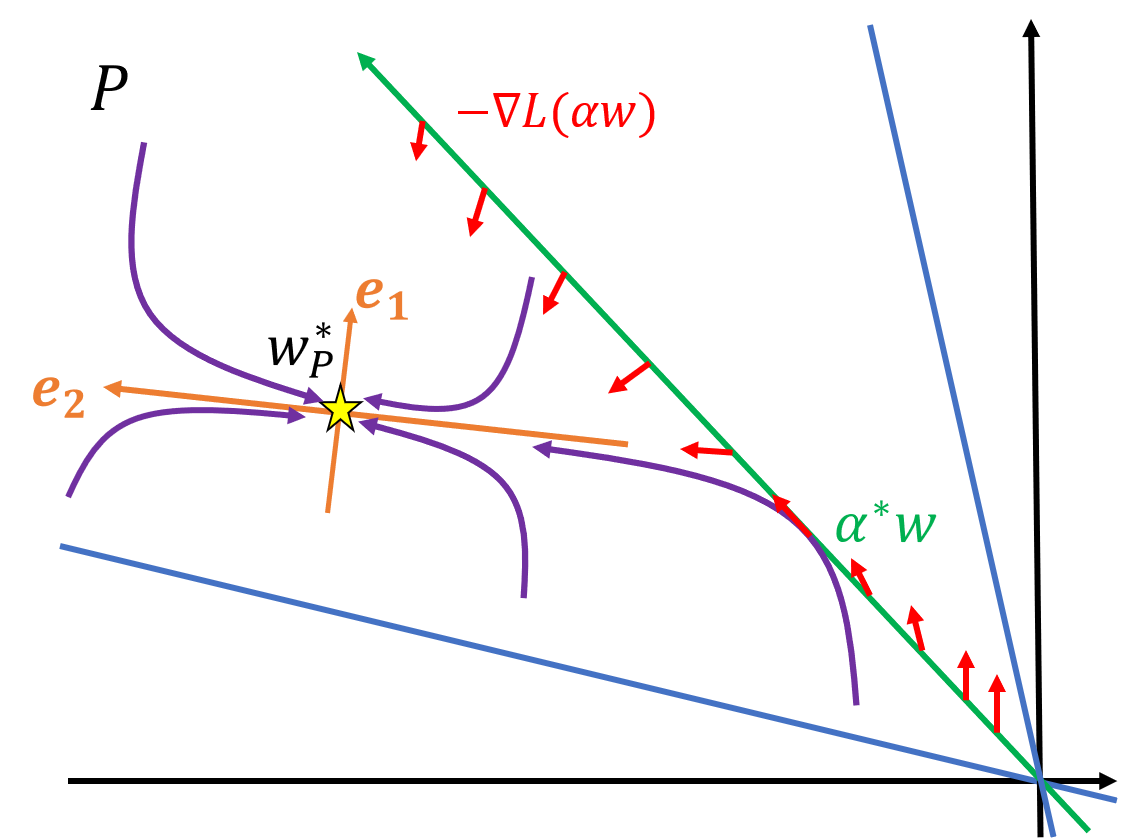}
        \caption{} 
        \label{fig: effect of norm}
    \end{subfigure}
    \caption{
    {{Training} dynamics of gradient flows}.
    In (a), three data induce activation boundaries(solid lines) and solution hyperplanes(dashed lines) in the parameter space $\Wc$. The virtual minimum of $P_0$ is denoted by the orange star($\wb_{P_0}^*$). After the gradient flow(purple curve) deactivates the green data, the virtual minimum is changed to the yellow star ($\wb_{P_{1}}^*$), which the gradient flow finally converges. 
    In (b), for a given vector $\alpha \wb$ in a partition $P$ with $\alpha>0$, small red arrows indicate the gradient $-\nabla L(\wb)$ at each point.
    By Proposition \ref{prop: effect of norm}, red arrows tend to activate other data for $\alpha<\alpha^*$. Similarly, for $\alpha > \alpha^*$, red arrows tend to deactivate other data. Some gradient flows are described by purple curves. 
    }
\end{figure}

{Proposition~\ref{prop: ReLU solution} leads to  an interesting concept of {\em support vectors}.} 
Specifically, for a partition $P$ such that the virtual minimum is contained in $P$, i.e. $\wb_P^* \in P$, we call $\xb \sim P$ as the {support vectors} of $\wb_P^*$.
Then, Proposition \ref{prop: ReLU solution} says that the data needed to compute $\wb_P^*$ is only its support vectors. 
It is worth noting that this terminology is closely related with the support vector machine \cite{cortes1995support, drucker1997support, vapnik1997support} and support vectors in linear regression \cite{kavitha2016comparative, joki2020clusterwise}, in the sense that support vectors are the only required data to obtain the solution.
Furthermore, the following result suggests an important advantage of ReLU networks  in terms of support vectors.
\begin{theorem} \label{thm: ReLU get smaller loss}
	Under \ref{asm: label}, the global minimum of a single-neuron ReLU network {has smaller loss} than a single neuron linear network. 
\end{theorem}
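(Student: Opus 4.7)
The plan is to establish a pointwise inequality between the ReLU loss $L(\wb)$ from \eqref{eq: cost ReLU} and the single-neuron linear loss
\[
L_{\text{lin}}(\wb) := \tfrac{1}{2}\sum_{i=1}^{n}(\wb^T\xb_i - y_i)^2,
\]
and then take the minimum on each side. Concretely, I will prove $L(\wb)\le L_{\text{lin}}(\wb)$ for every $\wb\in\Wc$, which immediately gives $\min_{\wb} L(\wb)\le \min_{\wb} L_{\text{lin}}(\wb)$.

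The term-by-term comparison is where \ref{asm: label} enters. Fix $\wb$ and split the indices into $I_+(\wb):=\{i:\wb^T\xb_i>0\}$ and $I_-(\wb):=\{i:\wb^T\xb_i\le 0\}$. On $I_+$ the two summands coincide, since $[\wb^T\xb_i]_+=\wb^T\xb_i$. On $I_-$, the ReLU contribution is $\tfrac{1}{2}y_i^{\,2}$, whereas the linear contribution is $\tfrac{1}{2}(y_i-\wb^T\xb_i)^2$. Under \ref{asm: label} we have $y_i>0$, and on $I_-$ we have $-\wb^T\xb_i\ge 0$, so $y_i-\wb^T\xb_i\ge y_i>0$ and $(y_i-\wb^T\xb_i)^2\ge y_i^{\,2}$. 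Summing over $i$ yields the desired pointwise bound. The single sign observation here, that truncating a negative prediction to zero strictly improves the squared error against a positive label, is the entire mechanism behind the theorem.

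To upgrade the weak bound to the strict inequality suggested by the word ``smaller'', I would invoke the explicit linear minimizer $\wb^{*}_{\text{lin}}=\bigl(\sum_i\xb_i\xb_i^T\bigr)^{-1}\sum_i y_i\xb_i$, which is well-defined by \ref{asm: underparameterization}. If there exists some $i$ with $\wb^{*T}_{\text{lin}}\xb_i\le 0$, the pointwise bound is strict at that index, and chaining $\min L \le L(\wb^{*}_{\text{lin}}) < L_{\text{lin}}(\wb^{*}_{\text{lin}})=\min L_{\text{lin}}$ gives the strict conclusion. In the edge case that $\wb^{*T}_{\text{lin}}\xb_i>0$ for all $i$, Proposition \ref{prop: ReLU solution} applied to the all-activated partition identifies $\wb^{*}_{\text{lin}}$ itself as a virtual minimizer of the ReLU loss with identical value, so the two minima coincide rather than being strictly ordered.

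The main obstacle is therefore not a computation but a matter of what the theorem really claims. The weak direction $\le$ is immediate from the pointwise argument; a genuinely strict ``$<$'' would require ruling out the fully-activated linear minimizer, which is a nontrivial assumption on the data. I would accordingly present the proof as the pointwise comparison above, and either (i) state the conclusion as $\le$, or (ii) add a genericity hypothesis ensuring $\wb^{*}_{\text{lin}}$ leaves at least one input deactivated, in which case strictness follows from the same one-line sign argument.
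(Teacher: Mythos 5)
Your proof is correct and is essentially the paper's argument: the paper likewise compares the two losses term by term using the sign observation that $(y_i-\wb^T\xb_i)^2\ge y_i^2$ when $y_i>0$ and $\wb^T\xb_i\le 0$, evaluating the ReLU loss at the linear minimizer $\wb_l^*$ and then chaining $L_r(\wb_r^*)\le L_r(\wb_l^*)\le L_l(\wb_l^*)$, which is just your pointwise bound specialized to one point. Your caveat about strictness is also consistent with the paper, whose proof in fact only concludes ``equal or smaller loss'' despite the theorem's wording.
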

This states that in order to have a smaller loss, some data could be deactivated during the training of ReLU networks.
The following toy example shows the deactivation of data during the training of a single neuron ReLU networks.

\begin{example}[Deactivation in ReLU networks.] \label{exmp: deactivation interpretation}
    Consider the three data ($n=3$) in $\Rd^2$ ($d=2$) described in Figure \ref{fig: generalization}. 
    Let $P_0$ be the partition where all three data are activated. 
    Since the virtual minimum $\wb_{P_{0}}^*$ (orange star) is in the deactivated half space of the green data, the gradient flow must deactivate it (at the purple dot) and move to the next partition $P_1$. Then, the virtual minimum is changed to the yellow star ($\wb_{P_1}^*$) which is the optimal solution of two data (blue and red). i.e., the gradient flow finally disregards the green data and converges to the optimal point of the remained data. See also Example \ref{exmp: deactivation} in Section \ref{sec: experiments}.
\end{example}

Theorem~\ref{thm: ReLU get smaller loss} and Example~\ref{exmp: deactivation interpretation}
clearly show that a gradient flow gives up to fit some data and rather focuses on the best-fit of the remained data (i.e., support vectors) to learn larger common tendency of data. 
This may explain {why {and how} ReLU networks generalize better than linear networks}.
Then, one may wonder whether the large number of deactivation (i.e. smaller number of support vectors) is preferable. However, the following theorem says that 
if several local minima with different number of support vectors are feasible, the one with more support vectors is preferrable.

\begin{theorem}
\label{thm: more support vectors}
	Consider a single-neuron ReLU network {\eqref{eq: cost ReLU}} with \ref{asm: label}. Let $\wb_1^*$ be a local minimum which has set of support vectors $S_1$, and $\wb_2^*$ be another local minimum which has set of support vectors $S_2$ such that $S_2 \subset S_1$. Then, $L(\wb_2^*) \ge L(\wb_1^*)$.
\end{theorem}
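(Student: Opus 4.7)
The plan is to exploit the convex-quadratic structure of the partition-restricted losses together with the sign constraint from Assumption \ref{asm: label}. Let $P_j$ be the partition containing $\wb_j^*$ for $j=1,2$. By Lemma \ref{lem: not on boundary lemma}, each $\wb_j^*$ lies strictly inside $P_j$, so on a neighborhood of $\wb_j^*$ the full loss equals $L_{P_j}(\wb) + c_j$ where $c_j := \tfrac{1}{2}\sum_{\xb_i \not\sim P_j} y_i^2$. Convexity of $L_{P_j}$ then forces each $\wb_j^*$ to be a virtual minimizer of $L_{P_j}$ in the sense of Proposition \ref{prop: ReLU solution}; in particular $\wb_1^*$ globally minimizes $L_{P_1}$ over all of $\Wc$.

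Decomposing the total loss as $L(\wb_j^*) = L_{P_j}(\wb_j^*) + c_j$, and using that $S_2 \subset S_1$ gives $c_2 - c_1 = \tfrac{1}{2}\sum_{\xb_i \in S_1 \setminus S_2} y_i^2$, the claim $L(\wb_1^*) \le L(\wb_2^*)$ reduces to
\[
L_{P_1}(\wb_1^*) \;\le\; L_{P_2}(\wb_2^*) + \tfrac{1}{2}\sum_{\xb_i \in S_1 \setminus S_2} y_i^2.
\]
The central observation is a sign inequality enabled by Assumption \ref{asm: label}: for every $\xb_i \in S_1 \setminus S_2$ one has $\wb_2^{*T}\xb_i \le 0$ (deactivated in $P_2$) and $y_i > 0$, so any $\vb$ with $\vb^T\xb_i \in [0, 2y_i]$ satisfies $(\vb^T\xb_i - y_i)^2 \le y_i^2$. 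The plan is then to build a test vector $\vb$ that lies in the virtual-minimum affine subspace of $L_{P_2}$ (so that $L_{P_2}(\vb) = L_{P_2}(\wb_2^*)$) and realizes this slab condition for each $\xb_i \in S_1 \setminus S_2$; for such a $\vb$ the decomposition $L_{P_1}(\vb) = L_{P_2}(\vb) + \tfrac{1}{2}\sum_{\xb_i \in S_1 \setminus S_2}(\vb^T\xb_i - y_i)^2$ yields $L_{P_1}(\vb) \le L_{P_2}(\wb_2^*) + \tfrac{1}{2}\sum y_i^2$, and combining with the optimality $L_{P_1}(\wb_1^*) \le L_{P_1}(\vb)$ closes the argument.

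The delicate step will be producing such a $\vb$: when $\Hb(\wb_2^*)$ is rank-deficient the virtual-minimum subspace is non-trivial and one can move off $\wb_2^*$ in its null space to push the inner products $\vb^T\xb_i$ into the slab $[0, 2y_i]$; when $\Hb(\wb_2^*)$ is full rank the subspace collapses to $\{\wb_2^*\}$ and Lemma \ref{lem: not on boundary lemma} forces $\wb_2^{*T}\xb_i < 0$ strictly, so the construction must instead allow $L_{P_2}(\vb)$ to grow by a controlled amount and balance this increase against the savings obtained on $S_1 \setminus S_2$ through the margin $y_i>0$. Carrying out this balancing — and verifying that the slab is always reachable under the nesting $S_2 \subset S_1$ — is where I expect the main technical work of the proof to lie.
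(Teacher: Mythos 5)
Your reduction is exactly right: writing $m_j := \min_{\wb}\sum_{\xb_i\in S_j}(\wb^T\xb_i-y_i)^2$, the claim is equivalent to $m_1 \le m_2 + \sum_{\xb_i\in S_1\setminus S_2} y_i^2$, and you correctly identify that everything hinges on producing a test vector $\vb$ that (nearly) minimizes the $S_2$-least-squares problem while keeping $\vb^T\xb_i\in[0,2y_i]$ for $\xb_i\in S_1\setminus S_2$. But the difficulty you flag in the full-rank case --- that $\wb_2^{*T}\xb_i<0$ strictly, so entering the slab costs you on $L_{P_2}$ and you must ``balance'' --- is not a technicality that more work will resolve: the balancing is impossible in general because the target inequality is false. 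Take $d=2$, $n=3$ with $\xb_1=(1,1)^T$, $y_1=2$; $\xb_2=(1,2)^T$, $y_2=1$; $\xb_3=(0,1)^T$, $y_3=1$ (this even satisfies \ref{asm: input} and \ref{asm: underparameterization}). Then $\wb_2^*=(3,-1)^T$ fits $\xb_1,\xb_2$ exactly, deactivates $\xb_3$, and is a strict local minimum with support vectors $S_2=\{\xb_1,\xb_2\}$ and $L(\wb_2^*)=\tfrac12 y_3^2=\tfrac12$; meanwhile $\wb_1^*=(1,\tfrac13)^T$ solves $\Hb\wb=\qb$ for the all-activated partition, lies strictly inside it, and is a local minimum with support vectors $S_1=\{\xb_1,\xb_2,\xb_3\}$ and $L(\wb_1^*)=\tfrac12\cdot 3\cdot(\tfrac23)^2=\tfrac23$. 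So $S_2\subset S_1$ yet $L(\wb_2^*)=\tfrac12<\tfrac23=L(\wb_1^*)$; equivalently $m_1=\tfrac43>m_2+y_3^2=1$, so no admissible $\vb$ in your scheme exists.

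For what it is worth, the paper's own proof founders on exactly the step you isolated. It asserts
$\min_\wb\bigl[\sum_{\xb_b\in S_2}(\wb^T\xb_b-y_b)^2\bigr]+\sum_{\xb_c\in S_1\setminus S_2}y_c^2 = \min_{\wb^T\xb_c\le0}\sum_{\xb_b\in S_1}(\wb^T\xb_b-y_b)^2$ as an equality, but replacing the constant $y_c^2$ by $(\wb^T\xb_c-y_c)^2$ under the constraint $\wb^T\xb_c\le0$ (with $y_c>0$) only yields ``$\le$'', which points the wrong way for the desired chain; in the example above the left side is $1$ and the right side is $\tfrac32$. The statement could only be rescued with an additional hypothesis --- essentially that the minimizing affine set of $L_{P_2}$ meets each boundary $\{\wb^T\xb_c=0\}$ for $\xb_c\in S_1\setminus S_2$, which is precisely the slab condition you were trying to engineer and which Lemma \ref{lem: not on boundary lemma} rules out at $\wb_2^*$ itself in the full-rank case.
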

 Later in Theorem \ref{thm: gradient flow does not converge to a bad minimum}, we will show  that the gradient flow dynamics tends to avoid bad local minima and converges to the one with a large number of support vectors, which is another important implicit bias of the gradient {flow}.

\section{Gradient flow dynamics}

\subsection{Weight initialization}

Recall Figure \ref{fig: generalization}  where a gradient flow initialized with small norm goes through the partition with all the data being active, after which some of the data become deactivated to reach a local minimizer. It turns out that weight initialization plays the key roles in this dynamics.
In particular, the norm of the initial weight plays  key role to the activation of data, as explained in the following proposition.


\begin{proposition}
    \label{prop: effect of norm}
	For a single-neuron ReLU network \eqref{eq: cost ReLU}, consider $\wb_0$ in a partition $P$ and define
	\vspace{-6pt}
	\begin{align*}
	    \alpha_j^* := \frac{\xb_j^T\qb}{\xb_j^T \Hb \wb_0}
	\end{align*}
	for a data $\xb_j$, where  $\Hb$ and $\qb$ are given by \eqref{eq: Hq}.
	Then, $\nabla L(\alpha\wb_0)^T\xb_j \ge 0 $ if and only if $\alpha \ge \alpha_j^*$.
\end{proposition}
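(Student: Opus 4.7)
The plan is to exploit the positive homogeneity of the ReLU activation pattern: for any $\alpha>0$, $\alpha\wb_0$ lies in the same partition $P$ as $\wb_0$, so the matrix $\Hb$ and the vector $\qb$ in \eqref{eq: Hq} are evaluated at the \emph{same} activation mask as at $\wb_0$. This will turn the claim into a one-variable linear inequality in $\alpha$.

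First I would verify the homogeneity. For each training datum $\xb_i$, the sign of $(\alpha\wb_0)^T\xb_i$ equals the sign of $\wb_0^T\xb_i$ whenever $\alpha>0$, hence $\xb_i \sim P$ iff $\xb_i$ is active at $\alpha\wb_0$. Therefore
\[
\Hb(\alpha\wb_0)=\Hb(\wb_0)=:\Hb,\qquad \qb(\alpha\wb_0)=\qb(\wb_0)=:\qb,
\]
and substituting into $\nabla L(\wb)=\Hb(\wb)\wb-\qb(\wb)$ yields $\nabla L(\alpha\wb_0)=\alpha\Hb\wb_0-\qb$. Pairing with $\xb_j$ gives the affine-in-$\alpha$ identity
\[
\nabla L(\alpha\wb_0)^T\xb_j \;=\; \alpha\,\xb_j^T\Hb\wb_0 \;-\; \xb_j^T\qb.
\]

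Second, to pass from this identity to the claimed equivalence $\alpha\ge\alpha_j^*$, I need the denominator $\xb_j^T\Hb\wb_0$ to be strictly positive so that the division preserves the direction of the inequality. Expanding $\Hb\wb_0=\sum_{\xb_i\sim P}(\wb_0^T\xb_i)\xb_i$, each scalar coefficient $\wb_0^T\xb_i$ is positive by definition of $\xb_i\sim P$, and each $\xb_i\ge\zerob$ by \ref{asm: input}, so $\Hb\wb_0$ is a nonnegative combination of nonnegative vectors. Combined with $\xb_j\ge\zerob$, this gives $\xb_j^T\Hb\wb_0\ge 0$; strict positivity is exactly the condition for $\alpha_j^*$ to be a well-defined finite ratio, which is assumed implicitly in the statement. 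Dividing through and rearranging yields $\nabla L(\alpha\wb_0)^T\xb_j\ge0 \iff \alpha \ge \xb_j^T\qb/\xb_j^T\Hb\wb_0=\alpha_j^*$.

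The main (and only) obstacle is the sign and positivity of the denominator $\xb_j^T\Hb\wb_0$: this single quantity simultaneously controls whether $\alpha_j^*$ is finite and whether the inequality flips upon dividing. Once I justify it via nonnegativity of the data and positivity on the active set (appealing to \ref{asm: input} and the definition of $P$), the rest is a one-line algebraic rearrangement. No appeal to gradient-flow dynamics or to Proposition~\ref{prop: activation} is needed; this proposition is purely a static statement about $\nabla L$ along a ray from the origin through $\wb_0$.
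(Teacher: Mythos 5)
Your proof is correct and follows essentially the same route as the paper's: invoke positive homogeneity of the partition (so $\Hb$ and $\qb$ are constant along the ray $\alpha\wb_0$), write $\nabla L(\alpha\wb_0)^T\xb_j=\alpha\,\xb_j^T\Hb\wb_0-\xb_j^T\qb$ as an affine function of $\alpha$, and divide by $\xb_j^T\Hb\wb_0$. Your explicit justification that the denominator is nonnegative (via \ref{asm: input} and the definition of the active set) and must be strictly positive for the division to be valid is in fact more careful than the paper, which performs the division without comment.
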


{
Proposition \ref{prop: effect of norm} leads to a conclusion that a gradient flow with sufficiently large norm (i.e. $\alpha \ge \alpha_j^*$) satisfies the deactivation condition of Proposition \ref{prop: activation}. 
On the other side, for a gradient flow with small norm, we can expect that all data being activated, which is indeed true as shown in Lemma \ref{lem: all activated}. See also Figure \ref{fig: effect of norm}.
}

{
Activation of data is a significant issue since Theorem \ref{thm: more support vectors} guarantees lower loss value for larger number of support vectors. 
}
%
%
%
In the following theorem, we further suggest a condition of initialization point $\wb_0$ such that one specific data $\xb_j$ is kept activated on the gradient flow $\wb(t)$.

\begin{theorem}[No Deactivation] \label{thm: ReLU no deactivation}
	Consider a gradient flow \eqref{eq: ReLU gradient flow} in a single-neuron ReLU network under \ref{asm: input} and \ref{asm: label}. Suppose there exists $\wb_{GM}^*$ such that  $L(\wb_{GM}^*)=0$.
    Then,  $\xb_j$ is {always} activated on the gradient flow initialized {at $\wb_0$} if 
    $\xb_j^T\wb_0>0$ and
	\begin{align}\label{eq: no deactivation condition}
	   \frac{y_j}{ \norm{\xb_j}}> \norm{\wb_0 - \wb_{GM}^*} .
	\end{align}
	In particular, if \eqref{eq: no deactivation condition} holds for all $j=1,\cdots,n$, then the gradient flow initialized at $\wb_0$ coincides with that of a single-neuron linear network initialized at the same point $\wb_0$, which converges to the global minimum. 
\end{theorem}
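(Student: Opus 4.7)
The plan is to show that the Euclidean distance $\|\wb(t)-\wb_{GM}^*\|$ is non-increasing along the flow, and then use Cauchy–Schwarz to rule out the deactivation event. The first observation is that $L(\wb_{GM}^*)=0$ combined with assumption \ref{asm: label} forces $\wb_{GM}^{*T}\xb_i = y_i > 0$ for every $i$, because $[\wb_{GM}^{*T}\xb_i]_+ = y_i > 0$ can only hold when the ReLU is active and equals its pre-activation. In particular, $\wb_{GM}^*$ itself lies in the partition where every data point is activated, and it fits every label exactly.

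Next I would compute the derivative of $\tfrac12\|\wb(t)-\wb_{GM}^*\|^2$ along the flow. Using the formula for $\nabla L$ and the identity $\wb_{GM}^{*T}\xb_i=y_i$, one gets
\begin{align*}
    \frac{d}{dt}\frac{1}{2}\|\wb(t)-\wb_{GM}^*\|^2
    &= -(\wb(t)-\wb_{GM}^*)^T\nabla L(\wb(t)) \\
    &= -\sum_{i=1}^{n}\mm_{\{\wb(t)^T\xb_i>0\}}(\wb(t)^T\xb_i-y_i)(\wb(t)^T\xb_i-\wb_{GM}^{*T}\xb_i) \\
    &= -\sum_{i=1}^{n}\mm_{\{\wb(t)^T\xb_i>0\}}(\wb(t)^T\xb_i-y_i)^2 \;\le\; 0.
\end{align*}
Since the gradient flow is absolutely continuous (Proposition on well-definedness), this gives $\|\wb(t)-\wb_{GM}^*\|\le \|\wb_0-\wb_{GM}^*\|$ for every $t\ge 0$.

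Now I would argue by contradiction to prove no deactivation of $\xb_j$. Because $\xb_j^T\wb_0>0$ and $t\mapsto \xb_j^T\wb(t)$ is continuous, any deactivation of $\xb_j$ would require a first time $t^*>0$ at which $\xb_j^T\wb(t^*)=0$. At that instant $(\wb(t^*)-\wb_{GM}^*)^T\xb_j = -y_j$, and Cauchy–Schwarz together with the monotonicity of the distance yields
\begin{equation*}
    y_j \;\le\; \|\wb(t^*)-\wb_{GM}^*\|\,\|\xb_j\| \;\le\; \|\wb_0-\wb_{GM}^*\|\,\|\xb_j\|,
\end{equation*}
contradicting \eqref{eq: no deactivation condition}. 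Hence $\xb_j^T\wb(t)>0$ for all $t\ge 0$, so $\xb_j$ remains activated throughout the trajectory.

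For the second statement, if \eqref{eq: no deactivation condition} holds for every $j$, then every indicator $\mm_{\{\wb(t)^T\xb_i>0\}}$ equals $1$ for all $t\ge 0$, and the ReLU flow reduces to the linear ODE $\dot{\wb} = -\bigl(\sum_i \xb_i\xb_i^T\bigr)\wb + \sum_i y_i\xb_i$, which is exactly the gradient flow of the linear single-neuron network from the same initialization. Under \ref{asm: underparameterization} the Gram matrix $\sum_i \xb_i\xb_i^T$ is positive definite, so this linear flow converges to the unique least-squares minimizer; because $\wb_{GM}^*$ already achieves zero loss on the linear problem (all data being active), that minimizer coincides with $\wb_{GM}^*$, proving global convergence. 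The only nontrivial step is the distance-derivative identity above; everything after that is contradiction and a reduction to the linear case.
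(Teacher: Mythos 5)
Your proof is correct, and it takes a genuinely different route from the paper's. The paper solves the flow explicitly partition by partition, writing $\wb(t_l)-\wb_{GM}^*$ as a product of matrix exponentials applied to $\wb_0-\wb_{GM}^*$ and bounding $|\xb_j^T(\wb(t_l)-\wb_{GM}^*)|$ by $\|\xb_j\|e^{-mt_l}\|\wb_0-\wb_{GM}^*\|$ with $m=\min_k\lambda_{\min}^+(\Hb_k)$; this requires a somewhat delicate side argument that the relevant vectors stay in the column spaces of the successive Hessians so that the $\lambda_{\min}^+$ contraction applies. You instead observe that $L(\wb_{GM}^*)=0$ together with \ref{asm: label} forces $\wb_{GM}^{*T}\xb_i=y_i$ for every $i$, which turns $\|\wb(t)-\wb_{GM}^*\|^2$ into a Lyapunov function: its derivative along the flow is $-\sum_i\mm_{\{\wb(t)^T\xb_i>0\}}(\wb(t)^T\xb_i-y_i)^2\le 0$ in every partition, so the distance to $\wb_{GM}^*$ is non-increasing with no spectral analysis at all. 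Combined with Cauchy--Schwarz at a hypothetical first deactivation time (which exists by continuity since $\xb_j^T\wb_0>0$), this yields the same conclusion with strictly less machinery, and it is automatically robust to rank-deficient Hessians --- exactly the case the paper has to patch by hand. The only cosmetic quibble is in the last step: you invoke \ref{asm: underparameterization} to make the Gram matrix positive definite and identify the limit of the linear flow with $\wb_{GM}^*$, whereas the theorem assumes only \ref{asm: input} and \ref{asm: label}; the paper's Proposition~\ref{prop: stationary point linear} already gives convergence of the linear flow to a global minimum without full rank, so you can drop that extra assumption and nothing is lost.
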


{The existence of $\wb_{GM}^*$ that activates all data in Theorem~\ref{thm: ReLU no deactivation} is easily satisfied in the overparameterized neural network. 
If \eqref{eq: no deactivation condition} does not hold for some $1 \le j\le n$}, gradient flows of linear and ReLU networks do not coincide and the gradient flow of the ReLU network may not converge to the global minimum (see Example \ref{exmp: deactivation}).
Nonetheless, the following theorem shows that a gradient flow initialized under some conditions does not converge to a bad local minimum.

\begin{theorem}[Gradient flow does not converge to a bad local minimum]
\label{thm: gradient flow does not converge to a bad minimum} 
	Consider a gradient flow \eqref{eq: ReLU gradient flow} under the same condition of Theorem \ref{thm: ReLU no deactivation}. Let $\wb_{loc}^*$ be a local minimum and $S$ be the set of its support vectors. 
	Now suppose {the initialization point $\wb_0$} satisfies
	\begin{align}\label{eq: condition for not bad minimum}
		\max_{\xb_j \in S^c} \left[ \frac{y_j}{\norm{\xb_j}} \right] 
		\ge \norm{\wb_0-\wb_{GM}^*}.
	\end{align}
	Then, the gradient flow initialized at $\wb_0$ does not converge to $\wb_{loc}^*$.
\end{theorem}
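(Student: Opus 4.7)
The plan is to derive a contradiction by identifying a single data point $\xb_{j^\star}\in S^c$ which, under the hypothesis on $\wb_0$, must remain activated along the whole gradient flow, yet would have to become deactivated if the flow actually converged to $\wb_{loc}^*$. First I pick $j^\star$ attaining $\max_{\xb_j\in S^c} y_j/\norm{\xb_j}$, so that the hypothesis reads $y_{j^\star}/\norm{\xb_{j^\star}} \ge \norm{\wb_0-\wb_{GM}^*}$. Because $L(\wb_{GM}^*)=0$ and all $y_i>0$ by \ref{asm: label}, every input is activated at $\wb_{GM}^*$ and in particular $\xb_{j^\star}^T\wb_{GM}^*=y_{j^\star}$. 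A one-line Cauchy--Schwarz estimate then gives
\[
\xb_{j^\star}^T\wb_0 \;=\; y_{j^\star} + \xb_{j^\star}^T(\wb_0-\wb_{GM}^*) \;\ge\; y_{j^\star}-\norm{\xb_{j^\star}}\,\norm{\wb_0-\wb_{GM}^*} \;\ge\; 0,
\]
so $\xb_{j^\star}$ is activated at initialization. When this bound is strict we are exactly in the setting of Theorem~\ref{thm: ReLU no deactivation}, which therefore forces $\xb_{j^\star}^T\wb(t)>0$ along the entire trajectory.

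For the contradiction, I next use that $\xb_{j^\star}\notin S$ means $\xb_{j^\star}$ is deactivated at $\wb_{loc}^*$, i.e.\ $\xb_{j^\star}^T\wb_{loc}^* \le 0$; Lemma~\ref{lem: not on boundary lemma} upgrades this to a strict inequality $\xb_{j^\star}^T\wb_{loc}^* < 0$, since a local minimizer cannot lie on an activation boundary. If $\wb(t)\to\wb_{loc}^*$, then by continuity $\xb_{j^\star}^T\wb(t)\to\xb_{j^\star}^T\wb_{loc}^*<0$, so $\xb_{j^\star}^T\wb(t)$ must become negative at some finite time, directly contradicting the ``always activated'' conclusion of the previous paragraph.

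The subtle step I expect will need the most care is the boundary case of equality, $y_{j^\star}/\norm{\xb_{j^\star}}=\norm{\wb_0-\wb_{GM}^*}$, where Theorem~\ref{thm: ReLU no deactivation} does not apply verbatim because it requires strict inequality. To close this gap I would revisit the Lyapunov argument that underlies that theorem: as long as $\xb_{j^\star}$ is active, the dynamics \eqref{eq: ReLU gradient flow} makes $\norm{\wb(t)-\wb_{GM}^*}$ non-increasing, and in fact strictly decreasing for any $t>0$ under the rank assumption~\ref{asm: underparameterization}. At any putative deactivation time $s>0$ one then has $y_{j^\star}=|\xb_{j^\star}^T(\wb(s)-\wb_{GM}^*)|\le \norm{\xb_{j^\star}}\,\norm{\wb(s)-\wb_{GM}^*}<\norm{\xb_{j^\star}}\,\norm{\wb_0-\wb_{GM}^*}=y_{j^\star}$, a contradiction. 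Propagating this strict Lyapunov decrease across possible activation changes of \emph{other} data points before $\xb_{j^\star}$ could itself deactivate is the technical piece that demands the most bookkeeping, since the effective matrix $\Hb(\wb)$ in \eqref{eq: Hq} jumps whenever some $\xb_i$ leaves or enters the active set.
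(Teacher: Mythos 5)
Your proposal is correct and follows essentially the same route as the paper: the paper's proof is a one-liner that picks some $\xb_j \in S^c$ satisfying \eqref{eq: no deactivation condition}, invokes Theorem~\ref{thm: ReLU no deactivation} to keep it activated forever, and concludes the flow cannot reach $\wb_{loc}^*$ where that data is deactivated. Your version adds care the paper omits (activation at initialization via Cauchy--Schwarz, strictness via Lemma~\ref{lem: not on boundary lemma}, and the boundary case of equality in \eqref{eq: condition for not bad minimum}), but the core argument is identical.
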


For example, consider two local minima $\wb_1^*$ and $\wb_2^*$ with their sets of support vectors $S_1$ and $S_2$ such that $S_2 \subset S_1$. Then,
$$ \max_{\xb_j \in S_1^c} \left[ \frac{y_j}{\norm{\xb_j}} \right] 
\le \max_{\xb_j \in S_2^c} \left[ \frac{y_j}{\norm{\xb_j}} \right] 
$$
implies that \eqref{eq: condition for not bad minimum} looks more feasible for $S_2$ than $S_1$. This suggests that a gradient flow may not converge to a `bad' local minimum that has few support vectors without crucial data in the sense of {Theorem \ref{thm: more support vectors}. That is, we can say that a gradient flow does not converge to a local minimum with large loss value.}
There is another interpretation of this theorem. Since $L(\wb_{GM}^*)=0$, i.e., $y_j=\xb_j^T\wb_{GM}^*$ for all $j$, \eqref{eq: condition for not bad minimum} can be converted to
\begin{align} \label{eq: cos condition}
    \max\limits_{\xb_j \in S^c} \cos \theta_j 
    > \frac{\norm{\wb_0-\wb_{GM}^*}}{\norm{\wb_{GM}^*}}
\end{align}
where $\theta_j$ is the angle between $\xb_j$ and $\wb_{GM}^*$, i.e. $\cos \theta_j = \frac{\xb_j^T \wb^*_{GM}}{\norm{\xb_j}\cdot \norm{\wb_{GM}}}$. Then \eqref{eq: cos condition} says that a data $\xb_j$ which has large $\cos \theta_j$ value is not deactivated on the gradient flow, which is a candidate of crucial data.

In terms of Proposition \ref{prop: effect of norm}, the gradient flow with small norm tends to activate all data and the one with large norm does the opposite.
Therefore, the data which aligns well on the $\wb_{GM}^*$ easily satisfies \eqref{eq: cos condition} and would be kept activated after it is activated with the help of small norm initialization.
On the other hand, the gradient flow with large norm initialization does not allow the data to satisfy \eqref{eq: cos condition} and may converge to the minimum with few support vectors.
In the middle of two realms, there is a critical region where {$\norm{\wb_0} \approx \norm{\wb_{GM}^*}$}. If the direction of $\wb_0$ is {close with} $\wb_{GM}^*$, then all data is activated, and the gradient flow easily converge to the global minimum. {In contrast, if the direction of $\wb_0$ is far from $\wb_{GM}^*$}, the right hand side of \eqref{eq: cos condition} is much larger. Thus, we can conjecture that the convergence is sensitive {to the direction of $\wb_0$ in this critical region.}
Accordingly, our observations extends the existing results on why  the gradient flow initialized with infinitesimally small norm is preferred in the gradient flow dynamics \cite{gunasekar2017implicit, razin2020implicit, arora2019implicit, woodworth2020kernel, li2020towards}. 

 Finally, it is worth noting that even though $\zerob \in \Wc$ is a cusp, every gradient flow initialized with infinitesimally small norm converges to the same local minimum.
\begin{proposition} \label{prop: zero initialization unique}
	Consider a gradient flow \eqref{eq: ReLU gradient flow} of a single-neuron ReLU network under \ref{asm: input}, \ref{asm: label}, and \ref{asm: underparameterization}. Further suppose that the Hessian matrix $\Hb(\wb(t))$ has full rank on the gradient flow until it converges. Then, there exists $\delta>0$ such that for all $\wb_0$ with  $\norm{\wb_0}<\delta$ and $-\nabla L(\wb_0) > \zerob$, every gradient flow initialized at $\wb_0$ converges to the same point.
\end{proposition}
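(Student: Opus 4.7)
The plan is to reduce the claim to a linear ODE analysis inside the partition $P_0$ in which every training point is activated, and then propagate uniqueness across any subsequent partition transitions by continuous dependence on initial conditions. I would first argue that for $\delta$ small enough, every $\wb_0$ with $\norm{\wb_0}<\delta$ and $-\nabla L(\wb_0)>\zerob$ is drawn into $P_0$. Using \ref{asm: input} ($\xb_i\ge\zerob$) together with \ref{asm: label} and \ref{asm: underparameterization}, the coordinatewise positivity of $-\nabla L(\wb_0)$ forces the initial velocity to have positive components in every coordinate; by Proposition~\ref{prop: activation} no activated datum can deactivate at $t=0$, and integrating the ODE for a short time lands the trajectory in the open positive orthant $\{\wb:\wb>\zerob\}\subseteq P_0$.

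Inside $P_0$ the matrices $\Hb(\wb)=\Hb_0:=\sum_i\xb_i\xb_i^T$ and $\qb(\wb)=\qb_0:=\sum_i y_i\xb_i$ are constant, so \eqref{eq: ReLU gradient flow} reduces to the linear ODE $\dot{\wb}=-\Hb_0\wb+\qb_0$. The full-rank hypothesis together with \ref{asm: underparameterization} gives $\Hb_0\succ 0$, and the explicit solution
$$\wb(t)=e^{-\Hb_0 t}\wb_0+(\I-e^{-\Hb_0 t})\wb_{P_0}^*,\qquad \wb_{P_0}^*:=\Hb_0^{-1}\qb_0,$$
converges uniformly on any finite interval, as $\wb_0\to\zerob$, to the canonical curve $\wb_{\mathrm{can}}(t):=(\I-e^{-\Hb_0 t})\wb_{P_0}^*$, which is independent of the initial direction. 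When $\wb_{P_0}^*\in P_0$, every such trajectory stays in $P_0$ and converges to the same point $\wb_{P_0}^*$, finishing the argument.

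When $\wb_{P_0}^*\notin P_0$, the canonical curve $\wb_{\mathrm{can}}$ exits $P_0$ at a first time $T^*>0$ through some activation boundary $\{\wb:\xb_j^T\wb=0\}$. I would next show that this crossing is transversal: $\tfrac{d}{dt}(\xb_j^T\wb_{\mathrm{can}}(t))\big|_{t=T^*}=\xb_j^T\Hb_0 e^{-\Hb_0 T^*}\wb_{P_0}^*$ is nonzero because $\Hb_0\succ 0$ and $\wb_{P_0}^*$ strictly violates the inequality \eqref{eq: partition condition} at this face. Shrinking $\delta$ further, continuous dependence on initial data then forces every trajectory with $\norm{\wb_0}<\delta$ and $-\nabla L(\wb_0)>\zerob$ to cross the same boundary at a nearby time and enter the same adjacent partition $P_1$ at a nearby point. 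The argument is then iterated: inside each subsequent partition the same linear analysis applies under the running full-rank hypothesis, standard ODE uniqueness away from activation boundaries carries the identification forward, and since $L$ strictly decreases along a non-stationary flow and the collection of partitions is finite, only finitely many transitions occur before convergence to a common limit.

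The main obstacle is the transversality step at each crossing: one must rule out tangential exits of the canonical curve, because near a tangential exit the destination partition could depend sensitively on $\wb_0$. The full-rank hypothesis on $\Hb(\wb(t))$ throughout the flow is precisely what excludes such degeneracies, as it prevents any activation boundary from being invariant under the linearized dynamics. A secondary technical point is a uniform decay estimate on $\norm{e^{-\Hb_0 t}\wb_0}$ using $\lambda_{\min}(\Hb_0)>0$, needed to keep the family of perturbed trajectories inside a shrinking tube around $\wb_{\mathrm{can}}$ through finitely many partition transitions.
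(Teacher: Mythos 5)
Your overall architecture coincides with the paper's: both arguments funnel every admissible initialization into the all-activated partition (the paper does this via Lemma \ref{lem: all activated}), and then propagate uniqueness of the limit across a finite sequence of partition crossings by continuous dependence on initial conditions. Your canonical-curve observation --- that inside the all-activated partition the explicit solution $e^{-\Hb_0 t}\wb_0+(\I-e^{-\Hb_0 t})\wb_{P_0}^*$ collapses onto a curve independent of $\wb_0$ as $\norm{\wb_0}\to\zerob$ --- is in fact a useful sharpening: the paper builds its tube backward around one chosen reference trajectory and therefore directly covers only initializations in a small ball around that reference's starting point, whereas your observation explains why \emph{all} sufficiently small initializations with positive gradient shadow the same curve.

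There is, however, one step that fails. You claim that when $\wb_{P_0}^*\in P_0$ every small-norm trajectory stays in $P_0$ and converges there. This is exactly what breaks for $d\ge 3$: the paper's Example \ref{exmp: reactivation} starts a flow with norm $10^{-4}$ in the positive orthant, its limit is the global minimum lying in the all-activated partition, and yet the flow deactivates $\xb_4$ early and only reactivates it much later. The linear flow $\wb(t)=e^{-\Hb_0 t}(\wb_0-\wb_{P_0}^*)+\wb_{P_0}^*$ need not remain in $P_0$ merely because its endpoint lies there; the paper needs the dedicated Lemma \ref{lem: convergence to all activated partition}, which is proved only for $d=2$, to rule this out. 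The correct dichotomy is whether the canonical curve exits $P_0$ in finite time, not whether $\wb_{P_0}^*\in P_0$; and in the exit-with-$\wb_{P_0}^*\in P_0$ case your transversality justification (which rested on $\wb_{P_0}^*$ strictly violating \eqref{eq: partition condition} at the exit face) no longer applies, so tangential exits are not excluded. Note that the conclusion of the proposition is not contradicted by the example --- only your intermediate claim is --- so the repair is to restructure the case analysis around ``exits vs.\ does not exit'' and re-derive transversality at each actual crossing. To be fair, the paper's own proof also silently assumes each crossing is transversal and that only finitely many crossings occur before convergence; you at least identify transversality as the central obstacle rather than passing over it.
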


\subsection{Norm increasing property}
In Appendix \ref{sec: linear}, we show
 that linear regression has implicit biases that the gradient flow initialized at zero converges to the minimum norm solution, with strictly increasing its norm until it converges. In Theorem~\ref{thm: norm increasing}, we extend this norm increasing property to single-neuron ReLU networks.
By the {balancedness property} of deep ReLU networks shown by \citet{du2018algorithmic}, the norm of each intermediate layers increase together. Therefore, if one intermediate layer and its backpropagated labels defined in Appendix \ref{sec: label backpropagation} satisfies the conditions of Theorem \ref{thm: norm increasing}, then every layer has norm increasing property together.
Also note that this norm increasing property does not contradict with the result of \citet{vardi2021implicit}, since it does not need to converge to the minimum norm solution.
We provide such example in Example \ref{exmp: deactivation}.

Although the conditions for Theorem \ref{thm: norm increasing} look complicated,
for the special case of $d=2$, the condition becomes trivial
as there exists a special structure in the partitioned parameter space. More precisely, there is an `order' for activation and deactivation, as the following lemma states.
\begin{lemma}[Ordering of partitions in $\Rd^2$]
\label{lem:ordering}
	Consider the partitioned parameter space $\Wc=\Rd^2$ under \ref{asm: input}. Then we can impose an relative order between partitions. In particular, for any two partitions $P_1$ and $P_2$ in the 2nd (or 4th) quadrant, either $\{\xb_i ~|~ \xb_i \sim P_1 \} \subset \{ \xb_i ~|~ \xb_i \sim P_2 \}$ or $\{\xb_i ~|~ \xb_i \sim P_1 \} \supset \{ \xb_i ~|~ \xb_i \sim P_2 \}$ holds.
\end{lemma}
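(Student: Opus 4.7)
The plan is to work in polar coordinates, exploiting the fact that under \ref{asm: input} every $\xb_i$ lies in the closed first quadrant. Write $\xb_i = r_i(\cos\theta_i, \sin\theta_i)$ with $\theta_i \in [0, \pi/2]$ and $r_i > 0$. Then the activation boundary $\{\wb : \wb^T \xb_i = 0\}$ is a line through the origin whose direction makes angle $\theta_i + \pi/2$ with the positive $x$-axis, so it lies entirely in the (closed) union of the 2nd and 4th quadrants. Consequently, the interior of the 1st (resp.\ 3rd) quadrant is a single partition, while the 2nd (resp.\ 4th) quadrant is sliced into angular sectors by the lines $\phi = \theta_i + \pi/2$ for those $\theta_i \in (0, \pi/2)$; axis-aligned data with $\theta_i \in \{0, \pi/2\}$ produce boundaries that coincide with a coordinate axis and therefore do not further subdivide these quadrants.

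Next, parameterize any $\wb$ in the open 2nd quadrant by its polar angle $\phi \in (\pi/2, \pi)$ (its radius is irrelevant for activation, since activation is scale-invariant). From
\begin{align*}
\wb^T \xb_i = \|\wb\|\, r_i \cos(\phi - \theta_i),
\end{align*}
the datum $\xb_i$ is activated at $\wb$ iff $\cos(\phi - \theta_i) > 0$, which for $\phi \in (\pi/2, \pi)$ and $\theta_i \in [0, \pi/2]$ reduces to the single inequality $\phi < \theta_i + \pi/2$. Hence the activation set at $\wb$ depends only on $\phi$, and as $\phi$ increases from $\pi/2$ toward $\pi$, the activation set is monotonically \emph{shrinking} under inclusion: each time $\phi$ crosses a threshold $\theta_i + \pi/2$, the corresponding $\xb_i$ becomes deactivated.

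Finally, every partition $P$ contained in the open 2nd quadrant is the preimage under $\wb \mapsto \phi$ of an open subinterval $(\alpha_k, \alpha_{k+1})$ delimited by consecutive thresholds. For any two distinct 2nd-quadrant partitions $P_1, P_2$, either all angles in $P_1$ are smaller than all angles in $P_2$ or vice versa; the monotonicity above then immediately yields the required nesting $\{\xb_i : \xb_i \sim P_1\} \supset \{\xb_i : \xb_i \sim P_2\}$ or the reverse. The 4th-quadrant case is handled by the symmetric parameterization $\phi \in (-\pi/2, 0)$ with the condition $\phi > \theta_i - \pi/2$, giving the same linear order on partitions.

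I do not anticipate any real obstacle: the polar-coordinate change of variables converts the lemma into a one-dimensional monotonicity statement about intervals on the unit circle. The only subtlety worth handling carefully is the axis-aligned data with $\theta_i \in \{0, \pi/2\}$, whose activation boundaries coincide with a coordinate axis and therefore neither split the 2nd nor the 4th quadrant into multiple partitions.
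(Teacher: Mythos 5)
Your proof is correct and follows essentially the same approach as the paper's: both exploit positive homogeneity to reduce the partition structure of $\Rd^2$ to an angular (cyclic) ordering on $S^1$, with the observation that under \ref{asm: input} all activation boundaries fall in the 2nd and 4th quadrants. Your polar-coordinate version is in fact somewhat more explicit at the key step — deriving the single threshold inequality $\phi < \theta_i + \pi/2$ that makes the activation set monotonically nested in $\phi$ — where the paper merely asserts that each successive partition is obtained by activating or deactivating data from the previous one.
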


This provides some useful geometric insights to understand learning dynamics of gradient flow. 
We start with introducing an interesting lemma. 

\begin{lemma}[No revisit lemma for $d=2$] \label{lem: no revisit}
	Consider a single-neuron ReLU network with $d=2$ under \ref{asm: input}, \ref{asm: label}, and \ref{asm: underparameterization}. Then a gradient flow initialized with infinitesimally small norm does not re-activate any deactivated data.
\end{lemma}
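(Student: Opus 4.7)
My approach would be to track, for each data point $\xb_j$, the scalar $u(t) := \xb_j^T \wb(t)$ and show that once $u$ has become negative it stays negative until the flow converges. By Lemma~\ref{lem: all activated}, the flow begins in the partition $P_0$ where every data point is active, so $u(0) > 0$ for each $\xb_j$. Suppose $\xb_j$ is first deactivated at time $t_1$ and the flow moves from $P_0$ to a new partition $P_1$. From the identities $\Hb_{P_1} = \Hb_{P_0} - \xb_j \xb_j^T$, $\qb_{P_1} = \qb_{P_0} - y_j \xb_j$, and the boundary condition $\xb_j^T \wb(t_1) = 0$, a one-line computation yields
\begin{align*}
    \dot u(t_1^+) \;=\; -\xb_j^T \nabla L_{P_1}(\wb(t_1)) \;=\; \dot u(t_1^-) \;-\; y_j\,\norm{\xb_j}^2,
\end{align*}
which by \ref{asm: label} is strictly less than $\dot u(t_1^-) \le 0$. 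So just after $t_1$ the flow strictly moves off the activation boundary of $\xb_j$.

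The next ingredient is the ordering Lemma~\ref{lem:ordering}. Because every $\xb_i$ lies in the first closed quadrant by \ref{asm: input}, the deactivated half-space of each $\xb_i$ is contained in the union of the 2nd and 4th quadrants. Hence $P_1$ sits in the 2nd (or 4th) quadrant, and every partition visited later while $u<0$ also sits in the same single quadrant and is related to $P_1$ by the linear active-set ordering. Consequently, each subsequent partition transition $P^{(k)} \to P^{(k+1)}$ must deactivate one further data point $\xb_{j_{k+1}}$ (never re-activate $\xb_j$). At any such transition, a computation analogous to the one at $t_1$ yields the jump
\begin{align*}
    \dot u\big|^+_{P^{(k+1)}} \;-\; \dot u\big|^-_{P^{(k)}} \;=\; -\,y_{j_{k+1}}\,\bigl(\xb_j^T \xb_{j_{k+1}}\bigr),
\end{align*}
which is non-positive because $y_{j_{k+1}} > 0$ by \ref{asm: label} and $\xb_j^T \xb_{j_{k+1}} \ge 0$ by \ref{asm: input}. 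So $\dot u$ can only decrease (weakly) at partition switches on the $\xb_j$-deactivated side.

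To close the argument I would proceed by contradiction: assume a first re-activation time $t_2 > t_1$ exists, so $u(t_2) = 0$ and $\dot u(t_2) \ge 0$. Combining the strict negativity at $t_1^+$ with the non-positive jumps across all intermediate partition transitions, it remains only to rule out that $\dot u$ becomes non-negative \emph{within} a single partition $P^{(k)}$. Inside $P^{(k)}$ the dynamics is the affine-linear ODE $\dot\wb = -\Hb_k(\wb - \wb^*_{P^{(k)}})$ with $\Hb_k$ symmetric positive semi-definite, so the scalar $\dot u(t) = -\xb_j^T \Hb_k (\wb(t) - \wb^*_{P^{(k)}})$ is a superposition of at most two real decaying exponentials in $d=2$ and hence changes sign at most once along any single-partition orbit.

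The main obstacle will be to rule out that single remaining possible sign change. My plan is to leverage the planar nested geometry of Lemma~\ref{lem:ordering}: if $\dot u$ were to vanish inside some $P^{(k)}$ with $u$ still strictly negative, the non-oscillatory real-spectrum structure of the linear flow forces the trajectory to exit $P^{(k)}$ through the activation boundary of some other data $\xb_{j_{k+1}}$ before it can return to the boundary of $\xb_j$. This places the flow in a deeper-nested partition $P^{(k+1)}$, where the jump estimate above again leaves $\dot u$ strictly negative. Iterating this structural step across all partitions visited on $(t_1, t_2)$ should give $\dot u < 0$ throughout, contradicting $\dot u(t_2) \ge 0$ and proving the lemma.
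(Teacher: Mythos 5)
Your jump computations are correct: at the deactivation of $\xb_j$ the one-sided derivative of $u(t)=\xb_j^T\wb(t)$ drops by exactly $y_j\norm{\xb_j}^2>0$, and at a later deactivation of $\xb_{j_{k+1}}$ it drops by $y_{j_{k+1}}(\xb_j^T\xb_{j_{k+1}})\ge 0$ under \ref{asm: input} and \ref{asm: label}. This is a genuinely different route from the paper, which instead deduces no-revisit from the norm-increasing property (Lemma~\ref{lem: norm-increasing 2D}) together with the threshold $\alpha^*$ of Proposition~\ref{prop: effect of norm}: once $\norm{\wb(t)}$ has strictly exceeded the norm at which $\xb_j$ was deactivated, the gradient keeps pointing into the deactivated half-space. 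However, your argument has a genuine gap precisely at the step you label as a ``plan.'' Within a single partition $P^{(k)}$ the function $\dot u$ is a combination of two real exponentials and so changes sign at most once, but nothing you have written rules out that one sign change. Concretely, if the virtual minimizer $\wb^*_{P^{(k)}}$ lies outside $P^{(k)}$ on the \emph{activated} side of $\xb_j$ (i.e.\ $\xb_j^T\wb^*_{P^{(k)}}>0$, which is exactly the situation of Figure~\ref{fig: example of terminology}(c)), the affine flow inside $P^{(k)}$ is attracted toward a point with $u>0$, so $\dot u$ will turn positive and drive $u$ back to $0$ unless the trajectory first exits $P^{(k)}$ through the activation boundary of some \emph{other} datum. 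Your claim that the ``non-oscillatory real-spectrum structure'' forces such an exit is asserted, not proved; establishing it is essentially equivalent to the hyperrectangle/norm-increase analysis the paper carries out in Lemma~\ref{lem: norm-increasing 2D}, so as written the proof is incomplete at its decisive step.

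A secondary, fixable issue: your claim that every transition between $t_1$ and $t_2$ is a deactivation does not follow from Lemma~\ref{lem:ordering} alone (the ordering does not prevent the flow from moving backward through the nested partitions), and a re-activation of some other $\xb_m$ on $(t_1,t_2)$ would contribute a jump of $+y_m(\xb_j^T\xb_m)\ge 0$ with the wrong sign. You can repair this by letting $t_2$ be the first re-activation time of \emph{any} datum and taking $\xb_j$ to be the datum re-activated there, so that all intermediate transitions are indeed deactivations; but this repair does nothing to close the main gap above.
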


With this lemma, we can obtain the global convergence and norm-increasing property in $\Rd^2$. 

\begin{theorem}[Global convergence for $d=2$] \label{thm: 2D input}
	Consider a single-neuron ReLU network with $d=2$ under \ref{asm: input}, \ref{asm: label}, and \ref{asm: underparameterization}. Then, a nontrivial ($\qb(\wb_0) >\zerob $) gradient flow initialized with infinitesimally small norm converges to the global minimum with strictly increasing its norm.
\end{theorem}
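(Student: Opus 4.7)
The plan is to synthesize the paper's previous pieces -- the all-activated initialization lemma (Lemma \ref{lem: all activated}), the no-revisit lemma (Lemma \ref{lem: no revisit}), the two-dimensional ordering (Lemma \ref{lem:ordering}), the partition-minimum characterization (Proposition \ref{prop: ReLU solution}), the support-vector comparison (Theorem \ref{thm: more support vectors}), and the norm-increasing theorem (Theorem \ref{thm: norm increasing}) -- into a single chain-traversal argument. Small-norm initialization forces the flow to begin in the partition where every datum is active; the $d=2$ ordering forces subsequent partitions to form a strictly shrinking chain; and Theorem \ref{thm: more support vectors} pins down the chain's terminal partition as hosting the global minimum.

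First I would invoke the small-norm assumption: for $\norm{\wb_0}$ small enough, every $\xb_i^T \wb_0 > 0$, so $\wb_0$ lies in the full-activation partition $P_0$ (Lemma \ref{lem: all activated}). The hypothesis $\qb(\wb_0) > \zerob$ makes the initial drift $-\nabla L(\wb_0) \approx \qb(\wb_0)$ strictly positive componentwise, sending the flow into the interior of $P_0$. Inside $P_0$ the dynamics read $\dot{\wb} = -\Hb\wb + \qb$ with unique sink $\wb_{P_0}^* = \Hb^{-1}\qb$ (assumption \ref{asm: underparameterization} with $d=2$ makes $\Hb$ invertible). If $\wb_{P_0}^* \in P_0$, Proposition \ref{prop: ReLU solution} makes it a local minimum whose support equals $\{1,\dots,n\}$, so Theorem \ref{thm: more support vectors} upgrades it to the global minimum, and Theorem \ref{thm: norm increasing} supplies strict norm-increase.

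The substantive case is $\wb_{P_0}^* \notin P_0$, where the flow exits $P_0$ by crossing some activation boundary (Proposition \ref{prop: activation}); without loss of generality it enters the second quadrant. By Lemma \ref{lem: no revisit} no deactivated datum is ever reactivated, and by Lemma \ref{lem:ordering} the second-quadrant partitions are totally ordered by support inclusion. Hence the flow traces a uniquely determined chain $P_0 \supset Q_0 \supset Q_1 \supset \cdots$ of strictly shrinking support sets. In each $Q_i$ the dynamics are linear with target $\wb_{Q_i}^*$: either $\wb_{Q_i}^* \in Q_i$ and the flow stops there (a local minimum by Proposition \ref{prop: ReLU solution}), or the flow proceeds to $Q_{i+1}$. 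Finiteness of the chain (at most $n$ deactivations) forces termination at some index $k$. For globality \emph{along the chain}, any local minimum appearing as $Q_j$ must satisfy $j\ge k$ by minimality of $k$, so Theorem \ref{thm: more support vectors} on the comparable pair of supports gives $L(\wb_{Q_j}^*) \ge L(\wb_{Q_k}^*)$. Theorem \ref{thm: norm increasing} applied segment-by-segment, together with continuity of $\wb(t)$ at the transitions, promotes strict norm-increase across the whole trajectory.

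The main obstacle is comparing $\wb_{Q_k}^*$ against local minima that live in the opposite half-plane, which the flow never visits. My plan is to exploit two facts: (i) $L$ is monotonically non-increasing and continuous along the flow, so $L(\wb_{Q_k}^*) \le L(\wb_0)$; and (ii) the sign pattern of $\wb_{P_0}^*$ (guaranteed by a short positive-semidefinite argument on $\Hb^{-1}\qb$ with nonnegative $\qb$ to sit in the first, second, or fourth quadrant, never the third) dictates which quadrant the flow enters, namely the one aligned with the unconstrained least-squares solution. A competing local minimum in the opposite quadrant would require deactivating a subset of data whose linear-regression optimum lies on the wrong side, producing a loss strictly larger than $L(\wb_{Q_k}^*)$ by a direct comparison using Theorem \ref{thm: ReLU get smaller loss} and the ordering of Lemma \ref{lem:ordering}. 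If this cross-quadrant comparison proves delicate, Proposition \ref{prop: zero initialization unique} offers a fallback: uniqueness of the infinitesimal-initialization attractor forces the flow's terminal point to coincide with the global minimum regardless of the sign choice.
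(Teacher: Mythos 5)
Your skeleton is the paper's: enter the all-activated partition via Lemma~\ref{lem: all activated}, use Lemma~\ref{lem: no revisit} and the ordering of Lemma~\ref{lem:ordering} to force a finite, strictly shrinking chain of support sets, stop in the first partition that contains its own virtual minimizer, and invoke Theorem~\ref{thm: more support vectors} for optimality. The step that does not close as written is the norm increase. You propose to apply Theorem~\ref{thm: norm increasing} ``segment-by-segment,'' but that theorem is conditional: it requires \ref{B1}--\ref{B4} at every boundary crossing, and nothing in the hypotheses of Theorem~\ref{thm: 2D input} grants these (\ref{B2}, for instance, is a quantitative inequality on the label of the crossed datum that can fail). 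The paper instead proves an unconditional two-dimensional statement, Lemma~\ref{lem: norm-increasing 2D}: the function $g(\wb)=-\tfrac12\tfrac{d}{dt}\norm{\wb}^2$ is continuous across activation boundaries, and a convexity comparison between the trajectory, the norm-increasing ellipse of each partition, and the asymptotic eigendirection $\eb_2$ keeps $g<0$ until convergence. Note also that in the paper this norm-increase lemma is what \emph{proves} Lemma~\ref{lem: no revisit} (a datum deactivated at norm $\alpha^*\norm{\wb_0/\norm{\wb_0}}$ cannot reactivate once the norm stays above that level), so you cannot take no-revisit as an independent input and then derive norm-increase from it; the logical order is the reverse, and without reproducing the geometric argument your norm-increase claim is unsupported.

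The cross-quadrant obstacle you flag is a genuine subtlety, but neither of your proposed resolutions works. Theorem~\ref{thm: ReLU get smaller loss} compares the ReLU global minimum to the \emph{linear} one and gives no handle on two ReLU local minima with incomparable support sets; and Proposition~\ref{prop: zero initialization unique} only says that all small-norm, positive-gradient initializations share a limit --- it does not identify that limit with the global minimum, so it cannot serve as a fallback. The paper's own proof is terse on exactly this point: it concludes from Lemma~\ref{lem: norm-increasing 2D} that the flow converges in the first stationary-point-containing partition it meets and then asserts maximality of the support set via Lemma~\ref{lem:ordering}, which orders partitions only within one quadrant. If you want a fully airtight argument you would need to rule out a lower-loss local minimum in the quadrant the flow never enters; but in any case the route to take is the paper's within-quadrant chain argument, not the two comparisons you sketch.
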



\section{Experiments} \label{sec: experiments}
In this section, we provide some empirical examples for the results we proposed theoretically. 
 Detail of these experiments is described in Appendix \ref{app: experiments}. 

{
We first observe the effect of the initialization norm.} Recall that
Theorem \ref{thm: norm increasing} and \ref{thm: 2D input} consider gradient flows initialized with infinitesimally small norm. {It is shown in the following example that the condition of small norm initialization is necessary. }

\begin{figure}[ht!]
	\centering
	\begin{subfigure}[b]{0.45\textwidth}
        \centering
        \includegraphics[width=\textwidth]{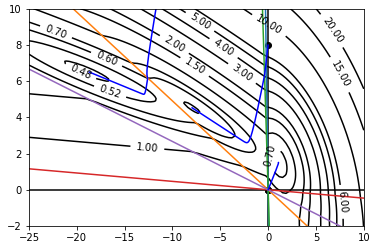}
        \caption{}
    \end{subfigure}
    \hfill
	\begin{subfigure}[b]{0.45\textwidth}
        \centering
        \includegraphics[width=\textwidth]{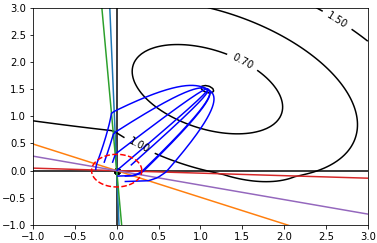}
        \caption{}
    \end{subfigure}
	\caption{
	Loss landscape of single-neuron ReLU networks.
	In (a), loss level curves in $\Wc=\Rd^2$ (black curves) and gradient flows with three distinct initialization points (blue curves). 
	Boundaries of partitions are drawn by colored solid lines. Note that the gradient flow initialized with infinitesimally small norm converges to the global minimum, while other gradient flows converge to local minima. See Proposition \ref{prop: effect of norm} and Theorem \ref{thm: 2D input}.
	{In (b), gradient flows initialized with $\norm{\wb_0}<0.3$ are displayed. As shown in Proposition \ref{prop: zero initialization unique} and Theorem \ref{thm: 2D input}, they converge to the same point, which is the global minimum.}
	}
	\label{fig: initialized at zero is important}
\end{figure}

\begin{example}[Initialization with infinitesimally small norm is necessary] \label{exmp: initialization with infinitesimally small norm}
	Consider a single-neuron ReLU network with $d=2$ and $n=5$. Detail data setting is described in Appendix \ref{subsec: exmp deactivation}. In Figure~\ref{fig: initialized at zero is important}, the level curves of the loss function \eqref{eq: cost ReLU} and gradient flows initialized with three different points are plotted by the black and the blue curves, respectively. The initialization points are 
	denoted by black points (one point is out of scope). 
	{
	With regard to Proposition \ref{prop: effect of norm}, Figure~\ref{fig: initialized at zero is important} illustrates that gradient flows of small norm tends to activate data and converge to local minima with many support vectors, while ones of large norm does the opposite.
	In addition, considering Proposition \ref{prop: zero initialization unique} and Theorem \ref{thm: 2D input}, we can observe the gradient flows initialized with infinitesimally small norms converge
	to the same point {which is the} global minimum (by Theorem \ref{thm: more support vectors}), while other gradient flows deactivate some data and converge to local minima.
	This {shows how} the convergence of gradient flows depends on the norms of initial points.
	}
\end{example}

The next example exhibits the case where the assumption of Theorem \ref{thm: ReLU no deactivation} does not hold, thus a data can be deactivated.
Moreover, this is a counter example of Theorem \ref{thm: 2D input} for $d>2$, {thus we both prove and disprove {the global convergence} for all $d \ge 2$}. 

\begin{figure}[ht!]
	\centering
	\begin{subfigure}[b]{0.4\textwidth}
        \centering
        \includegraphics[width=\textwidth]{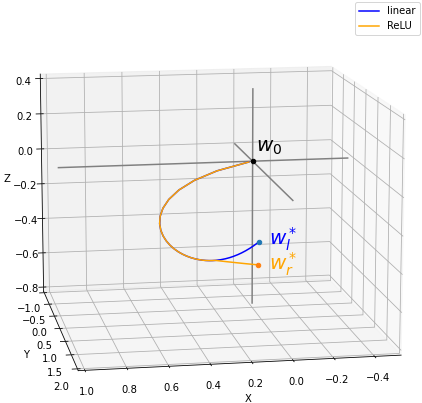}
        \caption{}
    \end{subfigure}
	\hfill
	\begin{subfigure}[b]{0.45\textwidth}
        \centering
        \includegraphics[width=\textwidth]{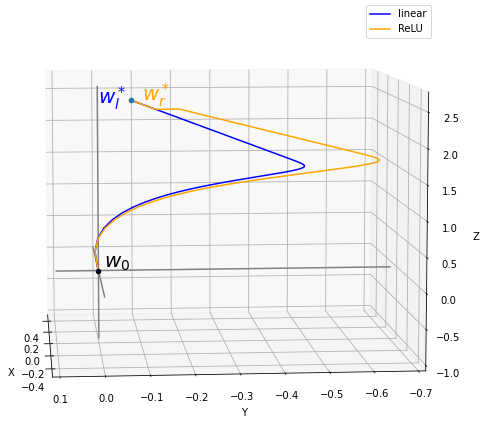}
        \caption{}
    \end{subfigure}
	\caption{Gradient flows in Example \ref{exmp: deactivation} and Example \ref{exmp: reactivation}.
	In (a), gradient flows of single-neuron linear and ReLU networks in Example \ref{exmp: deactivation} are plotted. Note that they coincide at the beginning, but bifurcate after one data deactivated. 
	In (b), gradient flows of single-neuron linear and ReLU networks in Example \ref{exmp: reactivation} are plotted. Although there is deactivation and gradient flows are distinct, they converge to the same point, which shows a reactivation.
	}
	\label{fig: examples of results}
\end{figure}

\begin{example}[Deactivation of single-neuron linear and ReLU network] \label{exmp: deactivation} 
    Consider a single-neuron ReLU network for $n=d=3$, with data $\{(\xb_i, y)\}_{i=1}^3$ is given in Appendix \ref{subsec: exmp deactivation}.
    For each $\xb_i$, define $h_i(\wb):=\wb^T\xb_i$. Then $\{\wb\in\Wc ~|~ h_i(\wb)=0\}$ is the activation boundary of $\xb_i$, and $\{\wb\in\Wc ~|~ h_i(\wb)=y_i\}$ is the solution hyperplane of $\xb_i$.	
    There is a unique global minimum for both linear and ReLU networks, which is in the all-activated partition. 
	However, we can observe that the gradient flow of the ReLU network deactivates $\xb_1$ during training. Note that gradient flows of the linear and ReLU networks coincide first, but bifurcate after deactivation of $\xb_1$ (See Figure \ref{fig: examples of results}(a)). Since $\xb_1$ is not reactivated again until the gradient flow converges, it is not a support vector of convergent local minimum.
	See Figure \ref{fig: examples of results}(a) and Figure~\ref{fig: deactivation} for more analysis of the examples.
\end{example}

In the next example, we show re-activation may occur for $d>2$, which shows that Lemma \ref{lem: no revisit} is {the best result can be obtained under the assumptions.}

\begin{example}[Reactivation of single-neuron linear and ReLU networks, for $d>2$] \label{exmp: reactivation} 
	Consider a single-neuron ReLU network for $n=4$ and $d=3$ with data $\{(\xb_i, y)\}_{i=1}^4$ which is given in Appendix \ref{subsec: exmp reactivation}.
	 For each $\xb_i$, define $h_i(\wb):=\wb^T\xb_i$ like in Example \ref{exmp: deactivation}. Then we notice that the gradient flow of the ReLU network deactivates $\xb_4$ soon, and reactivates it later (see Figure \ref{fig: reactivation}(f)). Since all data are activated at the last, gradient flows of ReLU and linear networks converge to the same point, which is the unique global minimum. Note that the trace of two gradient flows are quite different although they {coincide at the initial point and convergent points.} See Figure \ref{fig: examples of results} (b). 
    See Figure~\ref{fig: reactivation} for the detail result.
\end{example}

\section{Conclusion and future work}
Understanding implicit bias of gradient descent has been an important goal in machine learning research. In this paper, 
we investigated implicit bias of gradient flow dynamics in single-neuron ReLU networks with square loss and provided following observations.
First, we showed the implicit bias of gradient flows in terms of support vectors of ReLU networks to answer why and how ReLU networks generalize well.
Second, we revealed an implicit bias of gradient flow dynamics with respect to the norm of initialization. Specifically, we provided an initialization condition when a gradient flow keeps some data activated. Using this, we showed that a gradient flow with some condition on initialization does not converge to bad local minima.
Third, we extended the norm-increasing property of single-neuron linear networks to single-neuron ReLU networks under some conditions, revealing another implicit bias of gradient flows.
{Finally, for a special case $d=2$, we proved the global convergence of the gradient flow.}

\paragraph{Limitations.} 
This work is not free of limitations. 
It is worth to seek conditions so that Theorem \ref{thm: 2D input} can be generalized to guarantee the global convergence in higher dimension $d>2$.
Second, perhaps using the label-backpropagation we proposed in Appendix~\ref{sec: label backpropagation}, we should generalize this result in deep ReLU networks. 

\newpage
\bibliographystyle{icml2022.bst}
{\small
\bibliography{reference.bib}
}

\newpage
\appendix
\section*{\huge Appendix}

\section{Preliminaries} \label{sec: preliminaries}

For a square matrix $\Ab$, matrix exponential $e^\Ab$ is defined by
$$ e^\Ab := \sum_{m=0}^{\infty} \frac{1}{m!}\Ab^m.
$$
For a positive semidefinite matrix $\Ab$, if $\Ab$ is not invertible, we define its pseudo-inverse
$$ \Ab^\dagger := \sum_k \frac{1}{\lambda_k} \eb_k\eb_k^T,
$$ 
where $(\lambda_k, \eb_k)$ are positive eigenvalues and corresponded eigenvectors of $\Ab$. We use $\norm{\cdot}_F$ to denote Frobenius norm of a matrix. $\odot$ denotes elementwise multiplication between two vectors. For a finite set $S$, $|S|$ means the number of elements of $S$. We denote $B_r (\xb) := \{\yb: \norm{ \yb-\xb} < r\}$. 

\section{Multi-neuron ReLU networks}\label{sec: multineuron}
{
In this section, we prove that the result of single neuron ReLU networks can be directly extended to sigle-layer multineuron case. Let $f_\Wb(\xb) = [\Wb\xb]_+$ be a single layer multineuron case with input dimension $d_{in}$ and output dimension $d_{out}$. Let $\wb_1^T, \wb_2^T, \cdots, \wb_{d_{out}}^T$ be the row vectors of $\Wb$.
Then, the loss function \eqref{eq: cost ReLU} becomes
\begin{align*}
    L(\Wb) &= \frac12 \sum_{i=1}^n \norm{[\Wb\xb_i]_+ -\yb_i}^2 \\
    &= \frac12 \sum_{i=1}^n \sum_{j=1}^{d_{out}} ([\wb_j^T\xb_i]_+ -(\yb_i)_j)^2 \\
    &= \frac12 \sum_{j=1}^{d_{out}} \sum_{i=1}^n ([\wb_j^T\xb_i]_+ -(\yb_i)_j)^2.
\end{align*}
Thus for each row vector $\wb_j$, its gradient is given by
\begin{align*}
    \frac{\partial L}{\partial \wb_j } = \sum_{i=1}^n ([\wb_j^T\xb_i]_+-(\yb_i)_j) \xb_i,
\end{align*}
which is equivalent to the loss of single-neuron ReLU networks.
Therefore, a single-layer multi-neuron ReLU network is just a set of independent $d_{out}$ single-neuron ReLU networks, and the result of single-neuron ReLU networks (Lemma \ref{lem: not on boundary lemma}, Theorem \ref{thm: ReLU get smaller loss}, Theorem \ref{thm: gradient flow does not converge to a bad minimum}, Theorem \ref{thm: norm increasing}, Lemma \ref{lem: no revisit}, Theorem \ref{thm: 2D input}) can be directly generalized to single-layer multi-neuron ReLU networks.
}

\section{Label-backpropagation of deep ReLU networks} \label{sec: label backpropagation} 
In this section, we show how single-neuron linear and ReLU networks form building blocks of deep ReLU networks.
\begin{figure*}[ht!]
	\centering
	\includegraphics[width=0.8\linewidth]{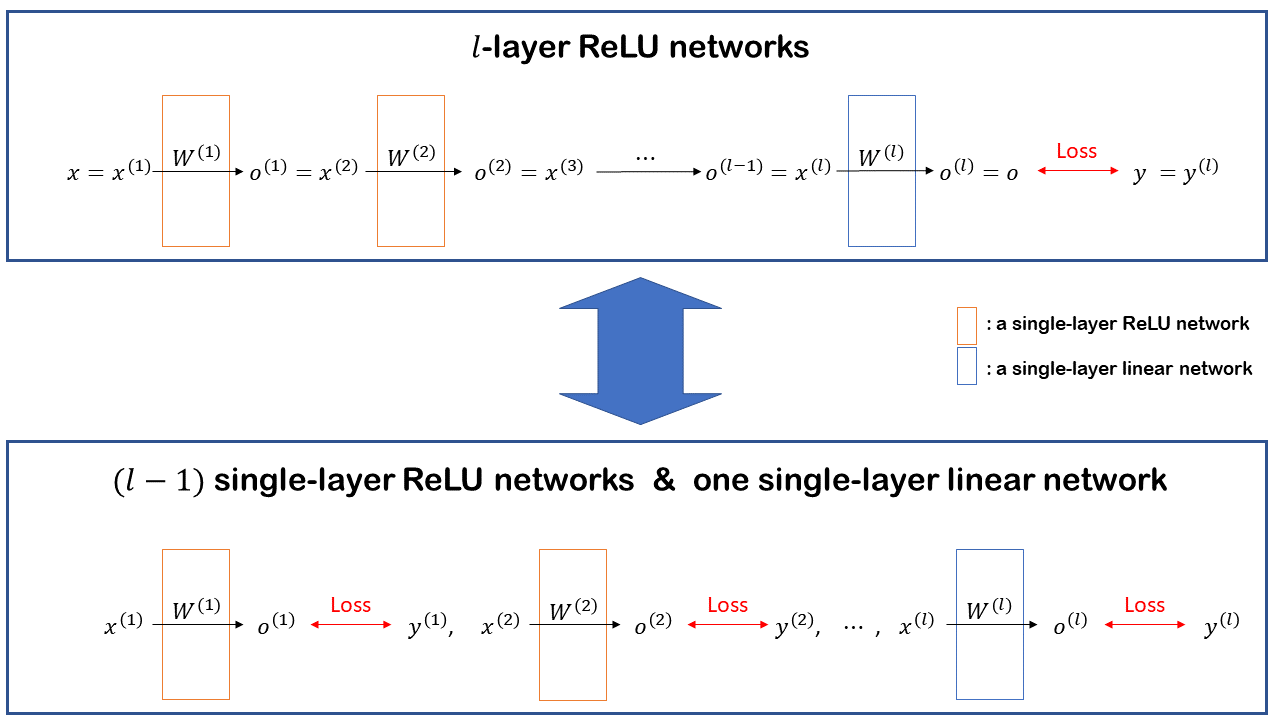}
	\caption{Label-backpropagation. For a given data pair $(\xb,\yb)$, the gradient of an $l$-layer ReLU network is equivalent to the gradient of $(l-1)$ single-neuron ReLU networks and one single-neuron linear network, where data pair is given by Proposition \ref{prop: label backpropagation}.}
	\label{fig: Label backpropagation}
\end{figure*}
\begin{proposition}[Label-backpropagation] \label{prop: label backpropagation}
	Consider an $l$-layer ReLU network $f(\cdot \; ; \W{1}, \cdots, \W{l})$ with one data pair $(\bfx, \yb)$ under square loss. Let $\x{i}$ and $\o{i}$ be input and output of $i$-th layer $ (i=1,2,\cdots,l-1)$, which are defined by
	\begin{align*}
	    \o{i} &:= [\W{i}\x{i}]_+ \\
	    \x{i} &:= \o{i-1}
	\end{align*}
	where $\o{0} := \xb$, $\x{l} := \o{l-1}$ and $\o{l}:=\W{l}\x{l}$.
	Define $\d{l} := \o{l}-\y{l}$ and $\y{l} := \yb$. Now for $m=1,2,\cdots,l-1$, recursively define $\d{m}$ and backpropagated-label $\y{m}$ by
	\begin{align*}
		\d{m} &:= \mm_{\{ \x{m+1}>\zerob \}} \odot \WT{m+1}\d{m+1}, \\
		\y{m} &:= \o{m}-\d{m}.
	\end{align*}
	Then the gradient of $\W{i}$ of the $l$-layer ReLU network given by the data pair $(\xb, \yb)$ is the same with the gradient of $\W{i}$ of a single-neuron ReLU network $\f{i}$ given by the data pair $(\x{i}, \y{i})$ for $i=1,2,\cdots,l-1$. 
\end{proposition}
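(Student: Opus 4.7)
The plan is to compute both gradients in closed form and show they agree, proceeding by downward induction on the layer index. First I would establish, via the chain rule and the ReLU subgradient convention $\partial[t]_+ = \mm_{\{t>0\}}$ adopted in \eqref{eq: ReLU gradient flow}, the standard backpropagation identity
\[
\nabla_{\W{i}}\,\tfrac{1}{2}\|\o{l}-\yb\|^2 \;=\; \d{i} (\x{i})^T,
\]
where $\d{l} = \o{l}-\yb$ and, for $m = l-1,\ldots,1$, we have $\d{m} = \mm_{\{\x{m+1}>\zerob\}} \odot \WT{m+1}\d{m+1}$. The induction step is routine: the Jacobian of $\o{m+1}$ with respect to $\o{m}$ factors as $\mathrm{diag}(\mm_{\{\x{m+1}>\zerob\}})\,\W{m+1}$, and composing this with the inductive sensitivity at layer $m+1$ delivers the claimed recursion.

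Next, for the auxiliary single-neuron ReLU problem $\tfrac{1}{2}\|[\W{i}\x{i}]_+ - \y{i}\|^2$ at the data pair $(\x{i},\y{i})$, the same convention gives
\[
\nabla_{\W{i}}\,\tfrac{1}{2}\|[\W{i}\x{i}]_+ - \y{i}\|^2 \;=\; \bigl(\mm_{\{\W{i}\x{i}>\zerob\}} \odot ([\W{i}\x{i}]_+ - \y{i})\bigr)(\x{i})^T.
\]
Using $[\W{i}\x{i}]_+ = \o{i}$ and $\y{i} = \o{i}-\d{i}$, the residual collapses to $\o{i}-\y{i} = \d{i}$, and the activation indicator simplifies to $\mm_{\{\W{i}\x{i}>\zerob\}} = \mm_{\{\o{i}>\zerob\}} = \mm_{\{\x{i+1}>\zerob\}}$. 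The key observation closing the argument is that this factor $\mm_{\{\x{i+1}>\zerob\}}$ is already absorbed into $\d{i}$ by the recursive definition (for every $i\le l-1$), hence $\mm_{\{\o{i}>\zerob\}} \odot \d{i} = \d{i}$, and the single-neuron gradient matches the deep-network gradient $\d{i}(\x{i})^T$ on the nose.

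The only technical subtlety, and the place where care is required, will be the coordinates on which $\o{i}_k = 0$: there, both the deep-network mask and the single-layer mask vanish at the same index, so the product is $0$ regardless of which subgradient in $[0,1]$ is chosen, provided the choice is consistent throughout. The promotion from a single scalar output to the single-layer multi-neuron form of $\f{i}$ is immediate from the row-wise decomposition in Appendix~\ref{sec: multineuron} (each row is an independent scalar ReLU problem with label the corresponding coordinate of $\y{i}$), and for the terminal linear layer the analogous statement holds trivially with $\d{l} = \o{l}-\yb$ in place of the ReLU residual.
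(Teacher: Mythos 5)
Your proposal is correct and follows essentially the same route as the paper's proof: establish the backpropagation identity $\nabla_{\W{i}}L = \d{i}\xT{i}$ for the deep network (which the paper simply cites rather than re-deriving by induction), then compute the single-neuron gradient and observe that the residual collapses to $\d{i}$ via $\y{i}=\o{i}-\d{i}$ while the activation mask $\mm_{\{\x{i+1}>\zerob\}}$ is already absorbed into $\d{i}$ by its recursive definition. Your explicit remark about the subgradient choice at coordinates where $\o{i}_k=0$ is a small refinement the paper glosses over, but it does not change the argument.
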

\begin{proof} \allowdisplaybreaks
	An $l$-layer ReLU network $f(\cdot \; ; \W{1}, \cdots, \W{l})$ is modeled by
	\begin{align*} 
	    f(\xb \;; \W{1}, \cdots\!, \W{l}) = 
	    \W{l} [ \cdots \W{2}[\W{1}\xb]_+ \cdots ]_+
	\end{align*}
	
	For a given data pair $(\xb, \yb)$, the square loss function is defined by
	$$ L( \W{1}, \cdots\!, \W{l} ) \!=\! \frac12 \norm{f(\xb \;; \W{1}, \cdots\!, \W{l})-\yb}_F^2.
	$$
	Then, the gradient of $\W{m}$ is computed by
	\begin{align}
	    \frac{\p L}{\p \W{l}} &= \d{l} \xT{l}, \label{eq: last layer} \\
	    \frac{\p L}{\p \W{m}} &= \d{m} \xT{m} \quad \text{for}\quad  m=1,2,\cdots, l-1. \label{eq: intermediate layers}
	\end{align}
	See \citet{calin2020deep} for a detail derivation. Now we focus on the last layer $\W{l}$. Consider a single-neuron linear network $\f{l}(\cdot \; ; \W{l})$ with a given data pair $(\x{l}, \y{l})$. The square loss provides gradient of $\W{l} $ by
	\begin{align*}
	    \frac{\p}{\p \W{l}} \frac12 \norm{\f{l}(\x{l} \; ; \W{l})-\y{l}}_F^2
	    &= \frac{\p}{\p \W{l}} \frac12 \norm{\W{l}\x{l} -\y{l}}_F^2 \\
	    &= (\W{l}\x{l} - \y{l})\xT{l}\\
	    &= \d{l}\xT{l},
	\end{align*}
	which is equal to \eqref{eq: last layer}. Therefore, the single-neuron linear network $\f{l}(\cdot \; ; \W{l})$ with a given data pair $(\x{l}, \y{l})$ provides the same gradient \eqref{eq: last layer} for $\W{l}$.
	
	Similarly, we can apply this argument for intermediate layers. For $m=1,\cdots,l-1$, consider a single-neuron ReLU network $\f{m}(\cdot\;;\W{m})$ with a data pair $(\x{m}, \y{m})$. Then, the gradient of $\W{m}$ is computed by
	\begin{align*}
	    \frac{\p}{\p \W{m}} \frac12 \norm{\f{m}(\x{m} \; ; \W{m})-\y{m}}_F^2
	    &= \frac{\p}{\p \W{m}} \frac12 \norm{[\W{m}\x{m}]_+ -\y{m}}_F^2 \\
	    &=  \left(\mm_{\{\W{m}\x{m} > \zerob \}}\odot (\W{m}\x{m} - \y{m})\right) \xT{m} \\
	    &= \left(\mm_{\{[\W{m}\x{m}]_+ > \zerob \}}\odot ([\W{m}\x{m}]_+ - \y{m})\right)\xT{m} \\
	    &= \left( \mm_{\{\o{m} > \zerob \}}\odot (\o{m} - \y{m})\right) \xT{m} \\
	    &= \left( \mm_{\{\x{m+1} > \zerob \}}\odot (\o{m} - \y{m})\right) \xT{m} \\
	    &= \d{m}\xT{m},
	\end{align*}
	which is \eqref{eq: intermediate layers}. Therefore, the single-neuron ReLU network $\f{m}(\cdot\;;\W{m})$  with a data pair $(\x{m}, \y{m})$ provides the same gradient \eqref{eq: intermediate layers} for $\W{m}$.
	
	To sum up, gradient of the $l$-layer ReLU network $f$ can be equivalently obtained from $(l-1)$ single-neuron ReLU networks $\f{m}$ with $m=1,2,\cdots,l-1$ and one single-neuron linear network $\f{l}$.
\end{proof}
{
This proposition means that training $l$-layer ReLU networks can be understood as training $(l-1)$ single-neuron ReLU networks and one linear network. See Figure \ref{fig: Label backpropagation}.

\begin{lemma}[\citet{du2018algorithmic}] \label{lem: balancedness}
    For an $l$-layer ReLU network, on a gradient flow, the difference of Frobenius norm of weight matrices of adjoined layers is invariant, i.e.,
    $$ \frac{d}{dt} \left( \norm{\W{m}}_F^2 - \norm{\W{m+1}}_F^2 \right) = 0. $$
\end{lemma}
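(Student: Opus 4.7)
The plan is to show that both $\frac{d}{dt}\norm{\W{m}}_F^2$ and $\frac{d}{dt}\norm{\W{m+1}}_F^2$ reduce to the same scalar quantity, namely $-2\,\d{m+1}^T \W{m+1} \x{m+1}$, so that their difference is stationary. The conceptual driver is the positive $1$-homogeneity of ReLU, which says $[\W{m}\x{m}]_+ = \x{m+1}$ and therefore makes the Euler-type identity $\langle\W{m},\partial L/\partial\W{m}\rangle = \langle\W{m+1},\partial L/\partial\W{m+1}\rangle$ available along the flow.

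First I would differentiate directly along the gradient flow:
\begin{align*}
\frac{d}{dt}\norm{\W{m+1}}_F^2
= 2\,\tr\!\Bigl(\W{m+1}\bigl(\tfrac{d\W{m+1}}{dt}\bigr)^{\!T}\Bigr)
= -2\,\langle \W{m+1},\partial L/\partial\W{m+1}\rangle_F.
\end{align*}
Using the label-backpropagation formula (Proposition \ref{prop: label backpropagation}), $\partial L/\partial \W{m+1} = \d{m+1}\,\xT{m+1}$, hence the right-hand side equals $-2\,\d{m+1}^T\W{m+1}\x{m+1}$. This handles layer $m+1$.

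For layer $m$, the same identity gives $\frac{d}{dt}\norm{\W{m}}_F^2 = -2\,\d{m}^T\W{m}\x{m}$, and the crux of the proof is to rewrite this as $-2\,\d{m+1}^T\W{m+1}\x{m+1}$. Writing the inner product coordinatewise,
\begin{align*}
\d{m}^T \W{m}\x{m}
= \sum_k (\d{m})_k\,(\W{m}\x{m})_k
= \sum_k \mm_{\{(\x{m+1})_k>0\}}\bigl(\WT{m+1}\d{m+1}\bigr)_{k}\,(\W{m}\x{m})_k,
\end{align*}
where I substituted $(\d{m})_k = \mm_{\{(\x{m+1})_k>0\}}(\WT{m+1}\d{m+1})_k$. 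On the support of the indicator, $\x{m+1}=[\W{m}\x{m}]_+$ yields $(\W{m}\x{m})_k = (\x{m+1})_k$; off the support, the mask kills the term while $(\x{m+1})_k = 0$ also kills the comparison term. Either way, the $k$-th summand equals $(\WT{m+1}\d{m+1})_k(\x{m+1})_k$, so the total is $\d{m+1}^T\W{m+1}\x{m+1}$, matching the layer $m+1$ computation.

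The main obstacle I expect is the non-differentiability of ReLU at the origin: strictly speaking, $\W{m}\x{m}$ and $\x{m+1}$ differ on the zero set, and the gradient flow is defined by a subgradient. The verification above resolves it cleanly because the subgradient convention $\mm_{\{t>0\}}$ forces $(\d{m})_k = 0$ precisely on the ambiguous coordinates, so both $(\d{m})_k(\W{m}\x{m})_k$ and the target $(\WT{m+1}\d{m+1})_k(\x{m+1})_k$ vanish there. Once this identification is established, combining the two derivative calculations yields
\[
\frac{d}{dt}\bigl(\norm{\W{m}}_F^2 - \norm{\W{m+1}}_F^2\bigr)
= -2\,\d{m+1}^T\W{m+1}\x{m+1} + 2\,\d{m+1}^T\W{m+1}\x{m+1} = 0,
\]
which is the claim.
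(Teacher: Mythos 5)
Your proof is correct. The paper does not actually prove this lemma---it is imported verbatim from \citet{du2018algorithmic}---and your argument is exactly the standard one from that reference: differentiate $\norm{\W{m}}_F^2$ along the flow, use $\partial L/\partial \W{m}=\d{m}\xT{m}$ from Proposition~\ref{prop: label backpropagation}, and exploit the positive homogeneity of ReLU (together with the mask in the definition of $\d{m}$, which correctly kills the coordinates where $\W{m}\x{m}$ and $\x{m+1}$ could disagree) to show $\d{m}^T\W{m}\x{m}=\d{m+1}^T\W{m+1}\x{m+1}$. The only cosmetic remark is that the gradient formulas are stated for a single data pair, so for a sum over $n$ training points one applies your identity termwise and sums; linearity of the trace makes this immediate.
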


By this property of deep ReLU networks, if an intermediate single-layer ReLU network has norm-increasing property (Theorem \ref{thm: norm increasing}), then others also have it.
This is one reason why studying a single-layer ReLU network is crucial.
}

\section{Single-neuron linear networks} \label{sec: linear}
Single-neuron linear networks are trained as follows:
\begin{align} \label{eq: cost linear}
	\argmin_{\wb \in \Wc} L(\wb) := \frac12 \sum_{i=1}^n (\wb^T\xb_i-y_i)^2,
\end{align}
Then, the gradient flow initialized at $\wb_0$ is given by solution of the following differential equation 
\begin{align} \label{eq: linear gradient flow} 
	\frac{d\wb(t)}{dt} = - \nabla_\wb L = -\Hb\wb + \qb,
	\qquad \wb(0)=\wb_0.
\end{align}
where $\Xb=\begin{bmatrix} \xb_1 & \cdots & \xb_n \end{bmatrix}$ and
$$\Hb:=\sum\limits_{i=1}^n \xb_i \xb_i^T=\Xb\Xb^T,\quad \qb:=\sum\limits_{i=1}^n y_i\xb_i=\Xb\yb.$$

Since $\Hb = \sum\limits_{i=1}^n \xb_i \xb_i^T$ is positive semidefinite, $L$ is convex with respect to $\wb$. Therefore, the set of minima of \eqref{eq: cost linear} is equal to the set of stationary points of $L$.

\begin{proposition}[The manifold of stationary points]\label{prop: stationary point linear}
	Consider the loss function $L(\wb)$ in \eqref{eq: cost linear} and suppose the data matrix $\Xb$ has rank $r$. Then,
	\begin{enumerate}
	    \item [(i)] every gradient flow converges to the global minimum.
		\item [(ii)] If $r \ge d$, the stationary point is unique, which is the global minimum given by $\wb^*=\Hb^{-1} \qb$.
		\item [(iii)] Otherwise (i.e., $r<d$), the set of stationary points forms a $(d-r)$-dimensional connected linear manifold containing $\wb^*=\Hb^\dagger \qb$.
	\end{enumerate}
\end{proposition}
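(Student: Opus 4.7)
The plan is to exploit that the objective is a convex quadratic whose gradient flow is a linear ODE with constant coefficients. First I would compute $\nabla L(\wb) = \Hb \wb - \qb$ and observe that the Hessian $\Hb = \Xb \Xb^T \succeq 0$, so $L$ is convex and the stationary-point set coincides with the set of global minimizers, which is exactly the affine solution set of $\Hb \wb = \qb$. A key preliminary observation is $\qb = \Xb \yb \in \range(\Xb) = \range(\Hb)$, guaranteeing this system is consistent regardless of $r$. Parts (ii) and (iii) then follow from linear algebra: when $r = d$ the matrix $\Hb$ is invertible, so the solution is unique and equals $\Hb^{-1}\qb$; when $r < d$, the full solution set is the affine subspace $\Hb^\dagger \qb + \Null(\Hb)$, whose dimension is $d-r$ and which is connected as a translate of a linear subspace.

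For part (i) I would integrate the ODE $\dot \wb = -\Hb \wb + \qb$ in the eigenbasis of $\Hb$. Writing $\Hb = \Ub \Lambdab \Ub^T$ with positive eigenvalues $\lambda_1, \ldots, \lambda_r$ and the rest zero, and setting $\zb(t) := \Ub^T \wb(t)$, the system decouples into independent scalar equations. For $1 \le i \le r$ the coordinate $z_i$ obeys $\dot z_i = -\lambda_i z_i + (\Ub^T \qb)_i$ and converges exponentially at rate $\lambda_i$ to $(\Ub^T \qb)_i / \lambda_i$; for $i > r$, the inclusion $\qb \in \range(\Hb)$ forces $(\Ub^T \qb)_i = 0$, so $z_i(t) \equiv z_i(0)$. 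Reassembling gives
\begin{equation*}
    \wb(t) \;\longrightarrow\; \Hb^\dagger \qb \;+\; P_{\Null(\Hb)} \wb_0,
\end{equation*}
which lies in the stationary-point manifold described in (iii) and is therefore a global minimizer, completing (i).

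The main obstacle, although mild in this purely linear setting, is the rank-deficient case: one must verify that $\qb \in \range(\Hb)$ so null-space components of $\wb(t)$ cannot drift to infinity, and one must correctly interpret $\Hb^\dagger \qb$ when $\Hb$ has a nontrivial kernel. Both hinge on the standard identity $\range(\Xb \Xb^T) = \range(\Xb)$, which I would prove in one line via $\Null(\Xb \Xb^T) = \Null(\Xb^T)$. With that identity in hand, the single eigendecomposition argument delivers convergence, uniqueness, and the manifold structure at once, so all three parts of the proposition follow from the same explicit formula for $\wb(t)$.
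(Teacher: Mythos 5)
Your proof is correct and complete. The paper itself does not spell out an argument for this proposition—it simply defers to a linear-algebra reference—so there is nothing to contrast with; your eigendecomposition of $\Hb$ and the resulting closed-form solution of the ODE are exactly the machinery the paper uses later anyway (e.g., in the proof of Proposition~\ref{prop: hyperrectangle} and the reduction principle, Theorem~\ref{thm: reduction principle}), and your observation that $\qb\in\range(\Hb)$ via $\Null(\Xb\Xb^T)=\Null(\Xb^T)$ is the one point genuinely worth making explicit in the rank-deficient case.
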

\begin{proof}
	This is a basic property of linear regression. See \citet{anton2003contemporary} for detail.
\end{proof}

\subsection{Norm-increasing property of single-neuron linear networks.}
The goal of this subsection is to prove the norm-increasing property of single-neuron linear networks.
Thanks to the reduction principle (Theorem \ref{thm: reduction principle}) which will be explained later, without loss of generality, we can assume $\Hb$ has the full rank. 

\begin{definition}[Hyperrectangle of $\Hb$] \label{def: hyperrectangle}
	Consider \eqref{eq: cost linear} and suppose the Hessian matrix $\Hb$ has full rank. For the unique global minimum $\wb^*= \Hb ^{-1}\qb$, let $\eb_1, \eb_2, \cdots \eb_d$ be the eigenvectors of $\Hb$ with direction such that $c_k^* := \eb_k^T \wb^* \ge 0$. Then, {\em the hyperrectangle of $\Hb$} is defined by
	$$ \left\{ \wb=\sum_{k=1}^d c_k\eb_k \in \Wc ~|~ 0 \le c_k \le c_k^* \right\} \subset \Wc.
	$$
\end{definition}

\begin{definition} \label{def: g(w)}
	Consider \eqref{eq: cost linear} and a gradient flow $\wb(t)$. Define a function $g(\wb):=-\wb^T \dot{\wb}$. Then, the {\em norm-increasing subset} is defined by the set
	$$ \{ \wb \in\Wc ~|~ g(\wb)<0 \} \subset \Wc. $$
\end{definition}

\begin{figure}[ht!]
	\centering
	\begin{subfigure}[b]{0.45\textwidth}
        \centering
        \includegraphics[width=\textwidth]{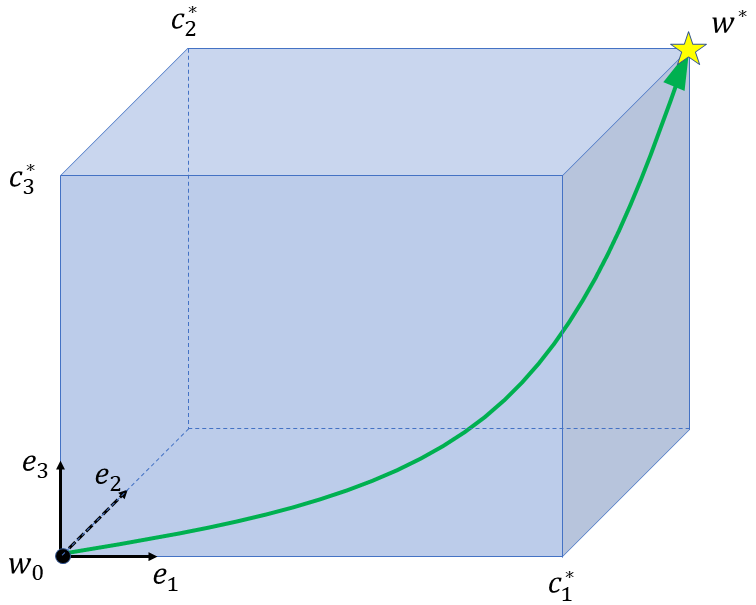}
        \caption{}
    \end{subfigure}
	\hfill
	\begin{subfigure}[b]{0.45\textwidth}
        \centering
        \includegraphics[width=\textwidth]{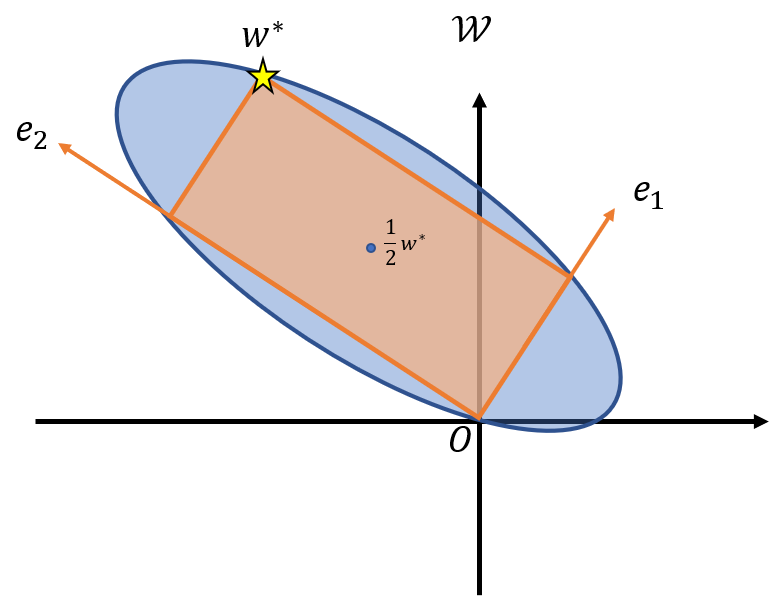}
        \caption{}
    \end{subfigure}
	\caption{Examples of hyperrectangle of $\Hb$ and norm-increasing subset. 
	(a) In $\Wc=\Rd^3$, $\eb_1, \eb_2$ and $\eb_3$ are eigenvectors of $\Hb$. The global minimum $\wb^*$ is denoted by a yellow star, and the green curve is a gradient flow initialized at $\zerob$. Note that the gradient flow converges to $\wb^*$ without escaping the hyperrectangle.
	(b) In $\Wc=\Rd^2$, a norm-increasing subset is described by the blue solid ellipse, where the hyperrectangle of $\Hb$ is described by the orange rectangle. Note that the axes of the rectangle and the norm-increasing ellipse are parallel to eigenvectors of $\Hb$, and have the same center $\frac12 \wb^*$.}
	\label{fig: hyperrectangle}
\end{figure}

The following proposition explains why the hyperrectangle of $\Hb$ and norm-increasing subset (see Figure~\ref{fig: hyperrectangle}) are key ingredients for understanding single-neuron linear networks.

\begin{proposition}[Properties of the hyperrectangle of $\Hb$ and the norm-increasing subset] \label{prop: hyperrectangle}
	Suppose we have a unique minimum $\wb^*$ for \eqref{eq: cost linear}. 
	Then the following statements hold.
	
	\begin{enumerate}
		\item The norm of a gradient flow increases if and only if it is inside the norm-increasing subset.
		\item The norm-increasing subset forms a $d$-dimensional ellipsoid, where axes of the ellipsoid are parallel to the eigenvectors of $\Hb$. 
		\item If a gradient flow initializes in the hyperrectangle of $\Hb$, it never escapes the hyperrectangle.
		\item The ellipsoid contains the hyperrectangle. In particular, their center points coincide at $\frac{1}{2}\wb^*$ and every vertex of the hyperrectangle lies on the boundary of the norm-increasing subset.
	\end{enumerate}
\end{proposition}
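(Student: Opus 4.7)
The plan is to handle the four claims in sequence by reducing everything to one completed-square identity for the quadratic form $g(\wb) = \wb^T\Hb\wb - \wb^T\qb$, expressed in the eigenbasis of $\Hb$. For claim 1, I would just differentiate along the flow: $\frac{d}{dt}\|\wb(t)\|^2 = 2\wb^T\dot\wb = -2g(\wb)$, so the squared norm is strictly increasing exactly on $\{g < 0\}$, which is the norm-increasing subset. For claim 2, since $\qb = \Hb\wb^*$ (from $\wb^* = \Hb^{-1}\qb$), completing the square gives
\begin{equation*}
  g(\wb) \;=\; (\wb - \tfrac12\wb^*)^T \Hb (\wb - \tfrac12\wb^*) \;-\; \tfrac14 (\wb^*)^T\Hb\wb^*,
\end{equation*}
so $\{g < 0\}$ is the open ellipsoid centered at $\tfrac12\wb^*$ whose principal axes are the eigenvectors of $\Hb$, with semi-axis length $\sqrt{(\wb^*)^T\Hb\wb^*/(4\lambda_k)}$ along $\eb_k$.

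For claim 3, I would diagonalize the dynamics. Writing $\Hb = \sum_k \lambda_k \eb_k\eb_k^T$ and $\wb(t) = \sum_k c_k(t)\eb_k$, and using $\qb^T\eb_k = (\Hb\wb^*)^T\eb_k = \lambda_k c_k^*$, the ODE $\dot\wb = -\Hb\wb + \qb$ decouples into the scalar equations $\dot c_k = \lambda_k(c_k^* - c_k)$, whose closed-form solution is
\begin{equation*}
   c_k(t) = c_k^* + \bigl(c_k(0) - c_k^*\bigr)\,e^{-\lambda_k t}.
\end{equation*}
If $c_k(0) \in [0, c_k^*]$, then $c_k(t)$ is monotone and stays in $[c_k(0), c_k^*] \subset [0, c_k^*]$ for all $t \geq 0$. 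Since each eigencoordinate is trapped coordinatewise, the hyperrectangle is forward-invariant under the flow.

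For claim 4, I would evaluate the identity from claim 2 on an arbitrary hyperrectangle point $\wb = \sum_k c_k \eb_k$ with $0 \leq c_k \leq c_k^*$, giving $g(\wb) + \tfrac14 (\wb^*)^T\Hb\wb^* = \sum_k \lambda_k (c_k - \tfrac12 c_k^*)^2$. The coordinatewise bound $(c_k - \tfrac12 c_k^*)^2 \leq \tfrac14 (c_k^*)^2$ then yields $g(\wb) \leq 0$, with equality precisely when $c_k \in \{0, c_k^*\}$ for every $k$. This simultaneously shows that the hyperrectangle is contained in the closed ellipsoid, that the two sets share the center $\tfrac12\wb^*$, and that the $2^d$ vertices of the hyperrectangle lie on the ellipsoidal boundary $\{g = 0\}$.

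I do not foresee a substantive obstacle, since each step is a direct computation. The one piece of bookkeeping worth stressing is the sign convention on the eigenvectors fixed by Definition~\ref{def: hyperrectangle}: each $\eb_k$ is oriented so that $c_k^* \geq 0$, which is what keeps the hyperrectangle in the ``positive orthant'' of eigencoordinates and makes the monotonicity argument in claim 3 and the coordinatewise bound in claim 4 both one-sided and clean. This orientation choice is also what lets me conclude that every hyperrectangle vertex has eigencoordinates in $\{0, c_k^*\}$ rather than in $\{\pm c_k^*\}$, matching the equality case computed above.
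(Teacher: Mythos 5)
Your proof is correct and follows essentially the same route as the paper: the derivative identity for claim 1, the completed square $g(\wb)=(\wb-\tfrac12\wb^*)^T\Hb(\wb-\tfrac12\wb^*)-\tfrac14\wb^{*T}\Hb\wb^*$ for claim 2, and the eigencoordinate closed-form solution $c_k(t)=c_k^*+(c_k(0)-c_k^*)e^{-\lambda_k t}$ for claim 3. The only (cosmetic) difference is in claim 4, where you bound $g$ directly at every hyperrectangle point via $(c_k-\tfrac12 c_k^*)^2\le\tfrac14 c_k^{*2}$, whereas the paper evaluates $g$ only at the $2^d$ vertices and invokes convexity of the hyperrectangle and the sublevel set; both are valid and yield the same equality characterization.
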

\begin{proof}
	\begin{enumerate}
		\item Consider the time derivative of $\| \wb(t) \|^2$.
		\begin{align*}
			\frac{d}{dt} \norm{\wb}^2 &= 2 \wb^T \dot{\wb} \\
			&= -2g(\wb)
		\end{align*}
		Therefore, the norm of the gradient flow $\wb$ increases if and only if $g(\wb)<0$, which means $\wb$ is inside the norm-increasing subset from Definition \ref{def: g(w)}. 
		
		\item Since $\nabla_\wb L = \Hb\wb - \qb = \Hb(\wb-\wb^*)$ by Proposition \ref{prop: stationary point linear},
		\begin{align*}
			g(\wb) &= -\wb^T \dot{\wb} \\
			&= \wb^T \Hb(\wb-\wb^*) \\
			&= (\wb-\frac12\wb^*)^T \Hb(\wb-\frac12\wb^*) -\frac14\wb^{*T}\Hb\wb^*.
		\end{align*}
		Therefore, $g(\wb)<0$ is an ellipsoid centered at $\frac12 \wb^*$, where its axes are parallel to eigenvectors of $\Hb$ by \citet{anton2003contemporary}.
		
		\item Since $\Hb\wb^*=\qb$, \eqref{eq: linear gradient flow} becomes
		\begin{align}\label{eq: linear gradient flow with w*}
			\dot{\wb} = -\Hb (\wb - \wb^*), \quad \wb(0) = \wb_0.
		\end{align}
		For $k$-th eigenvectors $\eb_k$ of $\Hb$, define $c_k^0 := \wb_0^T\eb_k$ and $c_k^* := \wb^{* T}\eb_k$.
		From \citet{zill2020advanced}, the closed form solution of \eqref{eq: linear gradient flow with w*} is given by
		\begin{align}
			\wb(t) &= e^{-\Hb t}(\wb_0 - \wb^*) + \wb^*
			\label{eq: linear gradient flow solution} \\
			&= \sum_{k=1}^{d} e^{-\lambda_k t}\eb_k \eb_k^T (\wb_0-\wb^*) + \wb^* \label{eq: linear gradient flow solution eigenvector version} \\
			&= \sum_{k=1}^{d}\bigg( e^{-\lambda_k t}\eb_k (c_k^0 - c_k^*) + c_k^* \eb_k \bigg) 
			\notag \\
			&= \sum_{k=1}^{d}\bigg( c_k^* - e^{-\lambda_k t}(c_k^* - c_k^0)   \bigg) \eb_k.
			\notag
		\end{align}
		Since $\wb_0$ is inside the hyperrectangle, we get $0\le c_k^0 \le c_k^*$. Then, $c_k^* - e^{-\lambda_k t}(c_k^* - c_k^0) \le c_k^*$ and $c_k^* - e^{-\lambda_k t}(c_k^* - c_k^0) = (1 - e^{-\lambda_k t})c_k^* +  e^{-\lambda_k t}c_k^0 \ge 0$ imply $0 \le \wb(t)^T\eb_k = c_k^* - e^{-\lambda_k t}(c_k^* - c_k^0) \le c_k^*$ for all $t \ge 0$. Therefore, the gradient flow never escapes the hyperrectangle.

		\item Since the hyperrectangle is convex, it is enough to show that the vertices of the hyperrectangle lie on the boundary of the ellipsoid.
		Let $\vb_j$ be one vertex of the hyperrectangle among $2^d$ vertices. Then, it can be represented by 
		\begin{align*}
			\vb_\jb = \sum_{k=1}^d j_k c_k^* \eb_k, \qquad j_k = 0 \text{ or } 1
		\end{align*}
		where $c_k^* = \wb^{*T}\eb_k$. Subsequently,
		\begin{align*}
			g(\vb_\jb) &= \vb_\jb^T \Hb (\vb_\jb - \wb^*) \\
			&= \left(\sum_{k=1}^d j_k c_k^* \eb_k\right)^T \left(\sum_{k=1}^d \lambda_k \eb_k \eb_k^T \right) \left(\sum_{k=1}^d (1-j_k) c_k^* \eb_k\right) \\
			&=\sum_{k=1}^d j_k (1-j_k) \lambda_k c_k^{*2} \\
			&= 0.
		\end{align*}
		Therefore, all vertices of the hyperrectangle lie on the boundary of the norm-increasing ellipsoid, thus the hyperrectangle is contained in the ellipsoid.
	\end{enumerate}
\end{proof}

We now rewrite the well-known properties of least-square linear regression in \citet{gunasekar2017implicit}.
\begin{theorem}[Norm increasing property and implicit bias of single-neuron linear networks]
    \label{thm: norm increasing linear}
	Consider \eqref{eq: linear gradient flow} and suppose $\wb_0 = \zerob$. Then,
	\begin{enumerate}
		\item The converged point is the global minimum with  the minimum Euclidean norm.
		\item The norm of the gradient flow $\norm{\wb}$ monotonically increases until it converges.
	\end{enumerate}
\end{theorem}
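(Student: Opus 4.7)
The plan is to build directly on the closed-form solution for the gradient flow already derived inside the proof of Proposition \ref{prop: hyperrectangle}, extending it to accommodate a possibly rank-deficient Hessian. Recall that for a stationary point $\wb^*$ satisfying $\Hb \wb^* = \qb$, the ODE \eqref{eq: linear gradient flow} admits the closed form $\wb(t) = e^{-\Hb t}(\wb_0 - \wb^*) + \wb^*$. The key observation when $\wb_0 = \zerob$ is that we are free to take $\wb^* = \Hb^\dagger \qb$, the particular stationary point identified in Proposition \ref{prop: stationary point linear}. Since $\qb = \Xb\yb$ lies in $\range(\Xb) = \range(\Hb)$, the identity $\Hb \wb^* = \qb$ indeed holds, so $\wb(t) = (\I - e^{-\Hb t})\wb^*$ is a legitimate solution of the ODE.

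For part $1$, I would expand this closed form in the eigenbasis of $\Hb$. Writing $\wb^* = \sum_{k : \lambda_k > 0} c_k^* \eb_k$ (the null-space components vanish because $\wb^* \in \range(\Hb)$), one obtains
\begin{equation*}
    \wb(t) = \sum_{k:\lambda_k > 0} (1 - e^{-\lambda_k t}) c_k^* \eb_k,
\end{equation*}
so $\wb(t) \to \wb^*$ as $t \to \infty$. Because $\wb^*$ is the unique stationary point orthogonal to $\Null(\Hb)$, it minimizes the Euclidean norm over the affine manifold of global minimizers described in Proposition \ref{prop: stationary point linear}, yielding the implicit bias claim.

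For part $2$, the same eigendecomposition gives the clean formula $\|\wb(t)\|^2 = \sum_{k:\lambda_k > 0} (1 - e^{-\lambda_k t})^2 (c_k^*)^2$. Each summand is a strictly increasing function of $t$ on $[0,\infty)$ whenever $c_k^* \neq 0$, so the total is non-decreasing, and strictly increasing until the flow converges, provided the trivial case $\wb^* = \zerob$ (in which the flow is identically zero) is excluded. An equivalent geometric route, if preferred for readers thinking in terms of Proposition \ref{prop: hyperrectangle}, is to note that $\zerob$ is the vertex of the hyperrectangle of $\Hb$ corresponding to all $j_k = 0$, that the flow cannot escape the hyperrectangle by item $3$ of that proposition, and that every interior point of the hyperrectangle lies strictly inside the norm-increasing ellipsoid by item $4$.

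The main obstacle is essentially cosmetic: one must handle the rank-deficient case without appealing to the inverse of $\Hb$. Everything reduces to the observation $\qb \in \range(\Hb)$, after which the pseudo-inverse $\Hb^\dagger$ behaves like an honest inverse on the relevant subspace and the eigenbasis computation proceeds as above. No new machinery beyond Propositions \ref{prop: stationary point linear} and \ref{prop: hyperrectangle} is required.
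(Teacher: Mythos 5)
Your proof is correct, but it takes a more computational route than the paper, which is worth contrasting. The paper proves part~1 by invoking the reduction principle (Theorem~\ref{thm: reduction principle}): the flow is confined to $\wb_0+\overline{\Wc}=\range(\Hb)$, so the limit lies in $\overline{\Wc}$ and every other minimizer differs from it by an orthogonal vector in $\ker(\varphi)$; and it proves part~2 geometrically, by noting that $\zerob$ is a vertex of the hyperrectangle of $\Hb$ (restricted to $\overline{\Wc}$), that the flow never escapes the hyperrectangle, and that the hyperrectangle sits inside the norm-increasing ellipsoid of Proposition~\ref{prop: hyperrectangle}. You instead write down the explicit solution $\wb(t)=(\I-e^{-\Hb t})\Hb^\dagger\qb$ and read everything off the eigenexpansion: convergence to $\Hb^\dagger\qb$, minimality of its norm from $\Hb^\dagger\qb\perp\Null(\Hb)$, and monotonicity from $\norm{\wb(t)}^2=\sum_{k:\lambda_k>0}(1-e^{-\lambda_k t})^2(c_k^*)^2$ being a sum of increasing functions. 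The two arguments rest on the same orthogonal decomposition $\Wc=\range(\Hb)\oplus\Null(\Hb)$, but your part~2 replaces the containment argument with a one-line identity, and it even handles a point the geometric argument glosses over: at the vertex $\wb=\zerob$ one has $g(\zerob)=0$, so the ellipsoid containment only yields a non-strict inequality on $\frac{d}{dt}\norm{\wb}^2$ at $t=0$, whereas your formula shows $\norm{\wb(t)}$ is nonetheless strictly increasing in $t$ near $0$ (whenever $\wb^*\neq\zerob$). The paper's approach buys reusability --- the hyperrectangle machinery is what gets extended to the ReLU case in Theorem~\ref{thm: norm increasing} --- while yours is self-contained and more elementary for the linear case.
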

\begin{proof}
	\begin{enumerate}
		\item If $\Xb$ is full rank,  there is only one stationary point so that the statement trivially holds. Now, suppose $\Xb$ is not full rank, and recall the reduction principle (Theorem \ref{thm: reduction principle}). 
		Let $\overline{\wb}^*$ be the convergent point of the gradient flow. Since the gradient flow lies on $\zerob + \overline\Wc$, $\overline{\wb}^* \in \overline\Wc$.
		Let $\wb^*$ be another minimum of $L$ in $\Wc$. By the last statement of Theorem \ref{thm: reduction principle}, $\wb^* = \overline{\wb}^* + \vb$ for some $\vb\in$ker$(\varphi)$. Then,
		\begin{align*}
			\norm{\wb^*}^2 &= \norm{\overline{\wb}^* + \vb}^2 \\
			&= \norm{\overline{\wb}^*}^2 + \norm{\vb}^2 \\
			&\ge \norm{\overline{\wb}^*}^2.
		\end{align*}
		
		Therefore, the convergent point $\overline\wb^*$ is the minimum Euclidean norm stationary point.
		
		\item Since the gradient flow is contained in $\overline\Wc$, consider the hyperrectangle of $\Hb$ in $\overline\Wc$. Since $\zerob \in \overline\Wc$, the gradient flow initializes in the hyperrectangle. By Proposition \ref{prop: hyperrectangle}, the gradient flow never escapes the hyperrectangle, which is contained in the norm-increasing subset. Therefore, the norm of the gradient flow increases until it converges $\overline{\wb}^*$.
	\end{enumerate}
\end{proof}


\section{Reduction principle}
In this section, we explain why we can assume $\Xb$ to have full rank without loss of generality. %

\begin{figure}[ht!]
	\centering
	\includegraphics[width=0.6\textwidth]{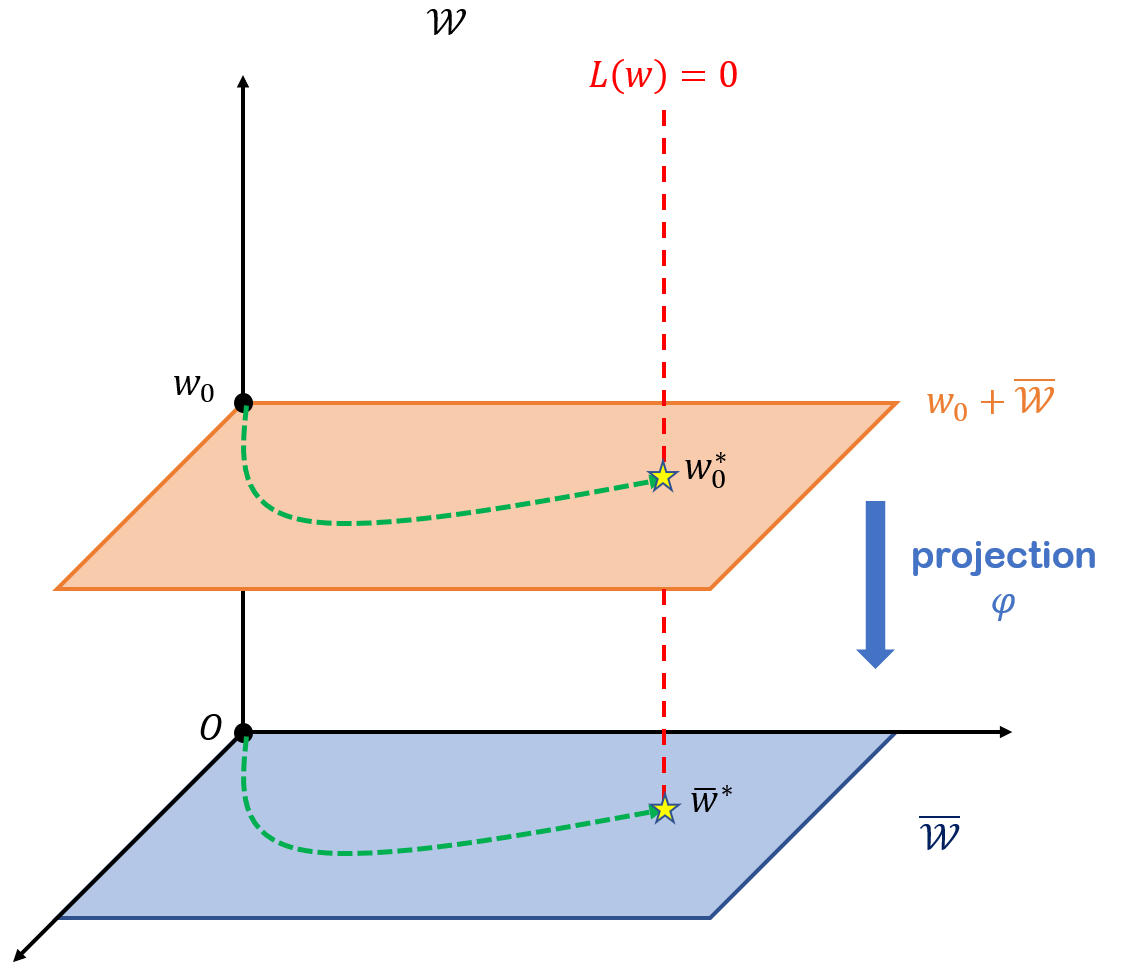}
	\caption{Reduction principle. 
		There are two gradient flows denoted by dashed green curves. Note that the trajectory of the gradient flows coincide in $\overline{\Wc}$ through the projection $\varphi$.
	}
	\label{fig: projection}
\end{figure}

\begin{theorem}[Reduction principle] \label{thm: reduction principle}
	In the last case of Proposition \ref{prop: stationary point linear}(i.e., $\Xb$ is not full rank), there exists a projection $\varphi:\Wc \rightarrow \Wc$ and 
	orthogonal decomposition $\Wc = \overline{\Wc} \oplus \textup{ker}(\varphi)$ such that 
	\begin{enumerate}
		\item for any $\overline{\wb} \in \overline{\Wc}$ and $\vb \in \textup{ker}(\varphi)$, $L(\wb) = L(\wb + \vb)$. 
		
		\item The gradient flow \eqref{eq: linear gradient flow} lies on $\wb_0 + \overline{\Wc}$.
		
		
		\item $L$ has a unique minimum $\overline{\wb}^*$ in $\overline\Wc$.
		
		\item For every minimum $\wb^*$ of $L$, $\varphi(\wb^*) = \overline{\wb}^* \in \overline{\Wc}$.
	\end{enumerate}
	In other words, the dynamics of the gradient flow $\wb(t)$ in $\Wc = \Rd^d$ can be considered in $\overline{\Wc}\cong \Rd^r$.
\end{theorem}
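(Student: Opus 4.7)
The plan is to take $\overline{\Wc} := \mathrm{Range}(\Hb) = \mathrm{Range}(\Xb)$, which is an $r$-dimensional subspace since $\Hb = \Xb\Xb^T$ has the same range as $\Xb$, and to define $\varphi$ as the orthogonal projection of $\Wc$ onto $\overline{\Wc}$. Then $\mathrm{ker}(\varphi) = \overline{\Wc}^{\perp} = \mathrm{ker}(\Xb^T) = \mathrm{ker}(\Hb)$, so we automatically obtain the orthogonal decomposition $\Wc = \overline{\Wc} \oplus \mathrm{ker}(\varphi)$. The four claims should then follow from routine linear algebra built on this identification.

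First I would verify loss invariance (1): for $\vb \in \mathrm{ker}(\varphi) = \mathrm{ker}(\Xb^T)$, each $\xb_i^T \vb = 0$, so $(\wb+\vb)^T \xb_i = \wb^T \xb_i$ and hence $L(\wb+\vb) = L(\wb)$. Next I would prove the trajectory statement (2) by observing that $-\nabla L(\wb) = \qb - \Hb\wb = \Xb(\yb - \Xb^T\wb) \in \mathrm{Range}(\Xb) = \overline{\Wc}$. Since the velocity of \eqref{eq: linear gradient flow} always lies in $\overline{\Wc}$, integrating gives $\wb(t) - \wb_0 \in \overline{\Wc}$, i.e., $\wb(t) \in \wb_0 + \overline{\Wc}$.

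For (3) I would restrict $L$ to $\overline{\Wc}$. The Hessian of $L$ is $\Hb$, and $\Hb|_{\overline{\Wc}}$ is positive definite because $\overline{\Wc} = \mathrm{Range}(\Hb)$ is orthogonal to $\mathrm{ker}(\Hb)$; thus $L|_{\overline{\Wc}}$ is strictly convex and admits a unique minimizer $\overline{\wb}^*$, characterized by $\Hb\overline{\wb}^* = \qb$ together with $\overline{\wb}^* \in \overline{\Wc}$ (concretely $\overline{\wb}^* = \Hb^{\dagger}\qb$, which lies in $\overline{\Wc}$ by the definition of the pseudo-inverse). Finally for (4): any minimizer $\wb^*$ of $L$ on $\Wc$ satisfies the stationarity condition $\Hb\wb^* = \qb$ by Proposition~\ref{prop: stationary point linear}. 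Decompose $\wb^* = \varphi(\wb^*) + \vb$ with $\vb \in \mathrm{ker}(\varphi) = \mathrm{ker}(\Hb)$. Applying $\Hb$ gives $\Hb\,\varphi(\wb^*) = \Hb\wb^* = \qb$, and since $\varphi(\wb^*) \in \overline{\Wc}$, the uniqueness established in step (3) forces $\varphi(\wb^*) = \overline{\wb}^*$.

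The main obstacle is essentially bookkeeping: making sure that the same subspace $\overline{\Wc}$ simultaneously serves as the invariant space for the flow, the space on which $\Hb$ is positive definite, and the space containing $\qb$. All three hinge on the identity $\mathrm{Range}(\Hb) = \mathrm{Range}(\Xb) \ni \qb = \Xb\yb$, which is the real content of the proof; once this is fixed, claims (1)--(4) reduce to direct computations with the orthogonal projection $\varphi$.
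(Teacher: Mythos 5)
Your proof is correct and takes essentially the same approach as the paper: both define $\varphi$ as the orthogonal projection onto $\range(\Hb)=\range(\Xb)$ (the paper writes it via the spectral decomposition of $\Hb$, which yields the same map) and verify the four claims by the same routine computations. The only cosmetic difference is in claim (4), where you invoke the stationarity equation $\Hb\wb^*=\qb$ while the paper uses the loss invariance from claim (1); both reduce to the uniqueness established in claim (3).
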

\begin{proof}
	Since $r<d$, the Hessian matrix $\Hb = \Xb \Xb^T$ has $r$ positive eigenvalues
	$$ \lambda_1 \ge \lambda_2 \ge \cdots \ge \lambda_r > \lambda_{r+1} = \cdots = \lambda_d = 0.
	$$
	Now consider spectral decomposition of $\Hb=\sum\limits_{k=1}^r \lambda_k \eb_k \eb_k^T$.
	Define a projection operator $\varphi:\Wc \rightarrow \overline{\Wc}$ by
	\begin{align*}
		\varphi~:~&\Wc \quad\longrightarrow \qquad\quad \Wc \\
		&\, \wb \quad\longmapsto\quad \sum_{k=1}^r (\wb^T\eb_k)\eb_k^T.
	\end{align*}
	Then it is easily checked that $\varphi$ is onto. Therefore, we get $\overline{\Wc}:=\varphi(\Wc) \cong \Rd^r$ and a direct sum decomposition $\Wc = \overline{\Wc} \oplus \textup{ker}(\varphi)$.
	Now we are ready to prove the statements of the theorem.
	\begin{enumerate}
		\item For any $\vb \in \textup{ker}(\varphi)$, 
		\begin{align*}
			L(\wb+\vb) &= \frac12 \sum_{i=1}^n ((\wb+\vb)^T\xb_i - y_i)^2 \\
			&= \frac12 \sum_{i=1}^n (\wb^T\xb_i + \vb^T\xb_i - y_i)^2 \\
			&= \frac12 \sum_{i=1}^n (\wb^T\xb_i - y_i)^2 \\
			&= L(\wb).
		\end{align*}
		
		\item Let $\wb_\infty$ be a stationary point of $L$. Then, $\Hb\wb_\infty = \qb$ implies $\dot{\wb}=-\nabla L(\wb) = -\Hb\wb+\qb =-\Hb(\wb - \wb_\infty) \in \textup{range}(H)$. Thus $\varphi(\dot{\wb})=\dot{\wb}$. Then, 
		\begin{align*} 
			\varphi(\wb - \wb_0) &= \varphi(\wb(t)-\wb(0)) \\
			&= \varphi (\int_0^t \dot{\wb}(s)ds) \\
			&= \int_0^t \varphi (\dot{\wb}(s))ds \\
			&=\int_0^t \dot{\wb}(s)ds \\
			&= \wb(t) - \wb(0) \\
			&= \wb - \wb_0.
		\end{align*}
		
		Therefore, $(\wb(t) - \wb_0)$ is in $\overline{\Wc}$, which means the gradient flow lies on $\wb_0 + \overline{\Wc}$.
		
		
		\item From the construction, $\overline{\Wc}=$ range$(\Hb)$. Therefore, Proposition \ref{prop: stationary point linear} gives the unique stationary point $\overline{\wb}^*$ in $\overline{\Wc}$.
		
		\item Let $\wb^*$ be a minimum of $L$. From orthogonal decomposition $\Wc = \overline{\Wc} \oplus \textup{ker}(\varphi)$, $\wb^*=\varphi(\wb^*) + (\wb^* - \varphi(\wb^*))$. By the first statement, $L(\wb^*) = L(\varphi(\wb^*) + (\wb^* - \varphi(\wb^*))) = L(\varphi(\wb^*))$. Therefore, $\varphi(\wb^*)$ is a minimum in $\overline{\Wc}$. Since $\overline{\Wc}$ has a unique minimum $\overline{\wb}^*$, we conclude $\varphi(\wb^*)=\overline{\wb}^*$.
	\end{enumerate}
	
	In particular, by (ii), we can say that the geometry of the gradient flow can be reduced to $\overline{\Wc}$. See Figure \ref{fig: projection}.
\end{proof}

\begin{corollary}[Reduction principle for ReLU networks] \label{cor: ReLU reduction principle}
	Consider a single-neuron ReLU network with $\{\xb_i\}_{i=1}^n$. Suppose $\Xb$ has rank $r<d$. Then, the dynamics of gradient flows in $\Wc=\Rd^d$ can be reduced into $\overline{\Wc} \cong \Rd^r$, by the projection $\varphi$ in Theorem \ref{thm: reduction principle}. In other words, for overparameterized setting, we can reduce it to critically determined case $\overline{d}=r$.
\end{corollary}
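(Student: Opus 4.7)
}
The plan is to mimic the structure of the proof of Theorem \ref{thm: reduction principle}, using the same projection $\varphi$ onto $\overline{\Wc}:=\mathrm{range}(\Xb\Xb^T)=\mathrm{span}(\xb_1,\ldots,\xb_n)$ and the same orthogonal decomposition $\Wc=\overline{\Wc}\oplus\mathrm{ker}(\varphi)$. The central observation is that every ``piece'' of the ReLU gradient still lies in $\overline{\Wc}$, and the loss is unchanged by adding a vector in $\mathrm{ker}(\varphi)$; these two facts let us transport each of the four conclusions of Theorem \ref{thm: reduction principle} over to the ReLU setting with essentially cosmetic modifications.

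First I would note that for any $\vb\in\mathrm{ker}(\varphi)$, $\vb^T\xb_i=0$ for all $i$ (since $\xb_i\in\overline{\Wc}$), so $\wb^T\xb_i=(\wb+\vb)^T\xb_i$ for every $i$. This immediately gives activation-pattern invariance, i.e.\ $\mathbbm{1}_{\{(\wb+\vb)^T\xb_i>0\}}=\mathbbm{1}_{\{\wb^T\xb_i>0\}}$, and hence $[\,(\wb+\vb)^T\xb_i\,]_+=[\,\wb^T\xb_i\,]_+$, so that $L(\wb+\vb)=L(\wb)$. In particular, for any subgradient choice at activation boundaries, the ReLU Hessian and vector
\[
\Hb(\wb)=\sum_{i:\wb^T\xb_i>0}\xb_i\xb_i^T,\qquad \qb(\wb)=\sum_{i:\wb^T\xb_i>0}y_i\xb_i
\]
from \eqref{eq: Hq} both have their ranges inside $\mathrm{span}(\xb_1,\ldots,\xb_n)=\overline{\Wc}$. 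Consequently $-\nabla L(\wb)=-\Hb(\wb)\wb+\qb(\wb)\in\overline{\Wc}$ for every $\wb$, which is the ReLU analogue of the key identity $\varphi(\dot\wb)=\dot\wb$ used in the linear proof.

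Next I would deduce the ``gradient flow stays on $\wb_0+\overline{\Wc}$'' statement by the same integration argument as in Theorem \ref{thm: reduction principle}(ii): since $\dot\wb(t)\in\overline{\Wc}$ for all $t$, we have $\varphi\bigl(\wb(t)-\wb_0\bigr)=\int_0^t\varphi(\dot\wb(s))\,ds=\int_0^t\dot\wb(s)\,ds=\wb(t)-\wb_0$, so $\wb(t)-\wb_0\in\overline{\Wc}$. Well-definedness of $\wb(t)$ under subgradient choices follows from Proposition \ref{prop: well-definedness}. Combined with the invariance $L(\wb+\vb)=L(\wb)$ on $\mathrm{ker}(\varphi)$, this shows that fixing the $\mathrm{ker}(\varphi)$-component of $\wb_0$ produces a bijection between gradient-flow trajectories in $\Wc$ and those obtained by restricting the dynamics to the affine slice $\wb_0+\overline{\Wc}\cong\Rd^r$. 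After further translating so that the $\overline{\Wc}$ slice passes through the origin, the restricted problem is precisely a single-neuron ReLU network in $\Rd^r$ with the same data $\{\xb_i\}$ now spanning the whole ambient space, i.e.\ the critically determined case $\overline d=r$.

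The step that needs a little care — and really the only place where the ReLU structure could have caused trouble — is checking that activation boundaries, partitions, solution hyperplanes, and the associated virtual minimizers of Proposition \ref{prop: ReLU solution} all descend cleanly to $\overline{\Wc}$. This is handled by the same $\vb^T\xb_i=0$ observation above: each set $\{\wb:\wb^T\xb_i\lessgtr 0\}$ is a union of $\mathrm{ker}(\varphi)$-fibers, so its intersection with $\overline{\Wc}$ carries the same activation label, and any minimizer of $L_P$ on $\Wc$ projects under $\varphi$ to a minimizer of the restricted $L_P$ on $\overline{\Wc}$ with the same loss. Packaging these observations together gives the claimed reduction to the critically determined case.
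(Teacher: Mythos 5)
Your proof is correct and follows essentially the same route as the paper, whose own proof simply asserts that the argument of Theorem \ref{thm: reduction principle} carries over verbatim. You helpfully make explicit the two ReLU-specific facts the paper leaves implicit — that activation patterns are invariant under translations in $\ker(\varphi)$ (since $\vb^T\xb_i=0$ for $\vb\in\ker(\varphi)$), and that $\Hb(\wb)\wb$ and $\qb(\wb)$ always lie in $\mathrm{span}(\xb_1,\ldots,\xb_n)=\overline{\Wc}$ regardless of the current partition — which is exactly what is needed for the linear argument to transfer.
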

\begin{proof}
	The proof is exactly same with the proof of Theorem \ref{thm: reduction principle}. For the last statement, overparameterized setting implies rank$(X)=r \le n <d$. 
\end{proof}
{
Thanks to this corollary, we can assume $\Xb$ has full rank without loss of generality (\ref{asm: underparameterization}).
}

\section{Proofs of Lemmas, Propositions and Theorems}
\subsection{Partitions and support vectors of single-neuron ReLU networks}

\begin{lemma} \label{lem: partition}
	Let $P_+$ and $P_-$ be two adjoined partitions, where the common boundary is determined by a data $\xb^*$. Let $P_+$ be a partition where $\xb^*$ is activated. Then,
	$$
	\{\xb \in \Xb ~|~ \xb \sim P_+\} = \{\xb \in \Xb ~|~ \xb \sim P_-\} \cup \{\xb^*\}.
	$$
\end{lemma}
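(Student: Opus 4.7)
The plan is to read off the conclusion directly from the definition of a partition as a maximal connected region of $\Wc$ with an invariant activation pattern. Since the activation status of each $\xb_i$ can only change when $\wb$ crosses the hyperplane $\{\wb : \wb^T\xb_i = 0\}$, any two adjoined partitions must differ only in the activation status of those $\xb_i$ whose associated hyperplane makes up the shared boundary.

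First I would fix a point $\wb_b$ in the relative interior of the common boundary between $P_+$ and $P_-$. By hypothesis this boundary is the piece determined by $\xb^*$, meaning that $\wb_b^T\xb^* = 0$ while for every other data point $\xb_i \neq \xb^*$ we have $\wb_b^T \xb_i \neq 0$ (otherwise the boundary would be of higher codimension or determined by additional data, contradicting that it is a single facet associated with $\xb^*$). Next I would choose small open neighborhoods $U_+ \subset P_+$ and $U_- \subset P_-$ on either side of this facet. By continuity of $\wb \mapsto \wb^T\xb_i$, for each $\xb_i \neq \xb^*$, the sign of $\wb^T\xb_i$ is locally constant at $\wb_b$, hence the same on $U_+$ and $U_-$. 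This gives the inclusion of activation sets $\{\xb : \xb \sim P_+\} \setminus \{\xb^*\} = \{\xb : \xb \sim P_-\} \setminus \{\xb^*\}$.

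Finally I would handle $\xb^*$ itself. By the assumption that $P_+$ is the side where $\xb^*$ is activated, $\wb^T\xb^* > 0$ on $P_+$. Crossing the facet at $\wb_b$ reverses the sign of $\wb^T\xb^*$, so $\wb^T\xb^* < 0$ on $P_-$, i.e.\ $\xb^* \not\sim P_-$. Combining with the previous step yields
$$
\{\xb \in \Xb \mid \xb \sim P_+\} = \{\xb \in \Xb \mid \xb \sim P_-\} \cup \{\xb^*\}
$$
as required.

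The only delicate point is justifying that the shared boundary really is a piece of exactly one hyperplane $\{\wb^T\xb^* = 0\}$ and not a lower-dimensional intersection of several. This is essentially the content of "adjoined partitions whose common boundary is determined by $\xb^*$," so once that is interpreted as giving a codimension-one facet in the hyperplane arrangement, the rest follows from continuity. The remainder of the argument is a short sign-tracking calculation.
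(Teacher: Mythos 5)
Your proof is correct and follows essentially the same route as the paper's, which simply asserts that two adjoined partitions differ in the activation status of exactly one data point ($\xb^*$) and concludes; your version just makes the continuity/sign-tracking justification of that assertion explicit. No issues.
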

\begin{proof}
\vspace{-12pt}
	Since $P_+$ and $P_-$ are adjoined, their activation patterns are different in exactly one data, which is $\xb^*$. Note that $\xb^*$ is activated in $P_+$ and deactivated in $P_-$. Therefore, $ \{\xb \in \Xb ~|~ \xb \sim P_+\} = \{\xb \in \Xb ~|~ \xb \sim P_-\} \cup \{\xb^*\}$. 
\end{proof}

\begin{lemma} \label{prop: number of partitions}
	Consider $n$ training data $\{\xb_i\}_{i=1}^n$ in $\mathbb{R}^d$ under \ref{asm: input}. Then the number of partitions in the parameter space is at most
	\begin{align} \label{eq: number of partitions}
		2\sum_{k=0}^{d-1} \begin{pmatrix}
			n-1 \\ k
		\end{pmatrix}.
	\end{align}
\end{lemma}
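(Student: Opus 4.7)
My plan is to recognize the partitions as the full-dimensional chambers of a central hyperplane arrangement and apply the classical Schläfli-type counting bound. Associate to each datum $\xb_i$ the hyperplane $H_i := \{\wb\in\Rd^d : \wb^T\xb_i = 0\}$, which is well-defined because \ref{asm: input} ensures $\xb_i\neq \zerob$. Two parameter vectors lying in the same partition must produce the same activation pattern, so they belong to the same connected component of $\Rd^d\setminus\bigcup_{i=1}^n H_i$ (boundary points being assigned to the deactivated side per the paper's convention). Hence the number of partitions is at most the number of open regions of the central arrangement $\{H_1,\ldots,H_n\}$, and the task reduces to bounding this quantity, which I denote $r(n,d)$.

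Next I would prove by induction that $r(n,d)\le 2\sum_{k=0}^{d-1}\binom{n-1}{k}$. The base cases are $n=1$, where a single hyperplane through the origin produces exactly $2$ regions, and $d=1$, where any union of hyperplanes through $0\in\Rd$ reduces to $\{0\}$ and again gives $2$ regions; both match the formula. For the inductive step, I start from the arrangement $\{H_1,\ldots,H_{n-1}\}$ and adjoin $H_n$. Each newly created region corresponds to exactly one full-dimensional chamber of the induced arrangement $\{H_i\cap H_n\}_{i<n}$ in $H_n$, and the latter is itself a central arrangement of at most $n-1$ hyperplanes in $H_n\cong\Rd^{d-1}$. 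Consequently
\[
r(n,d)\le r(n-1,d)+r(n-1,d-1).
\]
Applying the inductive hypothesis to both summands together with Pascal's identity $\binom{n-1}{k}=\binom{n-2}{k}+\binom{n-2}{k-1}$ gives
\[
r(n,d)\le 2\sum_{k=0}^{d-1}\binom{n-2}{k}+2\sum_{k=0}^{d-2}\binom{n-2}{k}=2\sum_{k=0}^{d-1}\binom{n-1}{k},
\]
which closes the induction.

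The main obstacle is the geometric sub-claim that adjoining $H_n$ contributes exactly as many new regions as there are chambers of the induced arrangement on $H_n$: a chamber of the old arrangement is split into two by $H_n$ precisely when $H_n$ crosses its interior, and the resulting trace on $H_n$ is then one chamber of the induced arrangement. Degenerate configurations---parallel or coincident $\xb_i$ yielding repeated $H_i$, or induced hyperplanes failing to be in general position inside $H_n$---can only reduce the region count, so the upper bound persists. Note that \ref{asm: input} enters only through $\xb_i\neq\zerob$; positivity of the coordinates plays no role in the counting, which is why the bound depends solely on $n$ and $d$.
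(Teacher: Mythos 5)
Your proof is correct, but it takes a different route from the paper's. The paper first quotes Zaslavsky's bound $\sum_{k=0}^{d}\binom{n}{k}$ for a general affine arrangement and then specializes to the central case by a geometric trick: it picks one hyperplane $H$ of the arrangement, introduces two parallel translates $H_+$ and $H_-$ on either side, observes that every chamber of a central arrangement is an unbounded cone and therefore meets $H_+$ or $H_-$, and counts the chambers of the induced $(d-1)$-dimensional arrangement of the remaining $n-1$ hyperplanes on each translate, giving $2\sum_{k=0}^{d-1}\binom{n-1}{k}$. You instead prove the bound from scratch by the standard deletion--restriction induction on $n$: the recursion $r(n,d)\le r(n-1,d)+r(n-1,d-1)$ plus Pascal's identity closes the induction, with the base cases $n=1$ and $d=1$ each giving $2$. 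Your argument is self-contained and does not rely on citing Zaslavsky, and your explicit treatment of degeneracies (repeated hyperplanes, non-generic induced arrangements) is more careful than the paper's one-line appeal to "general position"; the paper's version is shorter but leans on an external theorem and on the unstated fact that conic chambers must meet one of the two translates. Both arguments identify partitions with chambers of the central arrangement $\{\wb:\wb^T\xb_i=0\}$ and neither dwells on the convention that activation boundaries are absorbed into the deactivated side, so you match the paper's level of rigor on that point.
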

\begin{proof}[Proof of Lemma~\ref{prop: number of partitions}]
    Zaslavsky's Theorem tells  that a general $n$ hyperplanes in $\Rd^d$ generate at most $ \sum\limits_{k=0}^{d} \begin{pmatrix} n \\ k \end{pmatrix} $ partitions. From here, we induce the case for central hyperplanes. Consider a hyperplane $H$ in $\Wc$, and define two parallel hyperplanes $H_+$ and $H_-$ such that $H$ is between them. Every partition is on one side of $H$ or another, so it either intersects with $H_+$ or $H_-$. Since either of $H_+$ or $H_-$ is $(d-1)$ dimensional, it is divided by other $n-1$ hyperplanes into at most $ \sum\limits_{k=0}^{d-1} \begin{pmatrix} n-1 \\ k \end{pmatrix} $ partitions. Therefore, the total number of partitions in $\Wc$ is at most $ 2\sum\limits_{k=0}^{d-1} \begin{pmatrix} n-1 \\ k \end{pmatrix} $.
\end{proof}

\begin{proposition}[Basic properties of single-neuron ReLU networks] 
\label{prop: basic properties}
	For single-neuron ReLU networks under \ref{asm: input}, the parameter space has the following properties.
	\vspace{-3mm}
	\begin{enumerate}\setlength{\itemsep}{-1mm}
		\item Every partition is convex and unbounded. In particular, if $\wb \in P$ for a partition $P$, then $\alpha \wb \in P$ for any scalar $\alpha>0$ (i.e., $P$ is conic). Similarly, $\Hb(\wb)$ and $\qb(\wb)$ in \eqref{eq: Hq} are invariant under multiplication by a positive constant.
		\item If there is a nonpositive label $y_i \le 0$, a function $f_i(\wb)=\frac12 ([\wb^T\xb_i]_+ - y_i)^2$ is convex with respect to $\wb$.
		In general, if there are $m$ positive labels among $n$ data, the number of partitions that contain a local minimum is at most $ 2\sum\limits_{i=0}^{d-1} \begin{pmatrix}
			m-1 \\ i
		\end{pmatrix}+1$.
	\end{enumerate}
\end{proposition}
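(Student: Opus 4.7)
For Part 1, I would characterize each partition $P$ as an intersection of origin-passing half-spaces: fixing an activation index set $I \subseteq \{1, \ldots, n\}$,
\[
P = \bigcap_{i \in I}\{\wb \in \Wc : \wb^T \xb_i > 0\} \;\cap\; \bigcap_{i \notin I}\{\wb \in \Wc : \wb^T \xb_i \le 0\}.
\]
Each half-space is convex and closed under positive scaling, so the intersection inherits convexity and is a cone; a nonempty cone is unbounded. Since $\Hb(\wb)$ and $\qb(\wb)$ in \eqref{eq: Hq} depend on $\wb$ only through the activation indicators $\mathbbm{1}_{\{\wb^T\xb_i > 0\}}$, both are constant on $P$ and, in particular, invariant under $\wb \mapsto \alpha \wb$ for any $\alpha > 0$.

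For the convexity in Part 2, I would write $f_i(\wb) = g_i(\wb^T \xb_i)$ with $g_i(t) := \tfrac{1}{2}([t]_+ - y_i)^2$, which reduces the claim to showing $g_i$ is convex on $\Rd$. The function $g_i$ is the constant $\tfrac{1}{2} y_i^2$ on $t \le 0$ and the convex quadratic $\tfrac{1}{2}(t - y_i)^2$ on $t \ge 0$; at $t = 0$ the left derivative is $0$ and the right derivative is $-y_i \ge 0$, so the subdifferential is monotone nondecreasing, giving convexity of $g_i$. Composition with the linear map $\wb \mapsto \wb^T\xb_i$ then yields convexity of $f_i$.

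For the counting bound the plan is to \emph{coarsen} the full $n$-hyperplane arrangement by discarding the $n-m$ hyperplanes associated with nonpositive labels. Let $Q$ range over the coarse partitions cut out by the $m$ positive-labeled central hyperplanes; Lemma \ref{prop: number of partitions} bounds their number by $2\sum_{i=0}^{d-1}\binom{m-1}{i}$. On any such $Q$ the restricted loss splits as $L|_Q = L_+|_Q + L_-$, where $L_+|_Q$ is a single convex quadratic in $\wb$ (since the activation pattern of the positive-labeled data is constant on $Q$, contributing $\tfrac{1}{2}\sum_{i \in S^+(Q)}(\wb^T\xb_i - y_i)^2$ plus an additive constant), and $L_- := \sum_{i:\,y_i \le 0} f_i$ is convex on all of $\Wc$ by the first part of 2. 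Hence $L|_Q$ is convex on $Q$, its set of local minima is a convex subset of $Q$, and under generic/rank conditions it is a single point lying in the interior of exactly one sub-partition $P \subseteq Q$ of the full $n$-hyperplane arrangement. Counting at most one such sub-partition per $Q$, and adding $+1$ to absorb the degenerate all-deactivated partition (on which $L \equiv \tfrac{1}{2}\sum_i y_i^2$ is constant, so every point is trivially a local minimum), yields the claimed bound.

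The main obstacle is the ``single point lying in one sub-partition'' step. If the positive-labeled data activated on $Q$ fail to span $\Rd$, then the Hessian of $L_+|_Q$ is only positive semidefinite and the minimum set of $L|_Q$ can be a positive-dimensional convex region that straddles several sub-partitions of $Q$. A fully rigorous argument therefore requires either a genericity/perturbation hypothesis on the data, or a finer boundary analysis exploiting the piecewise structure of $L_-$ across sub-partition boundaries to show that at most one sub-partition \emph{strictly} contains a local minimum (with others merely sharing boundary points). The additive $+1$ in the statement is itself a hint that some residual accounting at such boundary and degenerate cases is unavoidable.
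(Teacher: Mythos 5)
Your proof of part~1 and of the convexity claim in part~2 matches the paper's: the paper establishes convexity and conicity of partitions by checking that convex combinations and positive scalings preserve the activation pattern (your half-space intersection phrasing is the same fact), and it proves convexity of $f_i$ via the composition rule for convex nondecreasing functions where you verify monotonicity of the subdifferential of $g_i$ directly --- both are fine. For the counting bound you also follow the paper's strategy: coarsen to the arrangement of the $m$ positive-labeled hyperplanes, invoke Lemma~\ref{prop: number of partitions}, and use convexity of the loss restricted to each coarse piece. The one place you genuinely diverge is the bookkeeping of the $+1$: the paper attributes it to the bare assertion that adding the globally convex nonpositive-label term to a piecewise convex function ``increases the number of partitions containing a local minimum by at most $1$,'' whereas you reserve it for the all-deactivated partition, on which $L$ is locally constant so every interior point is a (non-strict) local minimum even when the minimizer of $L$ over that coarse piece lies elsewhere; your accounting is the more concrete of the two. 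The obstacle you flag --- that the minimizer set of the convex restriction $L|_Q$ can be positive-dimensional and straddle several sub-partitions of the full $n$-hyperplane arrangement --- is real, but the paper's proof does not resolve it either; it is exactly as hand-wavy at that step, so this should be read as a shared limitation of the argument rather than a defect of your proposal relative to the reference.
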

{Proposition~\ref{prop: basic properties} shows that negative labels do not affect to number of partitions which has a local minimum. This is the reason why we assume \ref{asm: label}.}

\begin{proof}[Proof of Proposition~\ref{prop: basic properties}]
    ~\vspace*{-12pt}
	\begin{enumerate}[itemindent=10mm, leftmargin=0mm]
		\item Let $P$ be a linearly partitioned region with an activation pattern $\mm_{\{\wb^T\Xb\}>0}$. Then for $\wb_1, \wb_2 \in P$ and $0 \le \lambda \le 1$, every interpolation point $\lambda\wb_1 + (1-\lambda)\wb_2$ keeps the value $\mm_{\{\wb^T\Xb>0\}}$ and thus it is contained in $P$. Similarly, for any positive scalar $\alpha>0$, $\alpha \wb$ keeps the value $\mm_{\{\wb^T\Xb\}>0}$, thus it is in $P$. Therefore, each linearly partitioned region is convex and unbounded. Finally, from \eqref{eq: Hq}, we can check that $\Hb(\alpha\wb)=\Hb(\wb)$ and $\qb(\alpha\wb)=\qb(\wb)$ for any $\alpha>0$.
		
		\item Note that ReLU is convex and $\wb^T\xb$ is affine with respect to $\wb$, thus $[\wb^T\xb]_+$ is convex again. For $y \le 0$, a function $([t]_+ -y)^2$ is non decreasing convex with respect to $t \in \mathbb{R}$. Finally, by using the fact that the composition of convex and convex nondecreasing function is again convex, we conclude that $f_i(\wb)=\frac12 ([\wb^T\xb_i]_+ - y_i)^2$ is convex if $y_i \le 0$.	
		
		Now, suppose we have $m$ positive labels among $n$ training data, i.e., $y_i>0$ for $1 \le i \le m$ and $y_i \le 0$ for $m+1 \le i \le n$.
		Consider $\sum\limits_{i=1}^m f_i(\wb)$ with partitions generated by $\{\xb_i\}_{i=1}^m$. Since it is convex on each partition, the number of partitions contain a local minimum is at most $ 2\sum\limits_{i=0}^{d-1} \begin{pmatrix} m-1 \\ i \end{pmatrix}$ by Lemma \ref{prop: number of partitions}.
		For the last $n-m$ data, by the above statement 1., we know $\sum\limits_{i=m+1}^n f_i(\wb)$ is globally convex. Considering the sum of a convex function and a piecewise convex function, the number of partitions contain a local minimum increases at most $1$.
		Therefore, the number of partitions that contain a local minimum is at most $ 2\sum\limits_{i=0}^{d-1} \begin{pmatrix} m-1 \\ i \end{pmatrix}+1$.
	\end{enumerate}
	\vspace{-1cm}
\end{proof}

\begin{proof}[Proof of Proposition \ref{prop: activation}]
    We prove $\Rightarrow)$ part first.
    If a gradient flow $\wb(t)$ deactivates a data $\xb$ at $t=s$, then $\wb(t)$ is on the activation boundary of $\xb$ at $t=s$.
    This implies the first statement $\wb(s)^T\xb=0$.
    For the second statement, from the definition of deactivation, there exists $\delta>0$ such that for $s<t<s+\delta$, $\wb(t)^T\xb<0$. Then, we get
    $(\wb(t)-\wb(s))^T\xb <0$ for $s<t<s+\delta$, which implies
    \begin{align*}
        -\nabla L(\wb(s))^T\xb 
        = (\frac{d\wb}{dt}\Big|_{t=s})^T \xb
        = (\lim\limits_{t \rightarrow s^+} \frac{\wb(t)-\wb(s)}{t-s})^T\xb
        \le 0.
    \end{align*}
    Therefore, we get $\nabla L(\wb(s))^T\xb \ge 0$. \\
    For $\Leftarrow)$ part, note that $\nabla L(\wb(s))^T\xb \ge 0$ implies $(\frac{d\wb}{dt}|_{t=s})^T \xb \le 0$, thus there exists $\delta>0$ such that  $(\wb(t)-\wb(s))^T\xb \le 0$ for $s<t<s+\delta$. Since $\xb^T\wb(s)=0$, we conclude that $\wb(t)<0$ for $s<t<s+\delta$,
    which means $\wb(t)$ deactivates $\xb$ at $t=s$.
\end{proof}

\begin{figure}[!bt]
	\centering
	\includegraphics[width=0.7\textwidth]{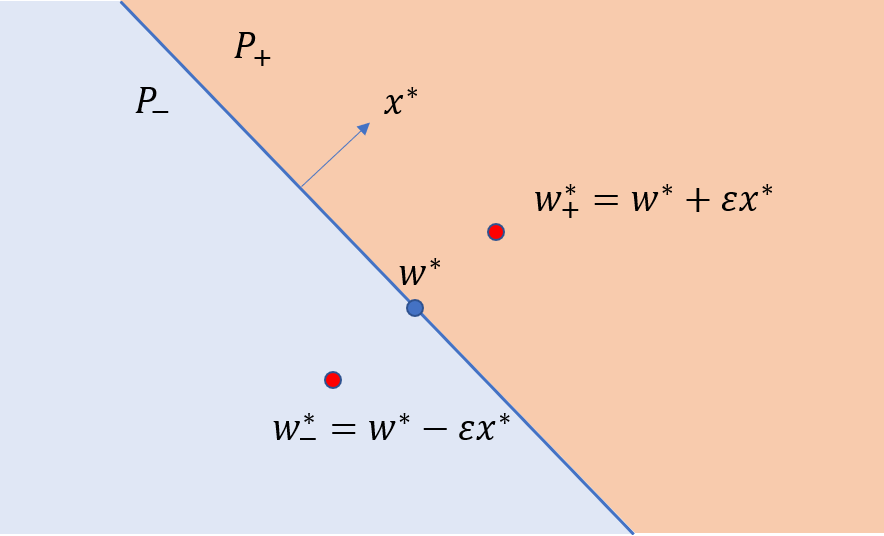}
	\caption{Proof of `Not on boundary lemma'(Lemma \ref{lem: not on boundary lemma}). A local minimum $\wb^*$ is assumed to lie on an activation boundary of $\xb^*$. Note that $\xb^*$ is activated in $P_+$, and deactivated in $P_-$. }
	\label{fig: not on boundary}
\end{figure}
\begin{proof}[Proof of Lemma~\ref{lem: not on boundary lemma}]
	We will prove this by contradiction.
	Suppose a local minimum $\wb^*$ is on an activation boundary, and let $\xb^*$ be the corresponding data of the activation boundary with the label $y^*>0$. Consider two perturbed vectors $\wb_{\pm} = \wb^* \pm \varepsilon \bfx^*$ with sufficiently small $\varepsilon>0$. 
	Let $P_+$ and $P_-$ be adjoined linearly partitioned regions, where each partition contains $\wb_{+}$ and $\wb_-$, respectively (see Figure~\ref{fig: not on boundary}).
	Then, we have
	\begin{align*}
		\nabla L(\wb_+) &= \sum\limits_{\xb_i \sim P_+}(\wb_+^T\bfx_i - y_i)\bfx_i, \\
		\nabla L(\wb_-) &= \sum\limits_{\xb_i \sim P_-}(\wb_-^T\bfx_i - y_i)\bfx_i.
	\end{align*} 
	Furthermore, by Lemma~\ref{lem: partition}, we have
	\begin{align*}
		\nabla L(\wb_+) &= \sum\limits_{\xb_i \sim P_-}(\wb_+^T\bfx_i - y_i)\bfx_i + (\wb_+^T\bfx^* - y^*)\bfx^* .
	\end{align*} 
	Since $L(\wb)$ is convex on each linearly partitioned region, following properties hold within each partition :
	\begin{align*}
		0 \ge L(\wb^*) - L(\wb_+) &\ge \nabla L(\wb_+)^T (\wb^* - \wb_+), \\
		0 \ge L(\wb^*) - L(\wb_-) &\ge \nabla L(\wb_-)^T (\wb^* - \wb_-)   
	\end{align*}
	which imply 
	$\nabla L(\wb_+)^T\bfx^* \ge 0$ and $\nabla L(\wb_-)^T\bfx^* \le 0$, respectively.
	Therefore, we get $(\nabla L(\wb_+) - \nabla L(\wb_-))^T\xb^* \ge 0$.
	Taking $\varepsilon \rightarrow 0^+$, $\wb_+$ and $\wb_-$ coincide to $\wb^*$ and 
	\begin{align*}
	    \lim_{\varepsilon \rightarrow 0^+} (\nabla L(\wb_+) - \nabla L(\wb_-))
	    &= (\wb^{*T}\bfx^*-y^*)\xb^*  \\
	    &= -y^*\xb^*
	\end{align*}
	since $\wb^{*T}\bfx^*=0$. Therefore, we get
	\begin{align*}
		0 \le [\nabla L(\wb_+) - \nabla L(\wb_-)]^T\xb^* \; \xrightarrow{\varepsilon \rightarrow 0^+} \; -y^*\norm{\xb^*}^2 < 0,
	\end{align*}
	which is a contradiction. This completes the proof.
\end{proof}

\begin{proof}[Proof of Proposition~\ref{prop: ReLU solution}]
	Since $L$ is convex on each partition, we can apply Proposition \ref{prop: stationary point linear}.
    Thus we need to check whether $\wb_P^*$ is in $P$, which is \eqref{eq: partition condition}.	
	If $r_P < d$, by Proposition \ref{prop: stationary point linear} again, the set of local minimum forms a $(d-r_P)$-dimensional connected linear manifold.
\end{proof}

\begin{proof}[Proof of Theorem~\ref{thm: ReLU get smaller loss}]
	From \eqref{eq: cost linear} and \eqref{eq: cost ReLU}, let $L_l$ and $L_r$ be the loss functions of single-neuron linear and ReLU networks, which are defined by
	\begin{align*}
		L_l(\wb) &:= \frac12 \sum_{i=1}^n (\wb^T\bfx_i - y_i)^2  \\
		L_r(\wb) &:= \frac12 \sum_{i=1}^n ([\wb^T\bfx_i]_+ - y_i)^2.
	\end{align*}
	Let $\wb_l^*$ and $\wb_r^*$ be the global minimum of $L_l(\wb)$ and $L_r(\wb)$, respectively. Let $S$ be the set of support vectors of $\wb_r^*$. Then,
	\begin{align*}
		L_l(\wb_l^*) &= \frac12 \sum (\wb_l^{* T}\bfx_i - y_i)^2 \\
		&=  \frac12 \sum_{\bfx_i \in S} (\wb_l^{* T}\bfx_i - y_i)^2 + \frac12 \sum_{\bfx_i \not\in S} (\wb_l^{* T}\bfx_i-y_i)^2 \\
		&\ge \frac12 \sum_{\bfx_i \in S} (\wb_l^{* T}\bfx_i-y_i)^2 + \frac12 \sum_{\bfx_i \not\in S} y_i^2 \\
		&= L_r(\wb_l^*) \\
		&\ge L_r(\wb_r^*)
	\end{align*}
	where the first inequality comes from $y_i > 0$ and $\wb_l^{*T}\xb_i \le 0$ for $\xb_i \not \in S$, and the last inequality comes from the fact that $\wb_r^*$ is the global minimum of $L_r$. Therefore, the global minimum of single-neuron ReLU networks achieve equal or smaller loss than global minimum of single-neuron linear networks.
\end{proof}

\begin{proof} [Proof of Theorem~\ref{thm: more support vectors}]
	It is almost same with the proof of Theorem \ref{thm: ReLU get smaller loss}.
	\begin{align*}
		L(\wb_2^*) 
		&= \min_\wb \sum_{\xb_b \in S_2} (\wb^T\bfx_b - y_b)^2 + \sum_{\xb_c \in S_2^c} (0-y_c)^2 \\
		&= \min_\wb \sum_{\xb_b \in S_2} (\wb^T\bfx_b - y_b)^2 + \sum_{\xb_c \in S_1-S_2} (0-y_c)^2 + \sum_{\xb_c \in S_1^c} (0-y_c)^2 \\
		&= \min_{\stackrel{\wb^T\bfx_c \le 0}{\text{ for } \xb_c \in S_1-S_2}} \;\; \sum_{\xb_b \in S_1} (\wb^T\bfx_b - y_b)^2 + \sum_{\xb_c \in S_1^c} (0-y_c)^2 \\
		&\ge  \min_\wb \sum_{\xb_b \in S_1} (\wb^T\bfx_b - y_b)^2 + \sum_{\xb_c \in S_1^c} (0-y_c)^2 \\
		&= L(\wb_1^*).
	\end{align*}
\end{proof}

\subsection{Weight initialization} 
\begin{proof}[Proof of Proposition~\ref{prop: effect of norm}]
    Recall the positive homogeneity described in Proposition \ref{prop: basic properties} : $\alpha \wb \in P$ for $\alpha >0$.
    From \eqref{eq: Hq}, we get
	\begin{align*}
	    -\nabla L(\alpha \wb)^T \xb_j &= \xb_j^T (-\Hb\alpha \wb +\qb) \\
	    &= -\alpha \xb_j^T \Hb \wb + \xb_j^T\qb.
	\end{align*}
	Therefore, $-\nabla L(\alpha \wb)^T \xb_j \ge 0 $ if and only if 
	$$ \alpha \le \frac{\xb_j^T\qb}{\xb_j^T \Hb \wb}
	= \alpha_j^*.
	$$
	This means that if $0<\alpha \le \min\limits_j \alpha_j^*$, $-\nabla L(\alpha\wb)^T\xb_j > 0 $ for all $j$. Similarly, if $\alpha > \max\limits_j \alpha_j^*$, $-\nabla L(\alpha\wb)^T\xb_j < 0 $ for all $j$. Thus for a gradient flow $\wb(t)$ with large enough $\norm{\wb}$, {by Proposition \ref{prop: activation},} it moves to a direction that deactivates any data. Similarly, for small enough $\norm{\wb_0}$, it moves to a direction that activates all data. See Figure \ref{fig: effect of norm}.
\end{proof}


\begin{proposition}[Well-definedness of gradient flows] \label{prop: well-definedness}
{
    Consider a single-neuron ReLU network. In the extended sense, the gradient flow defined by \eqref{eq: Hq} is well-defined and has a unique solution for a given $\wb_0$.}
\end{proposition}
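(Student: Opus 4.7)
The plan is to construct the gradient flow piecewise, partition by partition, using the partition structure of $\Wc$ induced by the activation boundaries. First, I would fix the specific subgradient representative $\indicator{t>0}$ (strict inequality), which is the convention stated in the excerpt; with this choice, the right-hand side $-\Hb(\wb)\wb+\qb(\wb)$ is defined for every $\wb\in\Wc$, and both $\Hb$ and $\qb$ are constant on the interior of each partition.

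Inside the interior of a partition $P$, the ODE reduces to the affine linear system $\dot\wb = -\Hb_P \wb + \qb_P$ with constant coefficients, so Picard--Lindel\"of yields a unique analytic solution valid as long as the trajectory stays in $\textup{int}(P)$. Let $t_1 := \sup\{t\ge 0 : \wb(s)\in \textup{int}(P_0) \text{ for all } s\in[0,t]\}$; either $t_1 = \infty$ (in which case uniqueness is immediate), or $\wb(t_1)\in\partial P_0$.

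At the boundary point $\wb(t_1)$, one or more data $\xb_{i_k}$ satisfy $\wb(t_1)^T\xb_{i_k}=0$. Proposition \ref{prop: activation} gives a deterministic criterion for whether the flow crosses into the adjacent partition where $\xb_{i_k}$ becomes deactivated (respectively activated): the crossing happens if and only if $\nabla L(\wb(t_1))^T\xb_{i_k}\ge 0$, computed with the activation pattern on the entering side. This selects a unique successor partition $P_1$, and the flow restarts there from $\wb(t_1)$ using the linear ODE on $P_1$. Iterating this construction, I obtain a sequence of partitions $P_0,P_1,\ldots$ and transition times $0<t_1<t_2<\cdots$ with a uniquely defined trajectory on each subinterval.

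The remaining issue is to rule out Zeno-type behavior, i.e., infinitely many transitions in finite time. Since there are only finitely many partitions by Lemma \ref{prop: number of partitions}, and the linear dynamics inside each partition is analytic, a given partition cannot be entered and exited infinitely often in bounded time (analyticity of the restriction $\wb(t)^T\xb_i$ forbids accumulation of its zeros). Together with monotone decrease of $L$ along the flow, the sequence of transitions is either finite or has transition times diverging to infinity. The main obstacle will be the degenerate case when $\wb(t_1)$ lies on the intersection of several activation boundaries simultaneously (codimension $\ge 2$): there multiple subsets of data may switch status at once, so the selection of the successor partition requires a careful case analysis based on applying Proposition \ref{prop: activation} jointly to all the vanishing inner products, and possibly using higher-order Taylor expansions when $\nabla L(\wb(t_1))^T\xb_{i_k}=0$ holds with equality for some indices.
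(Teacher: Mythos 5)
Your route is genuinely different from the paper's. The paper's proof is a two-line appeal to Carath\'eodory's existence theorem plus the observation that the right-hand side is Lipschitz on each partition; you instead build the solution explicitly, gluing the affine flows $\dot\wb=-\Hb_P\wb+\qb_P$ across activation boundaries and using Proposition~\ref{prop: activation} to select the successor partition. Your construction is arguably the more informative one: Carath\'eodory's theorem in its standard form handles discontinuity in $t$ but requires continuity in the state variable, so the paper's citation really stands in for exactly the piecewise argument you are making explicit. You also correctly identify the two places where the real work lies, which the paper's proof does not acknowledge at all.

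That said, two of the steps you flag are left as gaps rather than closed. First, the tangential case $\nabla L(\wb(t_1))^T\xb_{i_k}=0$ is not a technicality one can defer to ``higher-order Taylor expansions'': it is precisely the configuration in which a piecewise-smooth vector field can admit sliding motion along the boundary or fail forward uniqueness, so without an argument that the two one-sided fields cannot both point toward the boundary (or that the flow re-enters a well-defined partition after the tangency), uniqueness is not established at such points; the same applies to codimension-$\ge 2$ intersections of boundaries. Second, your Zeno argument does not quite go through as stated: analyticity of $\wb(t)^T\xb_i$ during a \emph{single} sojourn in a partition rules out accumulation of zeros within that sojourn, but if the transition times $t_k$ accumulated at some finite $T$, the flow would make infinitely many \emph{separate} visits to some partition, each governed by the same ODE but with different initial data, and analyticity of each visit separately forbids nothing. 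To close this you need an argument localized at the limit point $\wb(T^-)$ (which exists since the field is bounded on compacta), e.g.\ showing that near a point on a single activation boundary the crossing direction is locally constant so only one switch can occur, which again runs into the tangential case above. Since the paper's own proof silently assumes these issues away, your proposal is at least as rigorous as the original, but as written it is a proof sketch with two honestly labelled holes rather than a complete proof.
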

\begin{proof}
{
    Note that the differential equation has no solution in a narrow sense since the right hand side is discontinuous. However, by Carath\'eory's existence theorem \cite{coddington1955theory},
    the solution exists in the extended sense, where \eqref{eq: ReLU gradient flow} holds for all $t$ except on a set of Lebesgue-measure zero.
    Moreover, the solution is unique in each partition, where the right hand side is Lipschitz continuous. Therefore, we have a unique gradient flow defined by \eqref{eq: Hq}.
    }
\end{proof}

{Now we define {\em all-activated partition} as the partition that activates all data, which  
includes the 1st quadrant $\{\wb | \wb > \zerob \}$. i.e., the set of support vectors of all-activated partition is whole dataset.}
Then, every gradient flow initialized with sufficiently small norm and has positive gradient enters to the all-activated partition in short time, as explained in the following Lemma \ref{lem: all activated}.

\begin{lemma} \label{lem: all activated}
    {Consider a single-neuron ReLU network under \ref{asm: input}, \ref{asm: label}, and \ref{asm: underparameterization}. 
	Then, there exists $\delta>0$ such that if $\norm{\wb_0}<\delta$ and $\qb(\wb_0)>\zerob$, any gradient flow initialized at $\wb_0$ enters to the all-activated partition.}
\end{lemma}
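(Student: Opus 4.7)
My plan is to show that every trajectory starting with small enough norm and $\qb(\wb_0) > \zerob$ pushes each inner product $\phi_i(t) := \wb(t)^T \xb_i$ strictly upward, fast enough to cross zero before the trajectory leaves a small ball around the origin. Set $\epsilon := \min_i \qb(\wb_0)^T \xb_i$, which is strictly positive since \ref{asm: input} gives $\xb_i \ge \zerob$ with $\norm{\xb_i} \neq 0$ and the hypothesis gives $\qb(\wb_0) > \zerob$. Let $M := \max_i \norm{\xb_i}$ and $K := \norm{\Hb_\ast}$, where $\Hb_\ast := \sum_{i=1}^n \xb_i \xb_i^T$ is the Hessian of the all-activated partition (finite by \ref{asm: underparameterization}).

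The first key step is monotonicity of activation. Observe that for any partition $P$ whose activated set contains that of the starting partition $P_0$, the definition \eqref{eq: Hq} together with \ref{asm: input} and \ref{asm: label} implies $\qb_P \ge \qb(\wb_0)$ componentwise, so $\qb_P^T \xb_i \ge \epsilon$ for every $i$. Choosing any $R < \epsilon/(2KM)$, whenever $\norm{\wb(t)} < R$ and the current partition includes the activations of $P_0$, I get
\begin{equation*}
-\nabla L(\wb)^T \xb_i \;=\; \qb_P^T \xb_i - (\Hb_P \wb)^T \xb_i \;\ge\; \epsilon - K R M \;\ge\; \epsilon/2 \;>\; 0.
\end{equation*}
By Proposition~\ref{prop: activation} this rules out deactivation (so the monotonicity hypothesis above propagates along the flow), and it forces $\dot\phi_i(t) \ge \epsilon/2$ for every $i$.

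The second step is to control the norm over the short window needed for each $\phi_i$ to cross zero. Along the flow $\norm{\dot\wb} \le K \norm{\wb} + \norm{\qb_\ast}$ with $\qb_\ast := \sum_{i} y_i \xb_i$, so letting $\tau := \inf\{t>0 : \norm{\wb(t)} \ge R\}$ gives $\tau \ge (R-\norm{\wb_0})/(KR+\norm{\qb_\ast})$. On the other hand, from $\phi_i(0) \ge -\norm{\wb_0} M$ and $\dot\phi_i \ge \epsilon/2$, every $\phi_i$ reaches zero by time $t^\star \le 2\norm{\wb_0} M / \epsilon$. Taking
\begin{equation*}
\delta \;<\; \min\!\left\{ R,\; \frac{\epsilon R}{\epsilon + 2 M(KR+\norm{\qb_\ast})} \right\}
\end{equation*}
ensures $t^\star < \tau$, so while $\norm{\wb(t)} < R$ and in particular $\wb(t)$ remains in partitions consistent with the monotonicity assumption, every $\phi_i$ has become strictly positive. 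Hence $\wb(t^\star)$ lies in the all-activated partition.

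The main obstacle I foresee is ensuring that the bookkeeping of partition crossings is sound: as more data activate, both $\Hb_P$ and $\qb_P$ jump, and I rely on $\qb_P$ only gaining nonnegative contributions (from $y_i>0$ and $\xb_i \ge \zerob$) and on $\norm{\Hb_P} \le K$. A secondary subtlety is that the crude envelope $\norm{\dot\wb} \le K\norm{\wb}+\norm{\qb_\ast}$ is not tight near $\wb=\zerob$; if the constants above gave a $\delta$ that is too crude, one can sharpen by applying Gr{\"o}nwall to $\norm{\wb(t)} \le \norm{\wb_0}e^{Kt} + \tfrac{\norm{\qb_\ast}}{K}(e^{Kt}-1)$ and verifying that for $t = t^\star$ the exponential factor is close to $1$ in the regime $\norm{\wb_0} \to 0$. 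Either way, once one $\delta$ is exhibited meeting the displayed inequality, the lemma follows.
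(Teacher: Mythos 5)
Your proof is correct and takes essentially the same route as the paper's: near the origin the gradient is dominated by $\qb$, which has strictly positive inner product with every $\xb_i$ under \ref{asm: input}--\ref{asm: label}, and each new activation only adds a nonnegative term $y_j\xb_j$ to $\qb$, so the upward push on every $\wb^T\xb_i$ persists. The paper leaves this as a first-order heuristic ($\wb(t)\approx\wb_0+\qb\,t$), whereas you make it quantitative via the comparison of the crossing time $t^\star$ with the exit time $\tau$ and invoke Proposition~\ref{prop: activation} to rule out deactivation --- a genuine improvement in rigor. One small point to patch: your $\epsilon=\min_i\qb(\wb_0)^T\xb_i$ depends on $\wb_0$, so as written your $\delta$ does too, while the lemma asks for a single $\delta$ valid for all admissible initializations; since there are only finitely many partitions and hence finitely many possible values of $\qb(\wb_0)$, replacing $\epsilon$ by its minimum over all partitions $P$ with $\qb_P>\zerob$ makes the choice of $\delta$ uniform and the argument goes through unchanged.
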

\begin{proof}
    For $\alpha>0$, the gradient at $\alpha\wb$ is given by
    \begin{align*}
        -\nabla L(\alpha\wb) = - \Hb(\alpha\wb) \alpha\wb + \qb(\alpha\wb) = -\alpha \Hb(\wb) \wb + \qb( \wb),
    \end{align*}
    thus it converges to $\qb(\wb)$ as $\alpha \rightarrow 0^+$.
    In this case, the first order linear approximation of the gradient flow is given by
    \begin{align*}
        \wb(t) \approx \wb_0 + \qb \:t.
    \end{align*}
    Moreover, since $\qb$ has positive entries, $\qb^T \xb_i > 0$ for all $i$.
    Therefore, the gradient flow moves to the 1st quadrant, where every data is  activated.
    After activating a data $\xb_j$, $\qb$ would be added by $y_j\xb_j$, so remain as a vector with positive entries.
    {Hence, we can find $\delta>0$ such that any gradient flow initialized at $\norm{\wb_0}<\delta$ enters to the all-activated partition.} 
\end{proof}

{
\begin{proof}[Proof of Theorem~\ref{thm: ReLU no deactivation}]
    Consider a situation that the gradient flow $\wb(t)$ initialized at $\wb_0$ is in the $l$-th partition ($\wb_0$ is in the 0-th partition). Let $0< t_1 < t_2 < \cdots < t_l$ be the time when $\wb(t)$ crosses an activation boundary. Then $\wb(t)$is contained in one partition for $t_i \le t \le t_{i+1}$. Define $\Hb_i:=\Hb(w(t)), t\in (t_i,t_{i+1})$. 
    Since $\wb_{GM}^*$ satisfies $\xb_j^T \wb_{GM}^* = y_j$, 
    the gradient flow $\wb(t)$ is represented by
    \begin{align*}
        \wb(t_1) - \wb_{GM}^* &= e^{-\Hb_0 t_1} (\wb_0-\wb_{GM}^*) \\
        \wb(t_2) - \wb_{GM}^* &= e^{-\Hb_1 (t_2 -t_1)} (\wb(t_1)-\wb_{GM}^*) \\
        & \; \vdots \\
        \wb(t_l) - \wb_{GM}^* &= e^{-\Hb_{l-1} (t_l -t_{l-1})} (\wb(t_{l-1})-\wb_{GM}^*)
    \end{align*}
    Therefore,
    \begin{align*}
        \wb(t_l) - \wb_{GM}^* = &e^{-\Hb_{l-1} (t_l -t_{l-1})} e^{-\Hb_{1-2} (t_{l-1} -t_{l-2})} \\
        & \cdots e^{-\Hb_{0} t_1 } (\wb_0-\wb_{GM}^*).
    \end{align*}
    To investigate $h_j(t) :=\xb_j^T \wb(t)$, consider
    \begin{align*}
        \norm{\xb_j^T (\wb(t_l) - \wb_{GM}^* )}
        &= \norm{ \xb_j^T e^{-\Hb_{l-1} (t_l -t_{l-1})} \cdots e^{-\Hb_{0} t_1} (\wb_0-\wb_{GM}^*) } \\
        &\le \norm{ \xb_j^T e^{-\Hb_{l-1} (t_l -t_{l-1})} \cdots e^{-\Hb_{0} t_1}} \cdot \norm{ \wb_0 - \wb_{GM}^*}
    \end{align*}
    The first term of the last line can be approximated by a simple exponential term.
    \begin{align*}
        \norm{ \xb_j^T e^{-\Hb_{l-1} (t_l -t_{l-1})} \cdots e^{-\Hb_{0} t_1}}
        & \le \norm{ \xb_j^T e^{-\Hb_{l-1} (t_l -t_{l-1})}\cdots e^{-\Hb_{1} (t_2 -t_{1})}  } e^{-\lambda^+_{min} (\Hb_{0}) t_1} \\
        & \le \norm{ \xb_j^T e^{-\Hb_{l-1} (t_l -t_{l-1})} \cdots e^{-\Hb_{1} (t_2 -t_{1})}} e^{-m t_1} \\
        & \le \cdots \\
        & \le \norm{\xb_j } e^{-m(t_l - t_{l-1})} \cdots e^{-mt_1} \\
        &\le \norm{\xb_j } e^{-m t_l}
    \end{align*}
    where $m=\min\limits_{0 \: \le \: k \: \le \: l-1} \lambda_{min}^+(\Hb_k)$.
    We modify the conventional inequality $\norm{e^{-\Ab\Ab^T}\xb} \le e^{-\lambda_{min}(\Ab\Ab^T)} \norm{\xb}$ to the inequality that $\norm{e^{-\Ab\Ab^T}\xb} \le e^{-\lambda^+_{min}(\Ab\Ab^T)} \norm{\xb}$ when $\xb$ is in col$(\Ab)$, which is induced by the reduction principle(Theorem \ref{thm: reduction principle}). Indeed, the inequality proposed above holds since $\xb_j^T e^{-\Hb_{l-1} (t_l -t_{l-1})} \cdots e^{-\Hb_{l'}(t_{l'+1}-t_{l'}) }$ is contained in $\text{col} (\Hb_{l'-1})$. Then,
    \begin{align*}
        | \xb_j^T (\wb(t_l) - \wb_{GM}^* )|
        &\le \norm{\xb_j} \cdot \norm{\wb_0-\wb_{GM}^*} e^{-mt_l}\\
        &< y_j e^{-mt_l} \\
        &<y_j.
    \end{align*}
    Therefore, we get
    \begin{align*}
        h_j(t_l) &= y_j - \xb_j^T (\wb(t_l) - \wb_{GM}^* ) >0.
    \end{align*}
    This shows that $\xb_j$ is activated for $0\le t \le t_l$. 
    Since $l$ is arbitrary, we show that $h_j(t)>0$ for $t\ge0$.
    


	Now we prove the last statement of the theorem. If \eqref{eq: no deactivation condition} holds for all $j=1,...,n$, there is no deactivation on the gradient flow. Therefore, it coincides with the gradient flow of a single-neuron linear network with the same initial point $\wb_0$, which converges to the global minimum by Proposition \ref{prop: stationary point linear}.
\end{proof}
}
\begin{proof}[Proof of Theorem~\ref{thm: gradient flow does not converge to a bad minimum}]
	By Theorem \ref{thm: ReLU no deactivation}, it is enough to show that there is some $\xb_j \in S^c$ satisfying \eqref{eq: no deactivation condition}. If \eqref{eq: condition for not bad minimum} holds, there exists some $\xb_j \in S^c$ which is always activated on the gradient flow $\wb(t)$. Therefore, the gradient flow cannot converge to $\wb_{loc}^*$. 
\end{proof}

\begin{proof}[Proof of Proposition~\ref{prop: zero initialization unique}]
    By Lemma \ref{lem: all activated}, we can assume $\wb_0 $ has sufficiently small norm, and is in the all-activated partition $P_0$.
    Suppose $\wb(t)$ converges to $\wb^*$ and $\wb(t)$ be contained in partitions $P_0, P_1, \dots P_l$ in time $(0,t_1), (t_1,t_2), \dots, (t_l,\infty)$.
    Then, $\wb(0) \in P_0$, which is all-activated partition, and $\wb^* \in P_l$.

    Note that $\wb^*$ is a fixed point.
    Especially, since $\Hb$ is positive definite, $\wb^*$ is an attractor in its neighborhood.
    Hence, we have $\varepsilon>0$, such that if a gradient flow pass $B_\varepsilon(\wb^*)$, then it converges to $\wb^*$.

    On the other hand, by the convergence of $\wb(t)$, we have $T$ such that $\wb(T) \in B_{\varepsilon/2}(\wb^*)$.
    Due to the dependency on the initial condition of ODE \cite{coddington1955theory}, we have $\delta_1$ such that if a gradient flow was contained in $B_{\delta_1}(\wb(t_l)) \cap P_l$, then it is contained in $B_{\varepsilon/2}(\wb(T))$ after time $T-t_l$.
    Thus, it would be contained in $B_\varepsilon(\wb^*)$ and converge to $\wb^*$.

    Next, consider a gradient flow that crosses from $P_{l-1}$ to $P_{l}$.
    The direction of $\dot{\wb}(t_l) = -\nabla L(\wb(t_l))$ is from $P_{l-1}$ to $P_l$.
    By the continuity of $-\nabla L$ on $P_{l-1}$, the direction of $-\nabla L$ is from $P_{l-1}$ to $P_l$ in $B_{\delta_1}(\wb(t_l))$ for some $\varepsilon'_1 < \delta_1$.
    Again by dependence on initial condition, there exists $\varepsilon_1<\varepsilon'_1$ such that a gradient flow in $B_{\varepsilon_1}(\wb(t_l))\cap P_{l-1}$ passes $B_{\delta_1}(\wb(t_l))\cap P_{l}$.
    
    And then, consider a gradient flow in $P_{l-1}$.
    We have $\delta_2$ such that if a gradient flow was contained in $B_{\delta_2}(\wb(t_{l-1})) \cap P_{l-1}$, then it is contained in $B_{\varepsilon_1}(\wb(t_l)) \cap P_{l-1}$ after time $t_l-t_{l-1}$.
    
    Successively, considering gradient flows that cross partitions and in partitions, we find $\varepsilon_2, \delta_2, \dots, $ and $\delta_{l+1}$ such that $B_{\delta_{l+1}}(\wb(0))\cap P_{0}$ converges to $\wb^*$.
\end{proof}

\subsection{Norm increasing property}
\begin{figure}[ht!]
	\centering
	\includegraphics[width=0.6\linewidth]{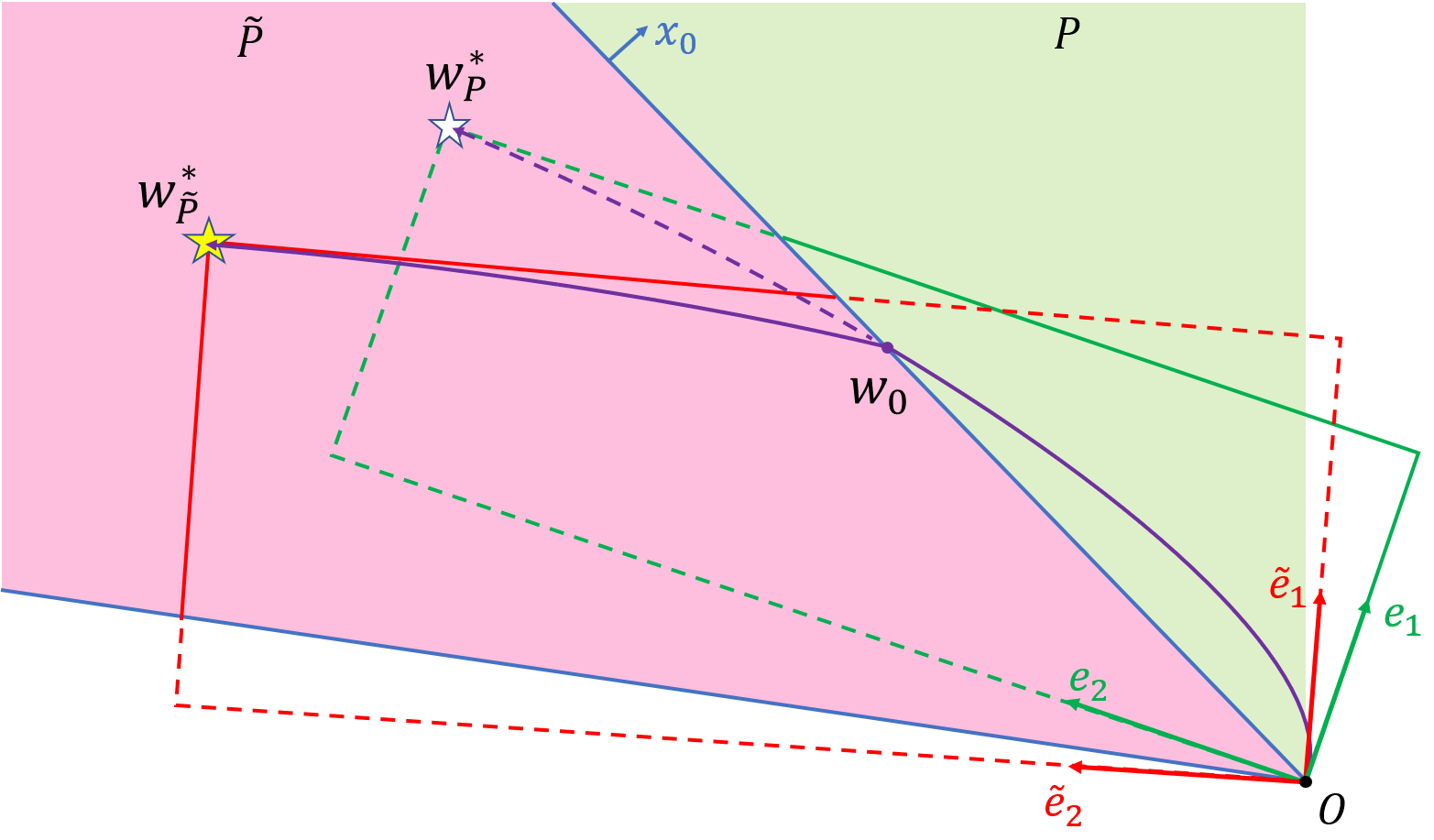}
	\caption{Norm increasing theorem for single-neuron ReLU networks. 
	The gradient flow in $P$ (light green region) moves to another partition $\tilde{P}$ (pink region) at $\wb_0$, deactivating (or activating) a data $\xb_0$. The associated hyperrectangles are drawn by solid and dashed lines, with eigenvectors. If the assumptions in Theorem \ref{thm: norm increasing} hold,
	the gradient flow $\wb$ (purple curve) is in the associate hyperrectangle in each partition, which means $\norm{\wb(t)}$ strictly increases.}
	\label{fig: norm increasing theorem}
\end{figure}

\begin{theorem}[Norm increasing property of single-neuron ReLU networks] \label{thm: norm increasing}
	Consider a single-neuron ReLU network under \ref{asm: input}, \ref{asm: label} and \ref{asm: underparameterization}. 
	Consider a gradient flow $\wb(t)$ moves from $P$ to $\tilde{P}$ across an activation boundary determined by $\xb_0$ at $\wb_0$ (either activating or deactivating). 
	For $\Hb:= \sum_{\xb_i \sim P} \xb_i \xb_i^T$ and  $\tilde{\Hb}:=\sum_{\xb_i \sim \tilde{P}} \xb_i \xb_i^T$, let $\{\lambda_k, \eb_k\}$ and $\{\tilde{\lambda_k}, \tilde{\eb_k}\}$ be the eigenvalues and eigenvectors of $\Hb$ and $\tilde{\Hb}$, respectively.
	Let $\wb_0=\sum_{k=1}^d c_k \eb_k = \sum_{k=1}^d \tilde{c_k} \tilde{\eb_k}$ has the coordinates by eigenvectors of $\Hb$ and $\tilde{\Hb}$. Similarly, $\wb_P^*=\sum c_k^* \eb_k$ and $\wb_{\tilde{P}}^* = \sum \tilde{c_k}^* \tilde{\eb_k}$ are the coordinates of the virtual minimum of $P$ and $\tilde{P}$. Let $\Delta \eb_k := \tilde{\eb_k} - \eb_k$ be the difference of $k$-th eigenvectors of $\tilde{\Hb}$ and $\Hb$. Similarly, $\Delta \wb^* := \wb_{\tilde{P}} ^* - \wb_P ^*$ is the difference of virtual minima of $\tilde{P}$ and $P$.
	Now assume the following conditions. 
	\vspace{-6pt}
	\begin{enumerate}[label=\textbf{B\arabic*}]\setlength{\itemsep}{-1mm}
		\item $\norm{\Delta \eb_k}  < \frac{\lambda_{\min}^+(\Hb)}{\lambda_{\max}(\Hb)} \frac{c_k}{\norm{\wb_0}} $. \label{B1}
		\item $ y_0 < \xb_0^T \wb_P^* + \frac{1-\xb_0^T \Hb^{-1}\xb_0}{\norm{\xb_0}} \min\limits_{1\le k\le d} \bigg(
		(\frac{c_k^*}{c_k}-1)\frac{1}{\lambda_k} - \frac{\norm{\wb_{\tilde{P}}^*-\wb_0}}{\norm{\wb_0}}
		\frac{1}{\lambda_{\max}(\Hb)} \bigg) c_k \lambda_k^2 $.	\label{B2}
		\item $\tilde{\Hb}$ is full rank. \label{B3}
		\item $\xb_0^T\wb_P^* < 0$. \label{B4}
	\end{enumerate}
	\vspace{-6pt}
	For the gradient flow initialized with infinitesimally small norm, suppose above assumptions hold on every boundary of partitions that gradient flow visits until it converges. Then, $\norm{\wb(t)}$ strictly increases until it converges.
\end{theorem}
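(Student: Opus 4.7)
The strategy is to reduce to the linear theory of Appendix~\ref{sec: linear} \emph{partition-by-partition}, propagating the hyperrectangle invariant of Definition~\ref{def: hyperrectangle} across each activation boundary, with \ref{B1}--\ref{B4} serving exactly to make that invariant transfer.

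I would argue by induction on the (finite) sequence of partitions $P_0, P_1, \ldots, P_K$ visited by the flow; finiteness follows from Lemma~\ref{prop: number of partitions}. The inductive hypothesis at step $k$ is: while $\wb(t)\in P_k$, the point $\wb(t)$ lies in the hyperrectangle of $\Hb_{P_k}$ centered at the virtual minimum $\wb_{P_k}^*$. Once this is known, Proposition~\ref{prop: hyperrectangle}(1),(3),(4) together with Theorem~\ref{thm: norm increasing linear} give immediately that $\norm{\wb(t)}$ strictly increases on each such time segment, since the hyperrectangle sits inside the norm-increasing ellipsoid and the flow cannot escape the hyperrectangle before leaving the partition itself. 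The base case uses Lemma~\ref{lem: all activated}: the infinitesimal-norm initialization places $\wb$ arbitrarily close to $\zerob$ in the all-activated partition, and $\zerob$ is the ``lower-left'' vertex of every hyperrectangle of $\Hb_{P_0}$ (here \ref{asm: input}, \ref{asm: label}, \ref{asm: underparameterization} give $c_k^*\ge 0$).

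The crux is the inductive step at a crossing from $P = P_k$ into $\tilde P = P_{k+1}$ at the point $\wb_0$. I must show $\wb_0$ lies in the hyperrectangle of $\tilde\Hb$ centered at $\wb_{\tilde P}^*$, i.e.\ $0 \le \tilde c_j \le \tilde c_j^*$ for every $j$. Writing $\tilde\eb_j = \eb_j + \Delta\eb_j$ and $\wb_{\tilde P}^* = \wb_P^* + \Delta\wb^*$, I split
\begin{align*}
\tilde c_j &= c_j + \Delta\eb_j^{T}\wb_0, \\
\tilde c_j^* - \tilde c_j &= (c_j^* - c_j) + \eb_j^{T}\Delta\wb^* + \Delta\eb_j^{T}(\wb_{\tilde P}^* - \wb_0),
\end{align*}
and control the perturbation terms. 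Nonnegativity of $\tilde c_j$ reduces to $|\Delta\eb_j^{T}\wb_0|\le c_j$, which follows from \ref{B1} because $\lambda_{\min}^+(\Hb)/\lambda_{\max}(\Hb)\le 1$. The upper bound is the harder half: I will invoke a Sherman--Morrison rank-one update on $\tilde\Hb = \Hb - \xb_0\xb_0^{T}$ and $\tilde\qb = \qb - y_0\xb_0$ (the deactivation case singled out by \ref{B4}, which forces $\wb_P^*$ to the deactivated side so that the flow is obliged to cross rather than converge inside $P$), obtaining a closed form for $\Delta\wb^*$ as a rational function of $y_0$, $\xb_0^{T}\wb_P^*$ and $\xb_0^{T}\Hb^{-1}\xb_0$. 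Substituting this into the decomposition above, the factor $1-\xb_0^{T}\Hb^{-1}\xb_0$ and the minimum over the eigenindex appearing in \ref{B2} line up term-by-term with what is required to make $\tilde c_j^*-\tilde c_j\ge 0$ uniformly in $j$. Condition \ref{B3} ensures $\tilde\Hb^{-1}$ and the new hyperrectangle exist.

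The main obstacle I anticipate is the precise bookkeeping in this inductive step: one must simultaneously bound the eigenvector rotation $\Delta\eb_j$ (which perturbs \emph{both} the new axes and the new vertex $\wb_{\tilde P}^*$) against the shift $\Delta\wb^*$ of the virtual minimum, and verify that the worst-case $j$ in \ref{B2} is tightly balanced against the factor $\norm{\wb_0}$ in \ref{B1}. Once the inductive step is complete, concatenating across all $K$ transitions yields strict monotonicity of $\norm{\wb(t)}$ on each open segment, and continuity of $\wb$ across boundaries propagates strict monotonicity globally; convergence must occur inside the terminal $P_K$ (otherwise the flow would exit $P_K$ too, contradicting termination of the induction).
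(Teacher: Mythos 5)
Your plan is essentially the paper's own proof: the same hyperrectangle invariant propagated by induction across partition boundaries, the same two-part reduction at each crossing ($\tilde c_k>0$ via \ref{B1} and $\tilde c_k<\tilde c_k^*$ via a Sherman--Morrison update of the virtual minimum controlled by \ref{B2}--\ref{B4}), and the same appeal to Proposition~\ref{prop: hyperrectangle} and Theorem~\ref{thm: norm increasing linear} inside each partition. The only (harmless) variation is your direct bound $|\Delta\eb_j^T\wb_0|\le\norm{\Delta\eb_j}\norm{\wb_0}<c_j$ for the lower claim, where the paper instead computes $\tilde\lambda_k\tilde c_k-\lambda_k c_k=\wb_0^T\Hb\Delta\eb_k$ using $\wb_0^T\xb_0=0$; the remaining "bookkeeping" you flag is exactly the chain of inequalities the paper carries out.
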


\begin{proof}[Proof of Theorem~\ref{thm: norm increasing}] \allowdisplaybreaks
	Before we start, recall Proposition \ref{prop: hyperrectangle}.
	To prove the norm-increasing property, it is enough to show that the gradient flow is contained in the associated hyperrectangle of each partition. More precisely, since a gradient flow never escapes  the associate hyperrectangle if it was initialized in the hyperrectangle in each partition, we only need to consider the boundary of partitions. See Figure \ref{fig: norm increasing theorem}.
	
	We claim that the gradient flow is always contained in the hyperrectangle of each partition, and prove it by induction. First, we initialize a gradient flow with infinitesimally small norm and contained in the hyperrectangle.
	By induction hypothesis, suppose $\wb_0$ is contained in the hyperrectangle of $P$ and assumptions \ref{B1}, \ref{B2} hold.
	We want to show that $\wb_0$ is in the hyperrectangle of $\tilde{P}$, the next partition $\wb(t)$ goes. In other words, $0 < \tilde{c}_k < \tilde{c}_k^*$ for all $k=1,2,\cdots,d$. 
	The proof consists of two parts. Note that $\tilde{\Hb}=\Hb\pm\xb_0\xb_0^T$ depend on whether $\xb_0$ is activated or deactivated. 
	\begin{enumerate}[itemindent=14mm, leftmargin=0mm]
		\item[{\bf Claim 1.}] $\tilde{c_k}>0$ : 
		
		For the $k$-th eigenvector $\wb_k$, 
		\begin{align*}
			\tilde{\lambda}_k \tilde{c}_k - \lambda_k c_k
			&= \wb_0 ^T (\tilde{\lambda}_k \tilde{\eb}_k - \lambda_k \eb_k) \\
			&= \wb_0^T( \tilde{\Hb} \tilde{\eb}_k - \Hb \eb_k ) \\
			&= \wb_0^T((\Hb \pm\xb_0\xb_0^T)(\eb_k + \Delta \eb_k) - \Hb \eb_k) \\
			&= \wb_0^T( \Hb\Delta \eb_k \pm \xb_0^T\tilde{\eb}_k \xb_0  ) \\
			&= \wb_0^T \Hb\Delta \eb_k \pm (\xb_0^T \tilde{\eb_k}) (\wb_0^T\xb_0) \\
			&= \wb_0^T \Hb \Delta \eb_k
		\end{align*}
		since $\wb_0^T\xb_0=0$ ($\wb_0$ is on the activation boundary). However, from \ref{B1},
		\begin{align*}
			|\wb_0^T \Hb \Delta \eb_k| &\le \norm{\wb_0} \cdot \norm{\Hb}_2 \cdot \norm{\Delta \eb_k} \\
			&< \norm{\wb_0} \cdot \norm{\Hb}_2 \cdot 
			\frac{\lambda_{\min}^+(\Hb)}{\lambda_{\max}(\Hb)} \cdot
			\frac{c_k}{\norm{\wb_0}} \\
			&= \lambda_{\min}^+(\Hb) c_k \\
			&\le \lambda_k c_k .
		\end{align*}
		Therefore,
		$ \tilde{\lambda}_k \tilde{c}_k = \lambda_k c_k + \wb_0^T \Hb\Delta \eb_k >0$ and we conclude $\tilde{c}_k > 0$.
		
		\item[{\bf Claim 2.}] $\tilde{c}_k < \tilde{c}_k^*$:
		\begin{align*}
			\tilde{c}_k^* - \tilde{c}_k 
			&= (\wb_{\tilde{P}}^* - \wb_0) ^T \tilde{\eb}_k \\
			&= (\wb_P^* +\Delta \wb^* - \wb_0) ^T (\eb_k + \Delta \eb_k) \\
			&= (\wb_P^* - \wb_0 )^T\eb_k + (\wb_P^* + \Delta \wb^*-\wb_0)^T \Delta \eb_k + \Delta \wb^{*T}\eb_k \\
			&= (c_k^* - c_k) + (\wb_{\tilde{P}}^* - \wb_0)^T \Delta \eb_k + \Delta \wb^{*T}\eb_k.
		\end{align*}
		For the last two terms of the right hand side,
		\begin{align*}
			| (\wb_{\tilde{P}}^* &- \wb_0)^T \Delta \eb_k + \Delta \wb^{*T}\eb_k | \\
			&< | \Delta \wb^{*T}\eb_k | + | (\wb_{\tilde{P}}^* - \wb_0)^T \Delta \eb_k | \\
			&= | (\wb_{\tilde{P}}^*-\wb_P^*)^T\eb_k | + | (\wb_{\tilde{P}}^* - \wb_0)^T \Delta \eb_k | \\
			&= | ((\Hb\pm\xb_0\xb_0^T)^{-1}(\qb-y_0\xb_0)-\Hb^{-1}\qb)^T\eb_k | + | (\wb_{\tilde{P}}^* - \wb_0)^T \Delta \eb_k | \\
			&= \left| \left( (\Hb^{-1} \mp \frac{\Hb^{-1}\xb_0\xb_0^T \Hb^{-1}}{1\pm \xb_0^T \Hb^{-1}\xb_0})(\qb-y_0\xb_0) -\Hb^{-1}\qb \right)^T\eb_k \right| + | (\wb_{\tilde{P}}^* - \wb_0)^T \Delta \eb_k | \\
			&= \left| \frac{\mp \xb_0^T\wb_P^* - y_0}{1 \pm\xb_0^T \Hb^{-1}\xb_0}  \eb_k^TH^{-1}\xb_0  \right| + | (\wb_{\tilde{P}}^* - \wb_0)^T \Delta \eb_k | \\
			&= \left| \frac{\mp \xb_0^T \wb_P^* -y_0}{1 \pm \xb_0^T \Hb^{-1}\xb_0} \right| \cdot |\frac{1}{\lambda_k} \eb_k^T\xb_0 | + | (\wb_{\tilde{P}}^* - \wb_0)^T \Delta \eb_k | \\
			&< \frac{|y_0 \pm \xb_0^T\wb_P^*|}{1-\xb_0^T \Hb^{-1}\xb_0} \cdot \frac{\norm{\xb_0}}{\lambda_k} + \norm{\wb_{\tilde{P}}^* - \wb_0 } \cdot \norm{\Delta \eb_k} \\
			&< \frac{y_0 - \xb_0^T\wb^*}{1-\xb_0^T \Hb^{-1}\xb_0} \cdot \frac{\norm{\xb_0}}{\lambda_k} + \norm{\wb_{\tilde{P}}^* - \wb_0 } \cdot \frac{\lambda_k}{\lambda_{\max}(\Hb)} \frac{c_k}{\norm{\wb_0}} \\
			&< c_k^* - c_k .
		\end{align*}
		where the second to last inequality holds by triangle inequality, \ref{B1} and \ref{B4}. The last inequality holds by \ref{B2} :
		\begin{align*}
			0 &\le \frac{y_0 - \xb_0^T\wb_P^*}{1-\xb_0^T \Hb^{-1}\xb_0} \\
			&< \frac{1}{\norm{\xb_0}}\bigg(
			(\frac{c_k^*}{c_k}-1)\frac{1}{\lambda_k} - \frac{\norm{\wb_{\tilde{P}}^*-\wb_0}}{\norm{\wb_0}}
			\frac{1}{\lambda_{\max}(\Hb)}
			\!\bigg) c_k \lambda_k^2.
		\end{align*}
		
		Therefore,
		$$ \tilde{c}_k^* - \tilde{c}_k = (c_k^* - c_k) + (\wb_{\tilde{P}}^* - \wb_0)^T \Delta \eb_k + \Delta \wb^{*T}\eb_k >0.
		$$
		Finally, Claim 1 and 2 with induction hypothesis prove $\wb_0$ is in the hyperrectangle of $\tilde{P}$. Therefore, the gradient flow always contained in hyperrectangles and thus $\norm{\wb}$ increases until it converges.
	\end{enumerate}
	\vspace{-6mm}
\end{proof}

\subsection[]{Special case : $d=2$}
\begin{figure}[ht!]
	\centering
	\includegraphics[width=0.6\linewidth]{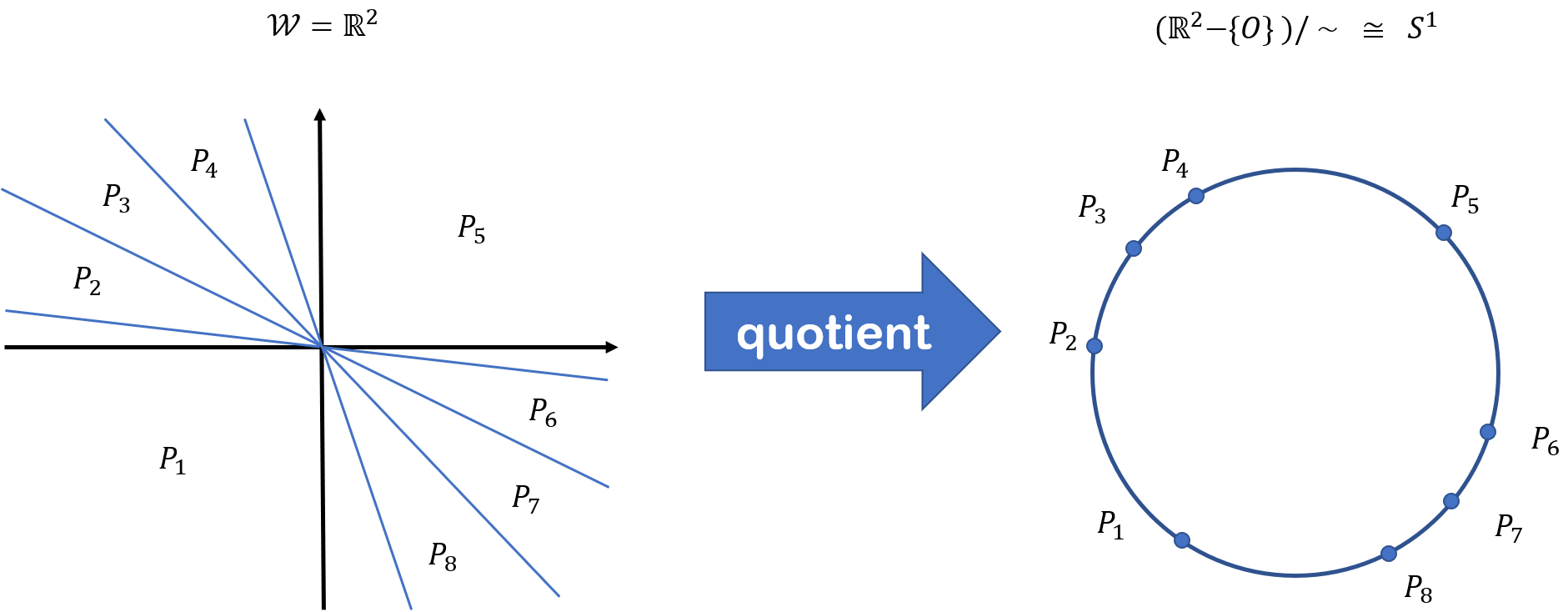}
	\caption{Topology of the partitioned parameter space in $\Rd^2$. By the positive homogeneity of the parameter space, its quotient space is homeomorphic to $S^1$, which has a cyclic order.}
	\label{fig: quotient topology}
\end{figure}
\begin{proof}[Proof of Lemma~\ref{lem:ordering}]
	Since $d=2$, each partition has exactly two activation boundaries. 
	Now recall the positive homogeneity of partitions from Proposition \ref{prop: basic properties}. Define an equivalent relation in the parameter space, defined by $\wb_1 \sim \wb_2$ if and only if $\wb_1=\alpha\wb_2$ for some $\alpha>0$. Now omit the origin point from $\Wc=\Rd^2$, and take quotient by this equivalent relation. Then, we get 
	$$ (\Rd^2 - \{O\} )/\sim \quad \cong \quad S^1 .
	$$
	Therefore, we can impose an cyclic `order' for partitions. For example, Figure \ref{fig: quotient topology} shows 8 partitions ordered by clockwise direction.
	
	Now consider any two partitions in 2nd quadrant. since we can impose an order from 3rd quadrant $\rightarrow$ 2nd quadrant $\rightarrow$ 1st quadrant by clockwise direction, one partition is obtained by activating (or deactivating) some data from another partition. Similar argument holds for any two partitions in 4th quadrant, which completes the proof.
\end{proof}

Still we need some lemmas to prove the global convergence. 
Now we refer the norm-increasing subset. Proposition \ref{prop: hyperrectangle} changes to the following lemma.

\begin{lemma}[Norm-increasing subset] \label{prop: norm-increasing ReLU}
	Consider a gradient flow \eqref{eq: ReLU gradient flow} under \ref{asm: input} and \ref{asm: label}. Define $g(\wb):= -\frac{d}{dt} \frac12 \norm{\wb}^2 = \wb^T(\Hb(\wb)\wb-\qb(\wb))$.
	Then, for a gradient flow $\wb(t)$,
	\begin{enumerate}\setlength{\itemsep}{-1mm}
		\item $g(\wb)<0 $ if and only if $\norm{\wb(t)}$ increases.
		\item $g(\wb)$ is continuous.
		\item Any local minimum $\wb^*$ is on the boundary of the norm-increasing subset, i.e. $g(\wb^*)=0$.
	\end{enumerate}
\end{lemma}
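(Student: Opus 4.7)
The three parts are largely independent, and I would address them in the stated order since each becomes short once one picks the right reformulation.

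For part (i) I would just differentiate $\tfrac12\norm{\wb(t)}^2$ along the flow \eqref{eq: ReLU gradient flow}:
\[
\frac{d}{dt}\tfrac12\norm{\wb}^2 \;=\; \wb^T\dot\wb \;=\; \wb^T\!\bigl(-\Hb(\wb)\wb+\qb(\wb)\bigr) \;=\; -g(\wb),
\]
so $\norm{\wb(t)}$ is (strictly) increasing at time $t$ iff $g(\wb(t))<0$. This mirrors the calculation in Proposition~\ref{prop: hyperrectangle}(1) for the linear case.

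For part (ii), the apparent obstacle is that $\Hb(\wb)$ and $\qb(\wb)$ each contain indicators $\mm_{\{\wb^T\xb_i>0\}}$ that jump across every activation boundary, so they are not themselves continuous in $\wb$. The key observation is that in $g(\wb)$ these indicators are always multiplied by a factor $\wb^T\xb_i$ that vanishes exactly where the indicator jumps. Concretely, I would rewrite
\[
g(\wb) \;=\; \sum_{i=1}^n \mm_{\{\wb^T\xb_i>0\}}\,(\wb^T\xb_i)(\wb^T\xb_i-y_i) \;=\; \sum_{i=1}^n [\wb^T\xb_i]_+\bigl(\wb^T\xb_i - y_i\bigr),
\]
where each summand is a product of continuous functions of $\wb$ (a ReLU of a linear form, times an affine function), hence $g$ is continuous on all of $\Wc$. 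This is the only step of the lemma that I expect to require even a small argument.

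For part (iii), let $\wb^*$ be a local minimum of $L$. By Lemma \ref{lem: not on boundary lemma} (Not on boundary lemma), $\wb^*$ lies strictly inside some partition $P$. Inside $P$ the activation pattern is constant, so in a neighborhood of $\wb^*$ the loss $L$ is the smooth quadratic with gradient $\nabla L(\wb)=\Hb(\wb^*)\wb-\qb(\wb^*)$. The first-order optimality condition gives $\Hb(\wb^*)\wb^*=\qb(\wb^*)$, and then
\[
g(\wb^*) \;=\; \wb^{*T}\!\bigl(\Hb(\wb^*)\wb^*-\qb(\wb^*)\bigr) \;=\; 0,
\]
so $\wb^*$ lies on the boundary of the norm-increasing subset. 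Combining these three short steps completes the lemma.
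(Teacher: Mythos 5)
Your proof is correct, and parts (i) and (iii) follow essentially the same lines as the paper: (i) is the same one-line differentiation, and (iii) is the paper's stationarity argument with the extra (welcome) justification via Lemma~\ref{lem: not on boundary lemma} that $\wb^*$ sits strictly inside a partition so that $\nabla L$ is genuinely the smooth quadratic gradient there and $\Hb(\wb^*)\wb^*=\qb(\wb^*)$ is a legitimate first-order condition.

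For part (ii) you take a genuinely different and, in my view, cleaner route. The paper proves continuity piecewise: it fixes two adjacent partitions $P$ and $\tilde P$ whose activation patterns differ by a single datum $\xb_0$, computes $g_{\tilde P}(\wb)=g_P(\wb)-\wb^T\xb_0\xb_0^T\wb+y_0\xb_0^T\wb$, and observes the two expressions agree on the common boundary because $\xb_0^T\wb=0$ there. You instead collapse the whole function into the single closed form
\[
g(\wb)=\sum_{i=1}^n [\wb^T\xb_i]_+\bigl(\wb^T\xb_i-y_i\bigr),
\]
using $\mm_{\{\wb^T\xb_i>0\}}(\wb^T\xb_i)=[\wb^T\xb_i]_+$ (an identity that is insensitive to whichever subgradient value one assigns at $\wb^T\xb_i=0$, since it is multiplied by zero there). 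This buys you continuity on all of $\Wc$ in one stroke, including at points where several activation boundaries intersect --- a configuration the paper's pairwise-adjacent-partition argument does not explicitly address and which your formula handles for free. The paper's version, on the other hand, makes the jump term $-\wb^T\xb_0\xb_0^T\wb+y_0\xb_0^T\wb$ explicit, which is reused in the geometric reasoning of Lemma~\ref{lem: norm-increasing 2D}; your identity would serve equally well there.
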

\begin{proof}
	\begin{enumerate}
		\item Since $g(\wb)=-\frac{d}{dt} \frac12 \norm{\wb}^2$, $\norm{\wb}$ increases if and only if $g(\wb)<0$.
		
		\item Now we show that the function $g(\wb)$ is continuous. Indeed, it is continuous on each linearly partitioned region from the definition. Now consider a boundary of partitions and let $\xb_0$ be the data which determines the boundary. Let $P$ and $\tilde{P}$ be the two partitions such that 
		$\{\xb \in \Xb ~|~ \xb \sim P\} = \{\xb \in \Xb ~|~ \xb \sim \tilde{P} \} \cup \{\xb_0\}$. See Figure \ref{fig: norm increasing theorem}. Now distinguish $g(\wb)$ in each region by
		\begin{align*}
			g_P(\wb) &= \wb^T (\sum_{\xb_i \sim P}\xb_i \xb_i^T) \wb - \sum_{\xb_i \sim P} y_i\xb_i^T\wb, \\
			g_{\tilde{P}}(\wb) &= \wb^T (\sum_{\xb_i \sim \tilde{P}}\xb_i \xb_i^T) \wb - \sum_{\xb_i \sim \tilde{P}} y_i\xb_i^T\wb \\
			&= \wb^T (\sum_{\xb_i \sim P}\xb_i \xb_i^T) \wb - \wb^T\xb_0 \xb_0^T\wb - \sum_{\xb_i \sim P} y_i\xb_i^T\wb + y_0\xb_0^T\wb \\
			&= g_P(\wb) - \wb^T\xb_0 \xb_0^T\wb + y_0\xb_0^T\wb.
		\end{align*}
		Then, for $\wb_0$ on the activation boundary of $\xb_0$, since $\xb_0^T\wb_0=0$, we get $g_P(\wb_0)= g_{\tilde{P}}(\wb_0)$. Therefore, $g(\wb)$ is continuous in $\Wc$.
		
		\item Let $\wb^*$ be a local minimum. Then $\wb^T\dot{\wb}=0$ at $\wb^*$, since it is a stationary point. Thus $g(\wb^*)=0$.
	\end{enumerate}
	\vspace{-6mm}
\end{proof}

Now we define {\em all-activated partition} as the partition which activates all data. The following lemma guarantees the global convergence, if all-activated region contains a local minimum.

\begin{lemma} \label{lem: convergence to all activated partition}
	Consider a single-neuron ReLU network under \ref{asm: input}, \ref{asm: label}, and \ref{asm: underparameterization} with $d=2$. {Suppose there exists stationary point $\wb^*$ in all-activated partition. At initialization $\wb_0$, further suppose that $\qb$ defined in \eqref{eq: Hq} has positive entries.}
	Then there exists $\delta>0$ such that if $\norm{\wb_0}<\delta$, any gradient flow initialized at $\wb_0$ converges to the global minimum $\wb^*$, without any deactivation of data.
\end{lemma}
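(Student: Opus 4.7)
My plan is to split the argument into two phases: (i) the flow enters the all-activated partition $P_0$ from the small-norm initialization, and (ii) once inside $P_0$, the flow stays there and converges to $\wb^*$. For phase (i) I will invoke Lemma \ref{lem: all activated}, which produces $\delta_1 > 0$ such that any flow with $\norm{\wb_0} < \delta_1$ and $\qb(\wb_0) > \zerob$ enters $P_0$ at some finite time $t_s$. The short-time linear approximation $\wb(t) \approx \wb_0 + \qb t$ used in that proof shows moreover that shrinking $\delta_1$ makes $\norm{\wb(t_s)}$ arbitrarily small, and that the running $\qb$ keeps positive entries; combined with Proposition \ref{prop: activation} (the deactivation condition $\nabla L(\wb(s))^T\xb_j \ge 0$ fails whenever $\wb(s)$ is small and $\xb_j^T\qb(s) > 0$), this rules out any deactivation during the entry phase.

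For phase (ii), Assumption \ref{asm: underparameterization} makes $\Hb := \sum_i \xb_i\xb_i^T$ full rank, so inside $P_0$ the dynamics is the linear ODE $\dot\wb = -\Hb(\wb - \wb^*)$ with closed form
\[
\wb(\tau) = \wb^* + e^{-\Hb\tau}(\wb(t_s) - \wb^*), \qquad \tau := t - t_s \ge 0,
\]
valid while the flow is in $P_0$ and tending to $\wb^*$ as $\tau \to \infty$. The entire claim therefore reduces to showing $\xb_j^T\wb(\tau) > 0$ for every $j$ and every $\tau \ge 0$. I will split $\xb_j^T\wb(\tau) = g_j(\tau) + \xb_j^T e^{-\Hb\tau}\wb(t_s)$ with $g_j(\tau) := \xb_j^T(I - e^{-\Hb\tau})\wb^*$, the response from the origin. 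Using the $d=2$ spectral decomposition $\Hb = \lambda_1\eb_1\eb_1^T + \lambda_2\eb_2\eb_2^T$ with the sign convention $c_k^* := \eb_k^T\wb^* \ge 0$ and shorthand $a_{jk} := \xb_j^T\eb_k$, I get
\[
g_j(\tau) = a_{j1}c_1^*(1 - e^{-\lambda_1\tau}) + a_{j2}c_2^*(1 - e^{-\lambda_2\tau}),
\]
satisfying $g_j(0) = 0$, $g_j(\infty) = \xb_j^T\wb^* > 0$, and $g_j'(0) = \xb_j^T\qb = \sum_i y_i(\xb_i^T\xb_j) \ge y_j\norm{\xb_j}^2 > 0$ by \ref{asm: input} and \ref{asm: label}.

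The main step is to conclude $g_j(\tau) > 0$ for all $\tau > 0$ by a short case analysis on the signs of $(a_{j1}, a_{j2})$: the both-nonpositive case violates $\xb_j^T\wb^* > 0$, while each mixed-sign case reduces to checking that the equation $g_j'(\tau) = 0$ either has no positive root (yielding a strictly increasing $g_j$) or exactly one root after which $g_j$ decreases monotonically toward the positive limit $\xb_j^T\wb^*$. Once $g_j > 0$ is established, the correction $|\xb_j^T e^{-\Hb\tau}\wb(t_s)| \le \norm{\xb_j}\cdot\norm{\wb(t_s)}$ is uniformly small in $\tau$ by shrinking $\delta \le \delta_1$; on any compact window $[\tau_0,\infty)$ a strict lower bound $g_j(\tau) \ge c_j > 0$ dominates, and on $[0,\tau_0]$ the continuity of $\xb_j^T\wb(\tau)$ together with $\xb_j^T\wb(t_s) > 0$ (since $\wb(t_s) \in P_0$) closes the remaining window. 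Combining the two phases then gives the claim. The hard part will be the case analysis establishing $g_j > 0$ on $(0,\infty)$ uniformly in $j$; it really uses $d=2$ (only two spectral components per $\xb_j$) and exploits both $c_k^* \ge 0$ and $g_j'(0) > 0$, so I do not expect a clean generalization to larger $d$ along these lines.
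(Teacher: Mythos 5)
Your proposal follows essentially the same route as the paper's proof: enter the all-activated partition via Lemma~\ref{lem: all activated}, then use the explicit two-eigenvector solution of the linear ODE in that partition and a sign case analysis to show no $\xb_j^T\wb(\tau)$ ever becomes negative; your derivative-based analysis of $g_j$ (at most one critical point, positive initial slope $\xb_j^T\qb>0$, positive limit $\xb_j^T\wb^*>0$) is a valid variant of the paper's algebraic regrouping of the two exponential terms, and you are in fact more careful than the paper in retaining the $e^{-\Hb\tau}\wb(t_s)$ term that the paper silently sets to zero. The one step that is under-argued is the window $[0,\tau_0]$: there, $g_j(\tau)=O(\tau)$ is of the same order as the perturbation $|\xb_j^Te^{-\Hb\tau}\wb(t_s)|\le\norm{\xb_j}\,\norm{\wb(t_s)}$, and ``continuity together with $\xb_j^T\wb(t_s)>0$'' does not by itself yield positivity on an interval of fixed length; you need the additional observation (Proposition~\ref{prop: effect of norm}, using $\qb>\zerob$) that every $h_j(\tau)=\xb_j^T\wb(\tau)$ is strictly increasing as long as $\norm{\wb(\tau)}$ stays below a fixed threshold, which covers an initial time interval of length bounded below independently of $\delta$, after which the strict lower bound on $g_j$ takes over. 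With that patch the argument closes; note also that your reliance on $g_j'(0)=\xb_j^T\qb>0$ avoids the paper's implicit appeal to $\xb_j^T\eb_1>0$ (a Perron--Frobenius-type fact it does not justify), which is a small point in your favor.
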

\begin{proof}
    {
    By Lemma \ref{lem: all activated}, we can assume that the initialization point $\wb_0$ is in the all-activated partition.}
	For a data $\xb_i$, define $h_i(t):=\wb(t)^T\xb_i$. By the above claim, $\wb_0$ is in all-activated partition thus $h_i(0)>0$ for all $i=1,2,\dots,n$. Now we want to show that  $h_i(t) \ge 0$ for all $t \ge 0$. 
	For the Hessian matrix $\Hb := \sum\limits_{i=1}^n \xb_i\xb_i^T$, let $\{(\lambda_k, \eb_k)\}_{k=1}^2$ be its eigenvalues and eigenvectors with $\lambda_1 > \lambda_2 > 0$. Considering the hyperrectangle of $\Hb$ in Definition \ref{def: hyperrectangle}, $c_k^*:=\wb^{*T}\eb_k >0$ by definition. From \eqref{eq: linear gradient flow solution eigenvector version}, we get
	\begin{align*}
	    \wb(t) &= (1-e^{-\lambda_1 t}) \eb_1 \eb_1^T \wb^* + (1-e^{-\lambda_2 t})\eb_2 \eb_2^T \wb^* \notag \\
	    &= (1-e^{-\lambda_1 t}) c_1^*\eb_1 + (1-e^{-\lambda_2 t})c_2^*\eb_2.
	\end{align*}
	
	Let $c_k := \xb_i^T \eb_k$. Then,
	\begin{align} \label{eq: zzz}
	    h_i(t) &= [(1-e^{-\lambda_1 t})c_1^*\eb_1 + (1-e^{-\lambda_2 t})c_2^*\eb_2]^T\xb_i \notag \\
	    &= (1-e^{-\lambda_1 t})c_1c_1^* + (1-e^{-\lambda_2 t})c_2c_2^* .
	\end{align}
	From the eigenvector decomposition, from \ref{asm: input}, we know $c_1>0$. However, we are not sure for the sign of $c_2$. Thus we consider both two cases, separately. Note that $c_1c_1^* + c_2c_2^* = \xb_i^T\sum\limits_{k=1}^2 \eb_k\eb_k^T \wb^* = \xb_i^T\wb^* >0.$
	
	If $c_2 \ge 0$, then all terms in \eqref{eq: zzz} are nonnegative, thus $h_i(t) \ge 0$ for all $t \ge 0$. Otherwise, if $c_2 < 0$, deforms \eqref{eq: zzz} to
	\begin{align*}
	    h_i(t) &= c_1c_1^* + c_2c_2^* - e^{-\lambda_1 t}c_1c_1^* - e^{-\lambda_2 t}c_2c_2^* \\
	    &= c_1c_1^* + c_2c_2^* - e^{-\lambda_1 t}c_1c_1^*
	    - e^{-\lambda_1 t}c_2c_2^* + e^{-\lambda_1 t}c_2c_2^*- e^{-\lambda_2 t}c_2c_2^* \\
	    &= (c_1c_1^* + c_2c_2^*)(1-e^{-\lambda_1 t}) + c_2c_2^* (e^{-\lambda_1 t} - e^{-\lambda_2 t}) \\
	    &= \xb_i^T\wb^*(1-e^{-\lambda_1 t}) + c_2c_2^* (e^{-\lambda_1 t} - e^{-\lambda_2 t}).
	\end{align*}
	Then all terms are nonnegative, since $c_2 (e^{-\lambda_1 t} - e^{-\lambda_2 t})>0$. Therefore, whether $c_2 \ge 0$ or not, we conclude $h_i(t) \ge 0$ for all $t \ge 0$. Since $\xb_i$ could be any data, we proved that no data is deactivated on the gradient flow, {which implies gradient flow is always in the all-activated partition}. Finally, by Theorem \ref{thm: more support vectors}, the convergent point is the global minimum.
\end{proof}

Now we show norm-increasing property for $d=2$.

\begin{figure}[ht!]
	\centering
	\begin{subfigure}[b]{0.3\textwidth}
        \centering
        \includegraphics[width=\textwidth]{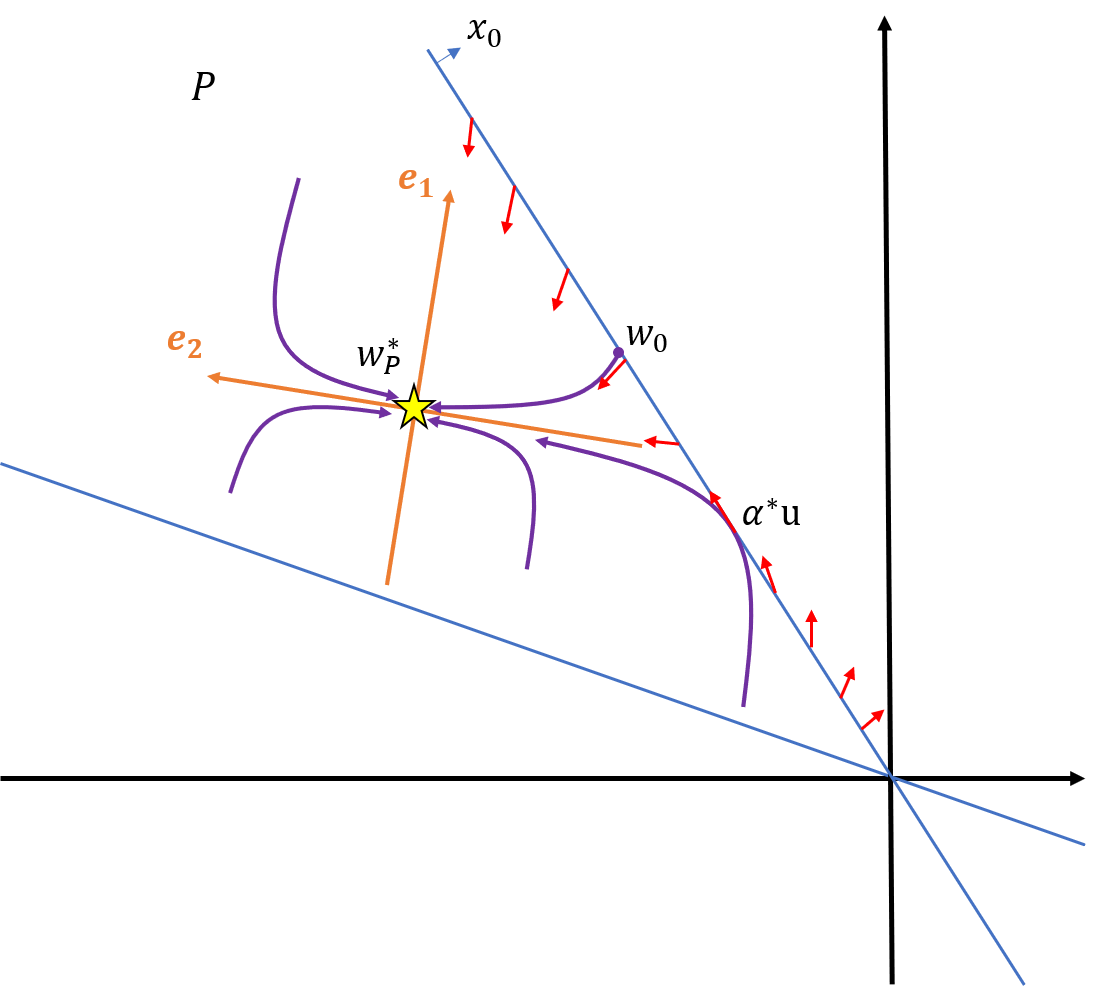}
        \caption{}
    \end{subfigure}
	\hfill
	\begin{subfigure}[b]{0.3\textwidth}
        \centering
        \includegraphics[width=\textwidth]{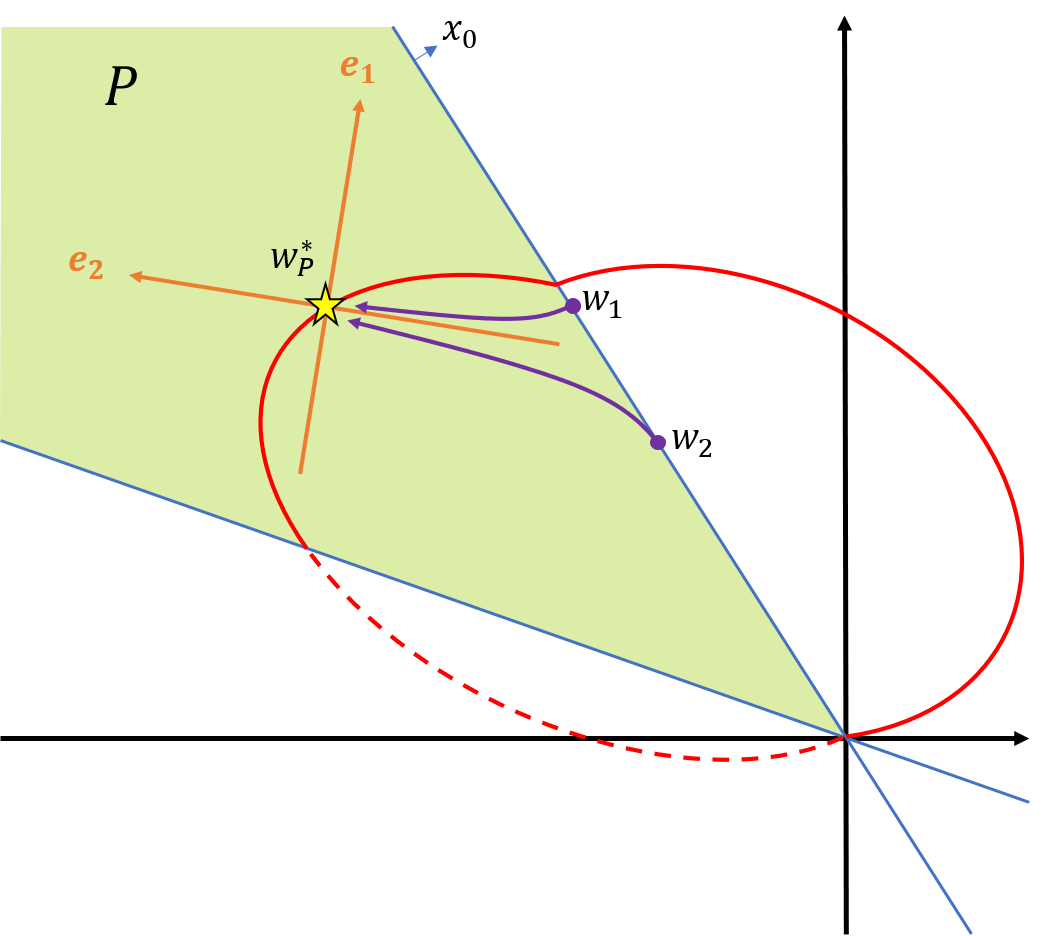}
        \caption{}
    \end{subfigure}
	\hfill
	\begin{subfigure}[b]{0.3\textwidth}
        \centering
        \includegraphics[width=\textwidth]{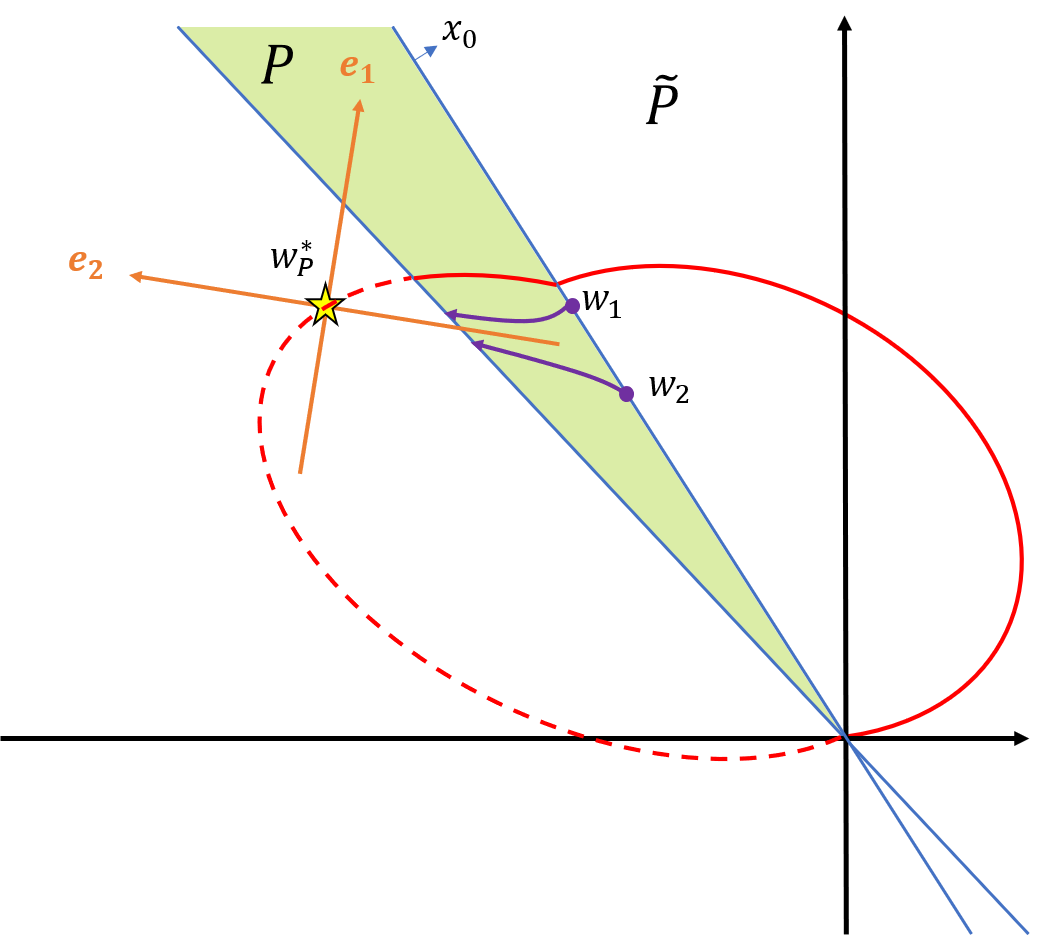}
        \caption{}
    \end{subfigure}
	\caption{Gradient flow dynamics in $\Rd^2$. 
	    In (a), the eigenvectors of $\Hb$ in $P$ are drawn. Since $\lambda_1 > \lambda_2$, $\eb_2$ is the asymptotic line of convergence. Some examples of gradient flows are drawn by the purple curves.
	    In (b), it is the case that the virtual minimum of $P$ is contained in $P$. Since the asymptotic line through the $\wb_P^*$, it converges in the norm increasing subset. 
	    In (c), it shows the alternative case $\wb_P^* \not\in P$. Note that the gradient flow is still in the norm increasing subset during escaping $P$.
	}
	\label{fig: no revisit}
\end{figure}

\begin{lemma}[Norm increasing for $d=2$] \label{lem: norm-increasing 2D}
	Consider a single-neuron ReLU network under \ref{asm: input}, \ref{asm: label}, and \ref{asm: underparameterization}. Recall the function $g(\wb)$ defined in Proposition~\ref{prop: norm-increasing ReLU}. 
	{For a gradient flow $\wb(t)$, suppose its initialization $\wb_0$ is in the all-activated partition and has sufficiently small norm. 
	Then, $g(\wb(t))<0$ for all $t>0$.}
	Furthermore, let $P^*$ be the partition that $\wb(t)$ converges in. Then $P^*$ is the first partition which contains a stationary point, that the gradient flow has {ever met}. 
\end{lemma}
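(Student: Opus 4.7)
Approach. The plan is to combine the hyperrectangle stability of linear flows (Proposition~\ref{prop: hyperrectangle}), the continuity of $g$ across activation boundaries (Proposition~\ref{prop: norm-increasing ReLU}), the cyclic ordering of 2D partitions (Lemma~\ref{lem:ordering}), and the no-revisit property (Lemma~\ref{lem: no revisit}). I would track the gradient flow partition-by-partition and show by induction that in each visited partition $P_k$ it sits inside the hyperrectangle of the current Hessian $\Hb_k$, which is contained in the norm-increasing ellipse, giving $g(\wb(t)) < 0$ throughout.

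Base case. Since $\wb_0 \in P_0$ (the all-activated partition) with sufficiently small norm, and $\qb(\wb_0) = \sum_{\xb_i \sim P_0} y_i \xb_i > \zerob$ by \ref{asm: input} and \ref{asm: label}, the inner product $\wb_0^T \qb(\wb_0) > 0$ dominates the second-order term $\wb_0^T \Hb_0 \wb_0$, so $g(\wb_0) < 0$. Taking $\wb_0$ close enough to $\zerob$ I may also require $\wb_0$ to lie in the hyperrectangle of $\Hb_0$, which has $\zerob$ as a vertex.

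Inductive step. Let $P_0, P_1, \ldots, P_K$ be the finite sequence of visited partitions (finite by Lemma~\ref{lem:ordering} together with Lemma~\ref{lem: no revisit} and the finiteness of partitions), with entry times $t_k$. The inductive hypothesis is that $\wb(t_k)$ lies in the hyperrectangle of $\Hb_k$. Then Proposition~\ref{prop: hyperrectangle}(3)(4) keeps $\wb(t)$ in that hyperrectangle on $(t_k, t_{k+1})$, strictly inside the norm-increasing ellipse, so $g(\wb(t)) < 0$ there; continuity of $g$ across the activation boundary (Proposition~\ref{prop: norm-increasing ReLU}(2)) extends the strict inequality to $t = t_{k+1}$. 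For the inductive step I would follow the two claims in the proof of Theorem~\ref{thm: norm increasing}: $\tilde c_k > 0$ and $\tilde c_k < \tilde c_k^*$ for the new eigencoordinates. In $\Rd^2$ the Hessian update $\Hb_{k+1} = \Hb_k \pm \xb_j \xb_j^T$ is rank-one with eigenvectors admitting a closed form, and the entry constraint $\xb_j^T \wb(t_{k+1}) = 0$ combined with the ordering lemma forces the sign and bounds on the new eigencoordinates, so the general-dimension conditions \ref{B1}--\ref{B4} reduce to a geometric check that holds automatically.

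Second claim and main obstacle. The flow converges in $P^* = P_K$, trapped in the bounded hyperrectangle with $L$ monotonically decreasing, to an equilibrium of the linear dynamics in $P^*$, which must be $\wb_{P^*}^* \in P^*$; hence $P^*$ contains a stationary point. If any earlier $P_j$ also had $\wb_{P_j}^* \in P_j$, the hyperrectangle containment of the inductive step plus the 2D linear dynamics would force convergence inside $P_j$, contradicting $j < K$. The main obstacle is exactly the 2D automatic verification of the inductive step — confirming that the rank-one update of $\Hb_k$ together with the activation-boundary constraint places the new entry eigencoordinates into $[0, \tilde c_k^*]$. This is the 2D analog of the conditions imposed on Theorem~\ref{thm: norm increasing}, and I expect it to follow from a finite but delicate case analysis using explicit 2D eigenvalue formulas and Lemma~\ref{lem:ordering}.
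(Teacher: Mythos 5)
There is a genuine gap at the heart of your inductive step. Your plan is to show that the flow enters each new partition $P_{k+1}$ \emph{inside the hyperrectangle of $\Hb_{k+1}$}, i.e.\ that the new eigencoordinates of the entry point satisfy $0<\tilde c_j<\tilde c_j^*$, and you defer this to ``a finite but delicate case analysis'' that you expect to close automatically in $d=2$. This is precisely the step the paper does \emph{not} attempt, and its own proof indicates why: the entry point can fail to lie in the hyperrectangle of the new partition (the paper's argument explicitly splits into the subcases ``$\wb_0$ inside the hyperrectangle of $P$'' and ``$\wb_0$ not in the hyperrectangle''). Instead of hyperrectangle containment, the paper works with the strictly larger norm-increasing ellipse $\{g<0\}$ of Proposition~\ref{prop: norm-increasing ReLU}: continuity of $g$ across the activation boundary gives $g\le 0$ at the entry point, and then a two-dimensional convexity comparison between the gradient-flow trajectory (whose asymptote is the line through $\wb_P^*$ in the direction of the slow eigenvector $\eb_2$) and the boundary of the ellipse shows the flow cannot cross $\{g=0\}$ before either converging to $\wb_P^*$ (if $\wb_P^*\in P$) or exiting $P$ (if not, using the ordering of Lemma~\ref{lem:ordering}). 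So the conditions \ref{B1}--\ref{B4} of Theorem~\ref{thm: norm increasing} are \emph{not} shown to hold automatically in $\Rd^2$, and your proof cannot be completed along the route you describe without either proving the containment (which appears to be false in general) or replacing it by the ellipse-plus-convexity argument.

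A second, structural problem: you invoke Lemma~\ref{lem: no revisit} to get finiteness of the sequence of visited partitions, but in the paper that lemma is itself a consequence of the norm-increasing property you are trying to prove (its proof cites Lemma~\ref{lem: norm-increasing 2D} directly). As written your argument is circular; the paper avoids this by inducting over the cyclic order of partitions from Lemma~\ref{lem:ordering} rather than over the visit sequence, so that no-revisit is never needed inside this proof. Your base case and your argument for the second claim (that $P^*$ is the first partition containing its virtual minimizer) are in the right spirit, but both rest on the unproven hyperrectangle containment and so inherit the same gap.
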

\begin{proof}
	Since $g(\wb(t))$ is continuous on boundary of partitions(Proposition \ref{prop: norm-increasing ReLU}), we only need to consider interior of each partition. 
	{
	Now we propose two claims : \\
	1. $g(\wb(t))<0$ for all $t>0$. \\
	2. $\wb(t)$ cannot exit a partition that contains a stationary point. More precisely, if $\wb(t)$ enters to a partition $P$ which contains its virtual minimizer $\wb_P^* = \argmin\limits_{\wb \in \Wc} L_P(\wb)$ in $P$, then $\wb(t)$ must converge to $\wb_P^*$.
	}
	
	{We prove these claims by induction on order of partitions.} In the first partition, $g(\wb)<0$ since the norm of gradient flow is increasing. For the second statement, if the all-activated partition contains a stationary point, then the gradient flow converges to this minimum without any deactivation by Lemma \ref{lem: convergence to all activated partition}. Moreover, {$\norm{\wb(t)}$ strictly increases until it converges, by Theorem \ref{thm: norm increasing linear}}.
	
	Now, we suppose $g(\wb)<0$ in a partition $\tilde{P}$, and consider the next partition $P$. Let $\xb_0$ be the data which determines the common boundary of $\tilde{P}$ and $P$ (See Figure \ref{fig: no revisit}). Let $\wb_0$ be the meeting point of gradient flow and the activation boundary of $\xb_0$. {We know $g(\wb_0) \le 0$ from induction hypothesis.} Let $\eb_1$ and $\eb_2$ be the eigenvectors of $\Hb(\wb)$, such that $\lambda_1 > \lambda_2$. Note that a line through $\wb_P^*$ with direction $\eb_2$ is the asymptotic line of gradient flows \cite{zill2020advanced}. Now, we have the following two cases.
	\begin{enumerate}
		\item $\wb_P^*$ is in $P$.\\
		See Figure \ref{fig: no revisit}(b). 
		$\wb_P^*$ is in $P$ and $g(\wb_P^*)=0$ by Proposition \ref{prop: norm-increasing ReLU}. Note that the asymptotic line through $\wb_P^*$ and parallel to $\eb_2$.
		If $\wb_0$ is inside the hyperrectangle of $P$, then $\wb(t)$ converges to $\wb_P^*$ inside the hyperrectangle by Theorem \ref{thm: norm increasing linear}, thus $g(\wb(t)) \le 0$. Otherwise, if $\wb_0$ is not in the hyperrectangle, then convexity of the gradient flow and ellipse $g(\wb)=0$ is opposite as shown in Figure \ref{fig: no revisit}(b).
		Therefore, the gradient flow converges to $\wb_P^*$ keeping $g(\wb) \le 0$. This concludes that $g(\wb)$ is negative until it converges, and the gradient flow converges in the first partition that contains its local minimum.
		
		\item $\wb_P^*$ is not in $P$.\\
		By the induction hypothesis, the gradient flow has never visited a partition that contains its minimum itself. Therefore, $\wb_P^*$ is under the partition $P$ where the order of partitions is defined by Lemma \ref{lem:ordering} (See Figure \ref{fig: no revisit}(c)). 
		Now we apply the same argument about convexity above. If $\wb(t)$ is in the hyperrectangle of $P$, then it escapes $P$ within the hyperrectangle, thus $g(\wb)<0$. If $\wb(t)$ is not in the hyperrectangle, then convexity of the gradient flow and ellipse is opposite as shown in Figure \ref{fig: no revisit}(c), thus the gradient flow escapes $P$ keeping $g(\wb)<0$. Therefore, for all cases, $g(\wb)<0$ until the gradient flow escapes $P$.
	\end{enumerate}
	By induction, we showed $g(\wb)<0$ in every partition, which completes the proof.
\end{proof}

\begin{proof}[Proof of Lemma \ref{lem: no revisit}]
	Consider a deactivation of the gradient flow. Suppose the gradient flow $\wb(t)$ enters to a partition $P$ by deactivating $\xb_0$ at $\wb_0$. Let $\ub:=\frac{\wb_0}{\norm{\wb_0}}$, $\Hb:= \sum\limits_{\xb_i \sim P} \xb_i\xb_i^T$, and $\qb:= \sum\limits_{\xb_i \sim P} y_i\xb_i$. 
	From Proposition \ref{prop: effect of norm}, we know 
	$\xb_0$ is deactivated if and only if $\alpha > \alpha^*$ (See Figure  \ref{fig: no revisit} (a)).
	
	However, by Lemma \ref{lem: norm-increasing 2D}, we know that norm of the gradient flow $\norm{\wb(t)}$ strictly increases. Therefore, the norm has been greater than the point where $\xb_0$ was deactivated.
	It means, there is no re-activation of $\xb_0$. In other words, a gradient flow initialized with an infinitesimally small norm never revisits a partition that already been traversed. 
\end{proof}


\begin{proof}[Proof of Theorem~\ref{thm: 2D input}]
	By Lemma \ref{lem: all activated}, the gradient flow enters to the all-activated partition. From there, by No revisit lemma (Lemma \ref{lem: no revisit}), there is no re-activation. In other words, the gradient flow only deactivates data until it converges. Moreover, by Lemma \ref{lem: norm-increasing 2D}, it converges to a minimum in the first partition such that it contains.
	Therefore, it has maximum number of support vectors, which is the global minimum by Theorem \ref{thm: more support vectors}.
\end{proof}

However, we show that this global convergence does not hold in high dimension ($d>2$) in general, in Example \ref{exmp: deactivation}. The key point is that No revisit lemma (Lemma \ref{lem: no revisit}) does not hold for $d>2$. The following lemma and theorem show that for single-neuron linear networks with $d>2$, a gradient flow may reactivate at most $(d-1)$ times.

\begin{lemma}\label{lem: number of zeros of exponential}
	For some constants $a_i, b_i$, and $c$, the equation $\sum\limits_{i=1}^n a_i e^{b_it}=c$ has at most $n$ zeros.
\end{lemma}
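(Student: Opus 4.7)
The plan is to deduce the lemma from a classical fact: any nontrivial linear combination of $m$ real exponentials with pairwise distinct exponents has at most $m-1$ real zeros. The equation $\sum_{i=1}^n a_i e^{b_i t} - c = 0$ is, after interpreting the constant $c$ as $c\,e^{0\cdot t}$, such a combination involving at most $n+1$ exponentials, so it will have at most $n$ real roots. First I would reduce to the nondegenerate case by merging any terms with identical exponents and dropping those with vanishing coefficients; this can only decrease $n$, which strengthens the bound, so without loss of generality the $b_i$ are pairwise distinct and each $a_i\neq 0$.

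Next I would prove the classical fact by induction on $m$. The base $m=1$ is immediate: $a_1 e^{b_1 t}$ has no zero when $a_1\neq 0$. For the inductive step, suppose toward contradiction that $f(t)=\sum_{i=1}^m a_i e^{b_i t}$ with $b_1<\cdots<b_m$ and nonzero $a_i$ admits at least $m$ real zeros. Let $g(t) := e^{-b_1 t} f(t) = a_1 + \sum_{i=2}^m a_i e^{(b_i-b_1)t}$. Because $e^{-b_1 t}>0$, the function $g$ shares its zero set with $f$, giving $g$ at least $m$ zeros; Rolle's theorem then forces $g'(t)=\sum_{i=2}^m a_i(b_i-b_1)\,e^{(b_i-b_1)t}$ to have at least $m-1$ zeros. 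But $g'$ is a nontrivial combination of only $m-1$ distinct positive exponentials with nonzero coefficients $a_i(b_i-b_1)$, so the inductive hypothesis bounds its zero count by $m-2$, a contradiction.

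Finally I would apply the classical fact to the lemma. Rewriting the equation as $\sum_{i=1}^n a_i e^{b_i t} - c\,e^{0\cdot t}=0$: if no $b_i$ equals $0$ we obtain $n+1$ exponentials with distinct exponents, giving at most $n$ zeros; if some $b_j=0$, merging the two constant terms produces a combination of at most $n$ exponentials and hence at most $n-1$ zeros; if the combination is identically zero then the coefficients are all zero, so the stated equation is either vacuous or has no solutions at all. In every case the number of solutions is bounded by $n$. The only real obstacle here is the Rolle-step in the inductive proof; the remaining manipulations are routine bookkeeping of the degenerate edge cases.
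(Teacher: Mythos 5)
Your proof is correct and follows essentially the same route as the paper's: both establish by induction, via Rolle's theorem applied after normalizing away one exponential and differentiating, that a nontrivial combination of $m$ distinct exponentials has at most $m-1$ real zeros, and both then absorb the constant $c$ as an extra exponential term $c\,e^{0\cdot t}$ to get the bound of $n$ zeros. Your explicit handling of repeated exponents and vanishing coefficients is a small tidiness improvement over the paper's writeup, but the underlying argument is the same.
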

\begin{proof}
	We prove a weaken version of this Lemma first, which is $c=0$ case. \\
	{\bf Claim} : $\sum\limits_{i=1}^n a_i e^{b_it}=0$ has at most $n-1$ solution.\\
	{\it Proof of Claim} : We use mathematical induction on $n$. For $n=1$, equation $a_1e^{b_1t}=0$ has no solution and for $n=2$, it has at most one solution $t=\frac{1}{b_1-b_2}\log\frac{-a_2}{a_1}$ if $\frac{-a_2}{a_1}>0$. Now suppose it holds for $n$ and consider $n+1$ step. WLOG assume $a_{n+1} \neq 0$ and deform the equation $ \sum\limits_{i=1}^{n+1} a_ie^{b_it}=0 $ to
	\begin{align} \label{eq: exponential equation}
		\sum\limits_{i=1}^{n+1} a_ie^{b_it} = a_{n+1}e^{b_{n+1}t}(1+\sum\limits_{i=1}^{n} \frac{a_i}{a_{n+1}} e^{(b_i-b_{n+1})t})=0.
	\end{align}
	Now let $g(t) = 1+\sum\limits_{i=1}^{n} \frac{a_i}{a_{n+1}} e^{(b_i-b_{n+1})t}$. Then, $$g'(t)=\sum\limits_{i=1}^{n} \frac{a_i(b_i-b_{n+1})}{a_{n+1}} e^{(b_i-b_{n+1})t}=0$$
	has at most $n-1$ solutions, by the induction. Therefore, \eqref{eq: exponential equation} has at most $n$ solutions, by Rolle's Theorem. This completes the proof of the claim. $\hfill \qedsymbol$
	
	Now we return to prove the original lemma. For given $\sum\limits_{i=1}^n a_i e^{b_it}=c$, it has a solution if and only if $\sum\limits_{i=1}^n a_i e^{(b_i+b_{n+1})t}-ce^{b_{n+1}t}=0$, which has at most $n+1$ zeros by the claim. 
	
	Finally, we provide one example that $\sum\limits_{i=1}^n a_i e^{b_it}=c$ has exactly $n$ zeros. Take $c = 1$, then there exists $a_i$'s such that $t=0,1,2,\cdots,n-1$ are zeros for some distinct $b_i$'s because 	
	$$ 
	\begin{pmatrix}
		1 & 1& \cdots & 1 \\
		e^{b_1} & e^{b_2} & \cdots & e^{b_n} \\
		\vdots & \vdots & \ddots & \vdots \\
		e^{(n-1)b_1} & e^{(n-1)b_2} & \cdots & e^{(n-1)b_n}
	\end{pmatrix} \begin{pmatrix} a_1 \\ a_2 \\ \vdots \\ a_n	\end{pmatrix} 
	= \begin{pmatrix} 1 \\ 1 \\ \vdots \\ 1 \end{pmatrix}
	$$
	above matrix is called a Vandermonde matrix, which is known to be invertible \cite{anton2003contemporary}.
\end{proof}

\begin{theorem}[Reactivation of linear networks for $d>2$]
	Consider single-neuron linear networks under \ref{asm: input}, \ref{asm: label}, and \ref{asm: underparameterization}. There exists dataset $\{(\xb_i, y_i)\}_{i=1}^n$ such that for a hyperplane $V$, a gradient flow crosses $V$ at most $d$ times.
\end{theorem}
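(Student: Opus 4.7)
The plan is to read off the crossings of $V$ directly from the closed-form solution of the linear gradient flow and then invoke Lemma~\ref{lem: number of zeros of exponential}. Recall from \eqref{eq: linear gradient flow solution} that
$$
\wb(t) \;=\; \sum_{k=1}^{d}\bigl(c_k^* - e^{-\lambda_k t}(c_k^* - c_k^0)\bigr)\eb_k,
$$
where $\{(\lambda_k,\eb_k)\}$ are eigenpairs of $\Hb$, $c_k^* = \eb_k^T\wb^*$, and $c_k^0 = \eb_k^T\wb_0$. For any affine hyperplane $V = \{\wb : \ab^T\wb = \gamma\}$, the condition $\wb(t) \in V$ becomes
$$
\sum_{k=1}^{d} (\ab^T\eb_k)(c_k^* - c_k^0)\, e^{-\lambda_k t} \;=\; \ab^T\wb^* - \gamma,
$$
an equation of exactly the form $\sum_{k=1}^{d} a_k e^{b_k t} = c$ in the scalar unknown $t$. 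Applying Lemma~\ref{lem: number of zeros of exponential} with $n=d$ gives the universal upper bound of $d$ crossings, independent of the dataset.

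For the existence half of the statement, the plan is to reuse the Vandermonde construction that closes the proof of Lemma~\ref{lem: number of zeros of exponential}. First fix $d$ distinct decay rates $\lambda_1 > \cdots > \lambda_d > 0$ and $d$ target crossing times $0 < t_1 < \cdots < t_d$, and solve the Vandermonde system for coefficients $a_k$ whose associated exponential sum vanishes exactly at the $t_i$'s. To translate this back to a dataset, I would diagonalize $\Hb = \Qb\,\diag(\lambda_1,\ldots,\lambda_d)\,\Qb^T$, picking $\Qb$ to be an orthogonal matrix whose columns lie in the (strict) positive orthant so that a Cholesky-type factorization $\Hb = \Xb\Xb^T$ with $\xb_i \ge \zerob$ exists (e.g. start from $\Qb$ close to the identity up to a small rotation). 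Then choose $\wb^*$ in the positive orthant and set $y_i := \xb_i^T\wb^*$, which automatically secures \ref{asm: input}, \ref{asm: label}, and \ref{asm: underparameterization}. Finally, tune $\wb_0$, the normal $\ab$, and the offset $\gamma$ so that the resulting coefficients $(\ab^T\eb_k)(c_k^* - c_k^0)$ coincide with the $a_k$ delivered by the Vandermonde step.

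The main obstacle is the back-translation step: the abstract parameters are chosen freely over $\Rd$, but the dataset must satisfy positivity and rank constraints. The trick is a parameter count. The unknowns $\wb_0$ and $\ab$ contribute $2d$ degrees of freedom (with $\gamma$ an additional one) against only $d$ linear constraints of the form $(\ab^T\eb_k)(c_k^* - c_k^0) = a_k$, so there is ample slack to enforce strict inequalities by a small perturbation around a symmetric reference point (e.g. $\wb_0$ near the origin, $\ab$ close to one of the $\eb_k$'s). The delicate bookkeeping is verifying that the perturbation preserves both the Vandermonde coefficient matching and the positivity of $\xb_i$ and $y_i$ simultaneously; once that is arranged, the gradient flow crosses $V$ at each $t_i$ and no other time, realizing the bound of $d$ crossings and in particular demonstrating genuine reactivation for $d>2$.
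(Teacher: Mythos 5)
Your first half is exactly the paper's argument: substitute the closed-form solution of the linear gradient flow into the hyperplane equation to obtain an exponential sum $\sum_k a_k e^{b_k t}=c$, then invoke Lemma~\ref{lem: number of zeros of exponential} to bound the number of zeros by $d$; this already proves the stated ``at most $d$ times'' claim, and your version is if anything slightly cleaner (you keep the correct signs $e^{-\lambda_k t}$ and avoid the WLOG normalization). The second half of your proposal --- back-translating the Vandermonde construction into an actual dataset satisfying \ref{asm: input}--\ref{asm: underparameterization} to show the bound is attained --- goes beyond what the paper proves here (the paper only exhibits sharpness at the level of the abstract exponential equation inside Lemma~\ref{lem: number of zeros of exponential}, and defers a concrete reactivation instance to Example~\ref{exmp: reactivation}); your sketch of that extra step is plausible but the positivity bookkeeping is not actually carried out, which is acceptable since the literal statement does not require it.
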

\begin{proof}
		Recall \eqref{eq: linear gradient flow solution eigenvector version}. By translation and rotation, WLOG we can assume $\wb^*$ to be origin and $\Hb= \diag(e^{\lambda_1}, \cdots, e^{\lambda_d})$. Therefore, the gradient flow is given by
		$$ \wb(t) = \diag(e^{\lambda_1t}, \cdots, e^{\lambda_dt})\wb_0 = \begin{bmatrix}
			    e^{\lambda_1t} w_{01} \\ e^{\lambda_2t} w_{02} \\
			    \vdots \\ e^{\lambda_dt} w_{0d}
			\end{bmatrix}  $$
		Now suppose that we have a hyperplane $V$ determined by $\{\wb ~|~ \wb^T\vb = c\}$. Then the number of intersection points of the gradient flow and this hyperplane is given by the number of zeros of the equation
		\begin{align} \label{eq: exponential}
			\wb(t)^T\vb-c = \sum_{k=1}^d e^{\lambda_kt} w_{0k}v_i -c = 0
		\end{align}
		Finally, by Lemma \ref{lem: number of zeros of exponential}, equation (\ref{eq: exponential}) has at most $d$ zeros. Therefore, a gradient flow of single-neuron linear networks can across the hyperplane $V$ at most $d$ times. 
	\end{proof}

We provide Example \ref{exmp: reactivation} in Section \ref{sec: experiments} which shows an reactivation in a single-neuron ReLU network, for $d=3$ case.

\section{Detail settings of the experiments.} \label{app: experiments}
We use PyTorch library with GPU Gigabyte GeForce GTX 1080 Ti to implement all experiments.
The gradient flow is implemented by a gradient descent with  small learning rate.

\subsection{Detail settings of Example \ref{exmp: initialization with infinitesimally small norm}}
This toy example is a single-neuron ReLU network with $n=5$ and $d=2$ setting. The training dataset used in this example is
\begin{align*}
    [ \xb_1 ~\xb_2&~\xb_3~\xb_4~\xb_5 ] = \\
    &\begin{bmatrix}
        0.8858 & 0.4338 & 0.6739 & 0.0221 & 0.2322 \\
        0.0244 & 0.8852 & 0.0399 & 0.4778 & 0.8717
    \end{bmatrix}, \\
    [ y_1~y_2&~y_3~y_4~y_5] =
    [ 0.6111 ~ 0.9397 ~ 1.8694 ~ 2.7104 ~ 1.3089 ].
\end{align*}

Three blue gradient flows in Figure~\ref{fig: initialized at zero is important} are initialized at 
$$ \wb_0 = \begin{bmatrix} 0.0001 \\ 0.0001 \end{bmatrix}, \begin{bmatrix} 0\\ 8 \end{bmatrix}, \text{ and }
\begin{bmatrix} 0 \\ 45 \end{bmatrix}.$$

Seven initialization points near origin are given below. Note that they satisfy $\norm{\wb_0}<0.3$, which is depicted by the red dashed circle in Figure \ref{fig: initialized at zero is important}.

$$ \wb_0 = \begin{bmatrix} -0.05\\ 0.15 \end{bmatrix},
\begin{bmatrix} 0.1\\ -0.1 \end{bmatrix},
\begin{bmatrix} -0.15\\ 0.15 \end{bmatrix},
\begin{bmatrix} -0.25\\ 0.02 \end{bmatrix},
\begin{bmatrix} 0.01\\ -0.1 \end{bmatrix},
\begin{bmatrix} 0.1\\ -0.2 \end{bmatrix},
\text{ and }
\begin{bmatrix} 0.17 \\ 0.1 \end{bmatrix}.$$

 Learning rate is set to 0.005, and total number of iterations is set to 200K.

\subsection{Detail settings of Example \ref{exmp: deactivation}} \label{subsec: exmp deactivation}
This toy example considers single-neuron linear and ReLU networks with $n=d=3$. The training dataset used in this example is
\begin{align*}
    [ \xb_1 ~\xb_2 ~\xb_3 ] &= \begin{bmatrix}
		1 & 1 & 2 \\
		0 & 2 & 0 \\
		2 & 0 & 0
	\end{bmatrix},\\
	[ y_1 ~ y_2 ~ y_3 ] &= [ 0.05 ~ 6 ~ 0.5 ].
\end{align*}
 As mentioned in Section \ref{sec: experiments}, we define $h_i(\wb):=\wb^T\xb_i$ for each data $\xb_i$. 
We used the initial point sampled from 
$$ \wb_0 = 0.0001 \times U([0,1]^3), $$
where $U([0,1])$ means uniform distribution in $[0,1]$.
Both linear and ReLU networks share the initial point, where learning rate and the total number of iterations are set to 0.005 and 10,000, respectively. 

The value of $h_i(\wb)$ functions are plotted in Figure~\ref{fig: deactivation}. Note that $\xb_1$ is deactivated on the gradient flow of the ReLU network ($h_1(\wb)<0$), and gradient flows coincide until that time. Since there is no reactivation again, the gradient flow of the ReLU network converges to a local minimum(See Figure~\ref{fig: deactivation}(b)).

\subsection{Detail settings of Example \ref{exmp: reactivation}} \label{subsec: exmp reactivation}
This toy example considers single-neuron linear and ReLU networks with $n=4$ and $d=3$. The training dataset used in this example is
\begin{align*}
    [ \xb_1 ~\xb_2 ~\xb_3~\xb_4 ] &= \begin{bmatrix}
		1 & 1 & 1 & 0 \\
		0 & 2 & 0 & 1 \\
		1 & 1 & 2 & 0
	\end{bmatrix},\\
	[ y_1 ~ y_2 ~ y_3 ~y_4 ] &= [ 0.1 ~~ 0.2 ~~ 4 ~~ 0.1 ].
\end{align*}
 As mentioned in Section \ref{sec: experiments}, we define $h_i(\wb):=\wb^T\xb_i$ for each data $\xb_i$. 
We used the initial point sampled from 
$$ \wb_0 = 0.0001 \times U([0,1])^3, $$
and both linear and ReLU networks share the initial point.
For both networks, learning rate is set to 0.005 and the total number of iterations is set to 20K. 

$h_i(\wb(t))$ is plotted in Figure~\ref{fig: reactivation}. Note that $\xb_4$ is deactivated on the gradient flow of ReLU network ($h_4(\wb)<0$) at the beginning, and reactivated after. Since it converges in the all-activated partition, gradient flows of linear and ReLU networks converge to the same point, which is the global minimum(See Figure~\ref{fig: reactivation}(b)).

\begin{figure*}[ht!]
	\centering
	\begin{subfigure}[b]{0.4\textwidth}
        \centering
        \includegraphics[width=\textwidth]{deactivation_convergence.png}
        \caption{}
    \end{subfigure}
	\hfill
	\begin{subfigure}[b]{0.5\textwidth}
        \centering
        \includegraphics[width=\textwidth]{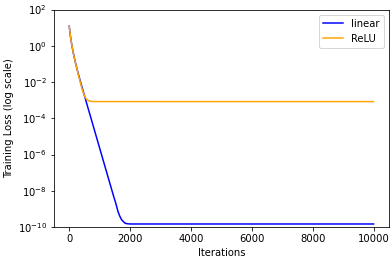}
        \caption{}
    \end{subfigure}
	\\
	\begin{subfigure}[b]{0.3\textwidth}
        \centering
        \includegraphics[width=\textwidth]{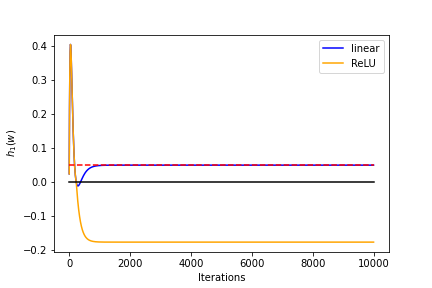}
        \caption{}
    \end{subfigure}
	\hfill
	\begin{subfigure}[b]{0.3\textwidth}
        \centering
        \includegraphics[width=\textwidth]{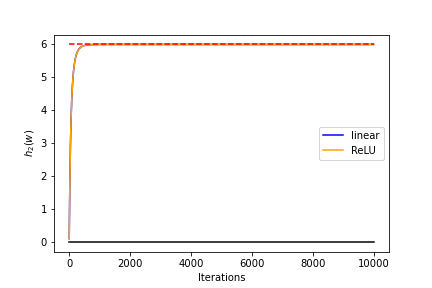}
        \caption{}
    \end{subfigure}
	\hfill
	\begin{subfigure}[b]{0.3\textwidth}
        \centering
        \includegraphics[width=\textwidth]{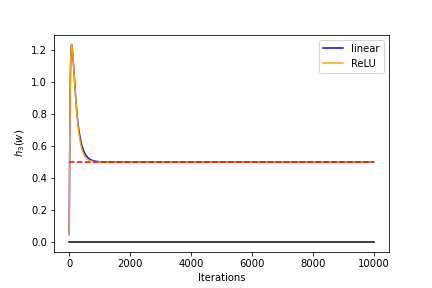}
        \caption{}
    \end{subfigure}
	\caption{Deactivation in ReLU network during training single-neuron linear and ReLU networks on the same dataset.
		In (a), gradient flows of linear and ReLU networks are described by blue and orange curves.
		In (b), training loss curves of two gradient flows are plotted.
		In (c), (d) and (e), graphs of $h_i(\wb)=\wb^T\xb_i$ for linear and ReLU networks are plotted. Asymptotic lines are drawn by red dashed lines.
		Observe that $\xb_1$ is deactivated at near $500$ iterations, and the ReLU network never reactivates it($h_1(\wb_r^*)<0$) while the linear network reactivates it again ($h_1(\wb_l^*)>0$).}
	\label{fig: deactivation}
\end{figure*}
\begin{figure*}[ht!]
	\centering
	\begin{subfigure}[b]{0.4\textwidth}
        \centering
        \includegraphics[width=\textwidth]{revisit_convergence.png}
        \caption{}
    \end{subfigure}
	\hfill
	\begin{subfigure}[b]{0.5\textwidth}
        \centering
        \includegraphics[width=\textwidth]{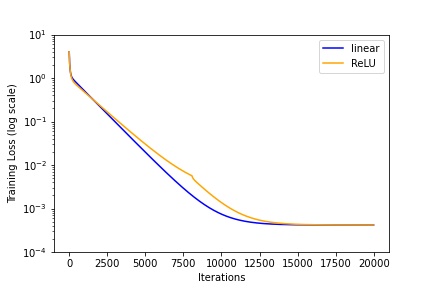}
        \caption{}
    \end{subfigure}
	\\
	\begin{subfigure}[b]{0.24\textwidth}
        \centering
        \includegraphics[width=\textwidth]{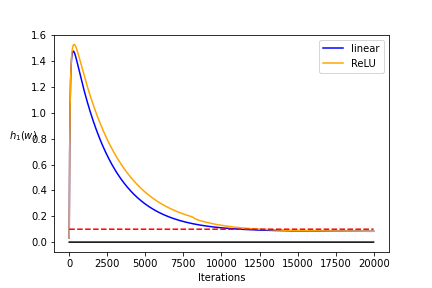}
        \caption{}
    \end{subfigure}
	\hfill
	\begin{subfigure}[b]{0.24\textwidth}
        \centering
        \includegraphics[width=\textwidth]{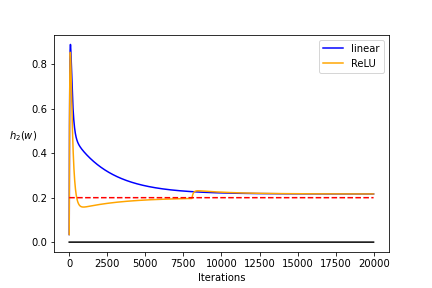}
        \caption{}
    \end{subfigure}
	\hfill
	\begin{subfigure}[b]{0.24\textwidth}
        \centering
        \includegraphics[width=\textwidth]{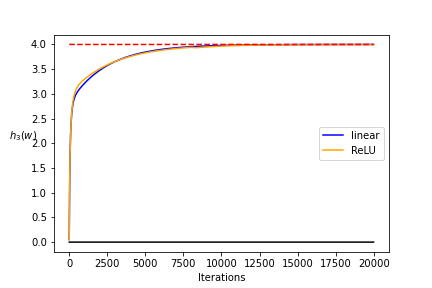}
        \caption{}
    \end{subfigure}
	\hfill
	\begin{subfigure}[b]{0.24\textwidth}
        \centering
        \includegraphics[width=\textwidth]{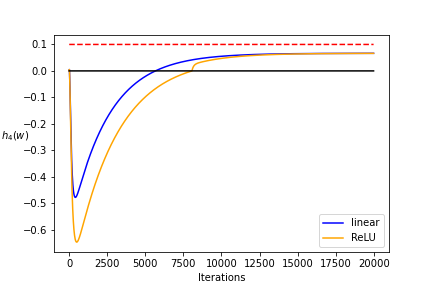}
        \caption{}
    \end{subfigure}
	\caption{Reactivation in ReLU networks.
		In (a), gradient flows of linear and ReLU networks are described by blue and orange curves.
		In (b), training loss curves of two gradient flows are plotted.
		In (c), (d), (e), and (f), graphs $h_i(\wb)=\wb^T\xb_i$ of linear and ReLU networks are plotted. Asymptotic lines $h_i(\wb)=y_i$ are drawn by red dashed lines. Observe that $\xb_4$ is deactivated short time after initialized, but the ReLU network reactivates it at near $8500$ iterations.
	}
	\label{fig: reactivation}
\end{figure*}

\end{document}